\newcommand{\av}{\boldsymbol{a}}
\newcommand{\bv}{\boldsymbol{b}}
\newcommand{\xv}{\boldsymbol{x}}
\newcommand{\gammav}{{\boldsymbol{\gamma}}}
\newcommand{\sigmav}[0]{{\boldsymbol{\sigma}} }
\newcommand{\ie}{{\em i.e.}}
\newcommand{\eg}{{\em e.g.}}
\newcommand{\bihan}[1]{{\color{black}#1}}            
\newtheorem{lemma}{Lemma}
\newtheorem{theorem}{Theorem}
\begin{document}

\title{From Rank Estimation to Rank Approximation: Rank Residual Constraint for Image Restoration}

\author{Zhiyuan~Zha,~\IEEEmembership{Member,~IEEE}, Xin~Yuan,~\IEEEmembership{Senior Member,~IEEE,} Bihan~Wen,~\IEEEmembership{Member,~IEEE}, Jiantao~Zhou,~\IEEEmembership{Member,~IEEE}, Jiachao~Zhang,~\IEEEmembership{Member,~IEEE} and Ce~Zhu,~\IEEEmembership{Fellow,~IEEE}

\IEEEcompsocitemizethanks{
\IEEEcompsocthanksitem This work was supported by the NSFC (61571102), the applied research programs of science and technology., Sichuan Province (No. 2018JY0035), the Ministry of Education, Republic of Singapore, under the Start-up Grant and the Macau Science and Technology Development Fund, Macau SAR (File no. SKL-IOTSC-2018-2020, 077/2018/A2, 022/2017/A1).
\IEEEcompsocthanksitem Z. Zha and C. Zhu are with the School of Information and Communication Engineering,
University of Electronic Science and Technology of China, Chengdu, 611731, China.  E-mail: zhazhiyuan.mmd@gmail.com, eczhu@uestc.edu.cn. \emph{Corresponding Author: Zhiyuan Zha.}
\IEEEcompsocthanksitem X. Yuan is with Nokia Bell Labs, 600 Mountain Avenue, Murray Hill, NJ, 07974, USA. E-mail: xyuan@bell-labs.com.
\IEEEcompsocthanksitem B. Wen is with School of Electrical \& Electronic Engineering, Nanyang Technological University, Singapore 639798. E-mail: bihan.wen@ntu.edu.sg.
\IEEEcompsocthanksitem J. Zhou is with State Key Laboratory of Internet of Things for Smart City and Department of Computer and Information Science,  University of Macau, Macau 999078, China. E-mail: jtzhou@umac.mo.
\IEEEcompsocthanksitem J. Zhang is with Artificial Intelligence Institute of Industrial Technology,  Nanjing Institute of Technology, Nanjing 211167, China. E-mail: zhangjc07@foxmail.com. 
}
}

\markboth{IEEE Transaction on Image Processing,~2020}%
{Shell \MakeLowercase{\textit{et al.}}: Bare Demo of IEEEtran.cls for IEEE Journals}

\maketitle

\begin{abstract}
In this paper, we propose a novel approach to the rank minimization problem, termed rank residual constraint (RRC) model. Different from existing low-rank based approaches, such as the well-known nuclear norm minimization (NNM) and the weighted nuclear norm minimization (WNNM), which \emph{estimate} the underlying low-rank matrix directly from the corrupted observations, we progressively \emph{approximate} the underlying low-rank matrix via \emph{minimizing the rank residual}.  Through integrating the image nonlocal self-similarity (NSS) prior with the proposed RRC model, we apply it to image restoration tasks, including image denoising and image compression artifacts reduction. Towards this end, we first obtain a good reference of the original image groups by using the image NSS prior, and then the rank residual of the image groups between this reference and the degraded image is minimized to achieve a better estimate to the desired image. In this manner, both the reference and the estimated image are updated gradually and jointly in each iteration. Based on the group-based sparse representation model, we further provide an analytical investigation on the feasibility of the proposed RRC model. Experimental results demonstrate that the proposed RRC method outperforms many state-of-the-art schemes in both the objective and perceptual quality.
\end{abstract}

\begin{IEEEkeywords}
Low-rank, rank residual constraint, nuclear norm minimization, nonlocal self-similarity, group-based sparse representation, image restoration.
\end{IEEEkeywords}

\IEEEpeerreviewmaketitle

\section{Introduction}
\IEEEPARstart{L}{ow}-rank matrix estimation has attracted increasing attention due to its wide applications. In particular, it has been successfully applied in various machine learning and computer vision tasks \cite{1,2,3,4,5,6,7,8,9,10,11,12,13,14,15,16,17,18,19,20,21,22,23,24}. For instance, the Netflix customer data matrix is treated as a low-rank one for the reason that the customers' choices are largely dependent on a few common factors \cite{1,2}. The foreground and background in a video are also modeled as being sparse and low-rank, respectively \cite{3,4,5}. As the matrix formed by nonlocal similar patches in a natural image is of low-rank, various low-rank models for image completion problems have been proposed, such as collaborative filtering \cite{6}, image alignment \cite{4}, image/video denoising \cite{5,7,16}, shadow removal \cite{8} and reconstruction of occluded/corrupted face images \cite{1,9,24}.

One typical low-rank matrix estimation method is the low-rank matrix factorization (LRMF) \cite{9,10,11,12,13,14}, which factorizes an observed matrix $\textbf{\emph{Y}}$ into a product of two matrices that can be used to reconstruct the desired matrix with certain fidelity terms. A series of LRMF methods have been developed, such as the classical singular value decomposition (SVD) under $\ell_2$-norm \cite{10}, robust LRMF methods under $\ell_1$-norm \cite{11,12} and other probabilistic methods \cite{13,14}.

Another parallel research is the rank minimization \cite{15,16,17,18,19,20,21,22,23,24}, with the nuclear norm minimization (NNM) \cite{15} being the most representative one. The nuclear norm of a matrix $\textbf{\emph{X}}$, denoted by $\left\|\textbf{\emph{X}}\right\|_*$, is the summation of its singular values, \ie, $\left\|\textbf{\emph{X}}\right\|_*=\sum\nolimits_i{\boldsymbol\sigma_i}(\textbf{\emph{X}})$, with ${\boldsymbol\sigma_i}(\textbf{\emph{X}})$ representing the $i^{th}$ singular value of $\textbf{\emph{X}}$. NNM aims to recover the underlying low-rank matrix $\textbf{\emph{X}}$ from its degraded observation matrix $\textbf{\emph{Y}}$, while minimizing $\left\|\textbf{\emph{X}}\right\|_*$. However, NNM tends to over-shrink the rank components, and therefore limits its capability and flexibility. 
To enforce the rank minimization efficiently, inspired by the success of $\ell_p$ ($0<p<1$) sparse optimization, Schatten $p$-norm is proposed \cite{20,21,22}, which is defined as the $\ell_p$-norm ($0<p<1$) of the singular values. Compared with the traditional NNM, Schatten $p$-norm minimization (SNM) achieves a more accurate recovery of the matrix rank with only one requirement of a \emph{weakly restricted isometry property} \cite{22}. Nonetheless, similar to NNM, most SNM-based methods also shrink all singular values equally, which may be infeasible in executing many practical problems \cite{23}. To further improve the flexibility of NNM, most recently, Gu $\emph{et al}.$ \cite{16,5} proposed a weighted nuclear norm minimization (WNNM) model, which heuristically sets the weights being inverse to the singular values. Compared with NNM, WNNM assigns different weights to different singular values such that the matrix rank estimation becomes more rigid. Similar case also exists in the truncated nuclear norm minimization \cite{18} and the partial sum minimization \cite{19}.

\begin{figure*}[!t]
	\vspace{-2mm}
	\centering
	{\includegraphics[width= 1\textwidth]{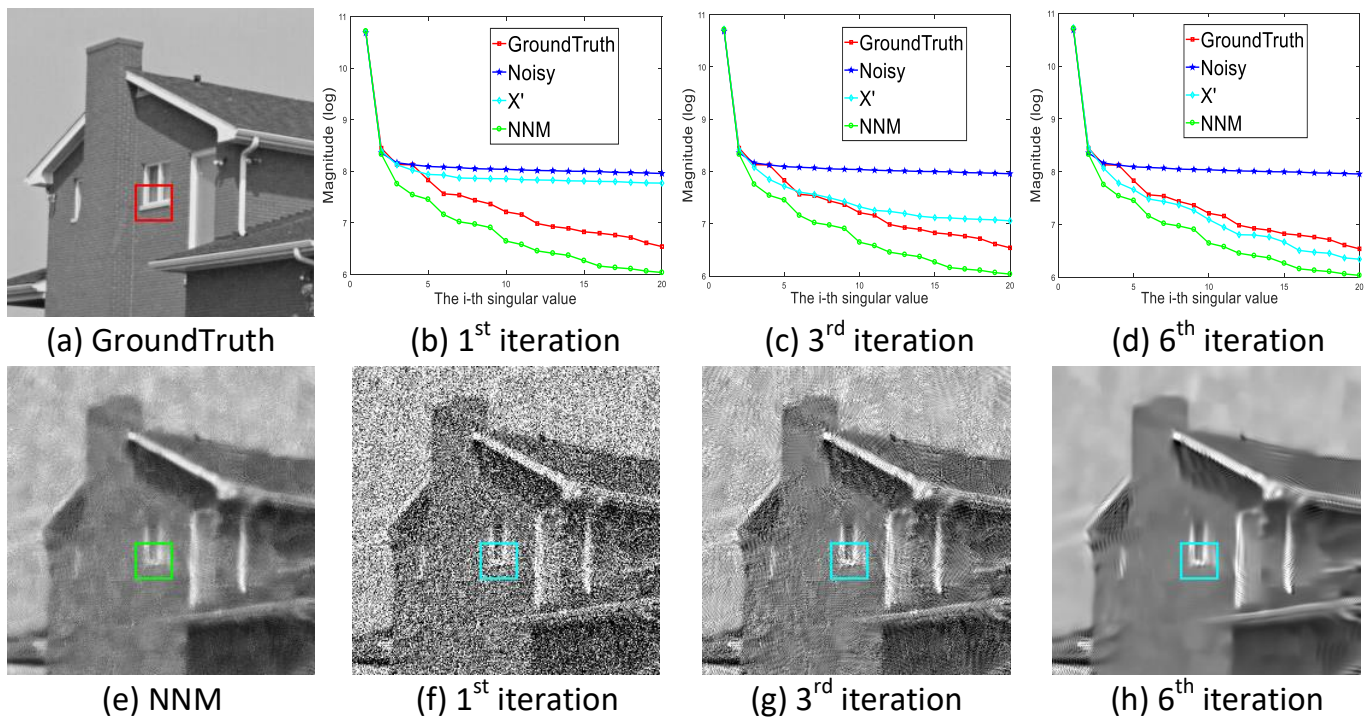}}
\vspace{-6mm}
	\caption{ Illustration of the proposed image denoising method via the rank residual constraint (RRC). The image {\em House} is corrupted by zero-mean Gaussian noise with standard deviation $\sigmav_n$ = 100. (b-d) The singular values of the image group (with reference in the cyan box at the bottom row) from the ground truth image (red), noisy image (blue), image recovered by NNM (green) and the proposed reference matrix $\textbf{\emph{X}}'$ (cyan) at the $1^{st}$, $3^{rd}$ and $6^{th}$ iterations of our algorithm. (f-h) Reconstructed images at the $1^{st}$, $3^{rd}$ and $6^{th}$ iterations using the proposed RRC model. It can be seen that the singular values of the reference matrix $\textbf{\emph{X}}'$ progressively approach the ground truth and the reconstructed image is getting close to the original image.}
	\label{fig:1}
	\vspace{-2mm}
\end{figure*}

However, one common practice in the aforementioned low-rank models is only to estimate the low-rank matrix directly from the corrupted observations, which may lead to a defective result in real applications, such as image inverse problems. In this paper, we propose a novel approach, dubbed rank residual constraint (RRC) model, for the rank minimization problem. Different from existing low-rank based methods, such as the well-known NNM and WNNM, we progressively \emph{approximate}  the underlying low-rank matrix via \emph{minimizing the rank residual}. By integrating the image nonlocal self-similarity (NSS) prior \cite{5,16} with the proposed RRC model, we apply it to image restoration tasks, including image denoising and image compression artifacts reduction. In a nutshell, given a corrupted image ${\textbf {\emph y}}$, in each iteration, we construct a reference low-rank matrix ${\textbf {\emph X}}'$ (for each image  group) by exploiting the image NSS prior, and {\em approximate} our recovered matrix $\hat{\textbf{\emph{X}}}$ to  this reference matrix ${\textbf {\emph X}}'$  via the proposed RRC model. 
Fig.~\ref{fig:1} depicts that the reconstructed image from our proposed algorithm can progressively \emph{approximate} the ground truth, by taking the widely used image {\em House} as an example, which is corrupted by zero-mean Gaussian noise with standard deviation $\sigmav_n$ = 100. It can be observed that the singular values of the reference matrix ${\textbf {\emph X}}'$ approaches the singular values  of the ground truth progressively and so does the recovered image (Fig.~\ref{fig:1} (f-h)).

It is worth noting that the significant difference between the proposed RRC and the existing low-rank based methods (\eg, NNM and WNNM) is that we analyze the rank minimization problem from a different perspective.  To be concrete, traditional low-rank based methods estimated the low-rank matrix directly from the corrupted observations. By contrast, in our RRC model, we analyze the rank minimization problem from the point of approximation theory \cite{71}, namely,  \emph{minimizing the rank residual}; the singular values of the recovered matrix  progressively approaches the singular values  of the reference matrix.   Note that the reference matrix and the recovered matrix in our RRC model are both updated gradually and jointly in each iteration. Moreover, we provide an analytical investigation on the feasibility of the proposed RRC model from the perspective of group-based sparse representation \cite{28,29,72,27}.

The rest of this paper is organized as follows. Section~\ref{sec:2} describes the RRC model based on the rank minimization scenario. Section~\ref{sec:3} presents how to use the proposed RRC model for image denoising. Section~\ref{sec:4} develops the algorithm for image compression artifacts reduction exploiting the proposed RRC model. Section~\ref{sec:5} provides an analysis investigation for the proposed RRC model in terms of group-based sparse representation. Extensive results for image restoration are presented in Section~\ref{sec:6} and Section~\ref{sec:7} concludes the paper.

\section {Rank Minimization via Rank Residual Constraint}
 \label{sec:2}
In this section, we first analyze the weakness of the traditional NNM model and then propose the RRC model to improve the performance of rank estimation.

\subsection {Nuclear Norm Minimization}	
 \label{sec:2.1}
According to \cite{6,15,17}, NNM is the tightest convex relaxation of the original rank minimization problem. Given a data matrix $\textbf{\emph{Y}}\in\mathbb{R}^{d\times m}$, the goal of NNM is to find a low-rank matrix $\textbf{\emph{X}}\in\mathbb{R}^{d\times m}$ of rank $r \ll min ({d, m})$, satisfying the following objective function,
\vspace{-1mm}
\begin{equation}
\hat{\textbf{\emph{X}}}=\arg\min_{\textbf{\emph{X}}}\left(\frac{1}{2}\left\|\textbf{\emph{Y}}-\textbf{\emph{X}}\right\|_F^2 +\lambda \left\|\textbf{\emph{X}}\right\|_*\right),
\label{eq:1}
\end{equation} 
where $\left\|\cdot\right\|_F$ denotes the Frobenius norm and $\lambda>0$ is a regularization parameter. Cand{\`e}s $\emph{et al}.$ \cite{17} proved that the low-rank matrix can be perfectly recovered from the degraded/corrupted data matrix with a high probability by solving an NNM problem.  Despite the theoretical guarantee of the singular value thresholding (SVT) algorithm \cite{6}, it has been observed that the recovery performance of such a convex relaxation will degrade in the presence of noise, and the solution will seriously deviate from the original solution of rank minimization problem \cite{23}. More specifically, NNM tends to over-shrink the rank of the matrix. Taking the widely used image $\emph{Lena}$ in Fig.~\ref{fig:2}(a) as an example, we add Gaussian noise with standard deviation ${\boldsymbol\sigma}_{n}$ = 100 to the clean image and perform NNM to recover a denoised image in Fig.~\ref{fig:2}(c). We randomly extract a patch from the noisy image in Fig.~\ref{fig:2}(b) and search 60 similar patches to generate a group (please refer to Section~\ref{sec:3} for details of constructing the group). These patches (after vectorization) in this group are then stacked into a data matrix. Since all the patches in this group have similar structures, the constructed data matrix is of low-rank. Based on this, we plot the singular values of the group in the noisy image, NNM recovered image, and the original image in Fig.~\ref{fig:2}(d). As can be seen, the solution of NNM (green line) is severely deviated (over-shrunk) from the ground truth (red line).

\begin{figure}[!t]
\centering
{\includegraphics[width=0.48\textwidth]{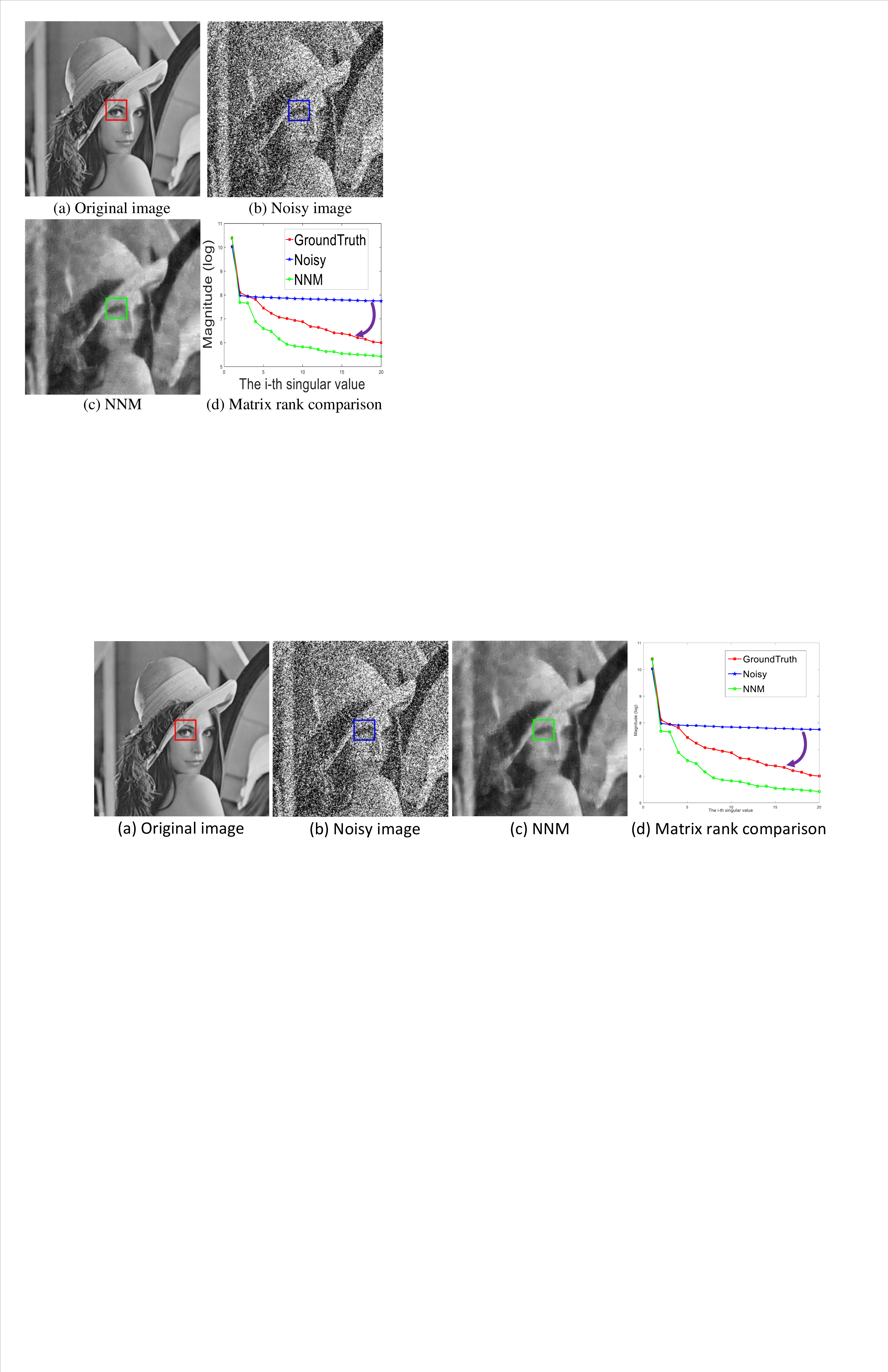}}
\vspace{-2mm}
\caption{ Analyzing the matrix rank by image denoising. }
\label{fig:2}
\vspace{-4mm}
\end{figure}

\subsection{Rank Residual Constraint}
 \label{sec:2.2}
As demonstrated in Fig.~\ref{fig:2}, due to the influence of noise, it is difficult to estimate the matrix rank precisely using NNM.
More specifically, in Fig.~\ref{fig:2}(d), the singular values of the observed matrix are seriously deviated from the singular values of the original matrix. However, in low-rank matrix estimation, we expect that the singular values of the recovered matrix {${\textbf{\emph{X}}}$} and the singular values of the original matrix $\textbf{\emph{X}}^\star$ are as close as possible. Explicitly, we define the {\em rank residual} by
\begin{equation}
\boldsymbol\gamma^\star \stackrel{\rm def}{=} \boldsymbol\sigma-\boldsymbol\psi^\star,
\label{eq:2}
\end{equation} 
where $\boldsymbol\sigma$ and $\boldsymbol\psi^\star$ are the singular values of  {${\textbf{\emph{X}}}$} and $\textbf{\emph{X}}^\star$, respectively. It can be seen that the rank estimation of the  matrix $\textbf{\emph{X}}$ largely depends on the level of this rank residual $\boldsymbol\gamma^\star$.

However, in real applications, the original matrix $\textbf{\emph{X}}^\star$ is unavailable, and thus we desire a good {\em estimate} of it, denoted by $\textbf{\emph{X}}'$. Via introducing this $\textbf{\emph{X}}'$ and defining $\boldsymbol\gamma \stackrel{\rm def}{=}\boldsymbol\sigma-\boldsymbol\psi$ with $\boldsymbol\psi$ being the singular values of $\textbf{\emph{X}}'$, we propose the RRC model below,
\begin{equation}
\hat{\textbf{\emph{X}}}=\arg\min_{\textbf{\emph{X}}}\left(\frac{1}{2}\left\|\textbf{\emph{Y}}-\textbf{\emph{X}}\right\|_F^2 +\lambda\left\|\gammav\right\|_{S_p}\right),
	\label{eq:3}
\end{equation} 
where $S_p$ denotes some type of norm. We will describe how to estimate $\textbf{\emph{X}}'$ and solve Eq.~\eqref{eq:3} below. Specifically, in the following sections, we apply the proposed RRC model to image restoration tasks, including image denoising and image compression artifacts reduction.

\section{Image Denoising via Rank Residual Constraint}
\label{sec:3}
In this section, we firstly employ the proposed RRC model in image denoising. It is well-known that image denoising \cite{26,76,27} is not only an important problem in image processing, but also an ideal test bench to measure different statistical image models.  Mathematically, image denoising aims to recover the latent clean image $\textbf{\emph{x}}$ from its noisy observation $\textbf{\emph{y}}$, which can be modeled as
\begin{equation}
\textbf{\emph{y}}= \textbf{\emph{x}}+\textbf{\emph{n}},
\label{eq:4}
\end{equation} 
where $\textbf{\emph{n}}$ is usually assumed to be zero-mean Gaussian with standard deviation $\boldsymbol\sigma_n$. Owing to the ill-posed nature of image denoising, it is critical to exploit the prior knowledge that characterizes the statistical features of the image.

The well-known NSS prior \cite{25}, which investigates the repetitiveness of textures and structures of natural images within nonlocal regions, implies that many similar patches can be searched  given a reference patch. To be concrete, a noisy (vectorized) image $\textbf{\emph{y}}\in\mathbb{R}^{N}$ is divided into $n$ overlapping patches of size $\sqrt{d}\times \sqrt{d}$, and each patch is denoted by a vector ${\textbf{\emph{y}}}_i\in\mathbb{R}^d, i=1, 2, \dots, n$. For the $i^{th}$ patch $\textbf{\emph{y}}_i$,  its $m$ similar patches are selected from a surrounding (searching) window with $L \times L$ pixels to form a set ${\textbf{\emph{S}}}_i$. Note that the K-Nearest Neighbour (KNN) algorithm \cite{70} is used to search similar patches. Following this, these patches in ${\textbf{\emph{S}}}_i$ are stacked into a matrix ${\textbf{\emph{Y}}}_i\in\mathbb{R}^{d\times m}$, \ie, ${\textbf{\emph{Y}}}_i=\{{\textbf{\emph{y}}}_{i,1}, {\textbf{\emph{y}}}_{i,2},\dots,{\textbf{\emph{y}}}_{i,m}\}.$ This matrix ${\textbf{\emph{Y}}}_i$ consisting of patches with similar structures is thus called a group \cite{28,29}, where $\{{\textbf{\emph{y}}}_{i,j}\}_{j=1}^m$ denotes the $j^{th}$ patch in the $i^{th}$ group. Then we have ${\textbf{\emph{Y}}}_i = \textbf{\emph{X}}_i +{\textbf{\emph{N}}}_i$, where $\textbf{\emph{X}}_i$ and ${\textbf{\emph{N}}}_i$ are the corresponding group matrices of the original image and noise, respectively. Since all patches in each data matrix have similar structures, the constructed data matrix ${\textbf{\emph{Y}}}_i$ is of low-rank. By adopting the proposed RRC model in Eq.~\eqref{eq:3}, the low-rank matrix ${\textbf{\emph{X}}}_i$ can be estimated by solving the following optimization problem,
\begin{equation}
\hat{\textbf{\emph{X}}}_i=\arg\min_{\textbf{\emph{X}}_i}\left(\frac{1}{2}\left\|\textbf{\emph{Y}}_i-\textbf{\emph{X}}_i\right\|_F^2 +\lambda\left\|\gammav_i\right\|_{S_p}\right),
\label{eq:5}
\end{equation} 
where $\boldsymbol\gamma_i = \boldsymbol\sigma_i-\boldsymbol\psi_i$, with $\boldsymbol\sigma_i$ and $\boldsymbol\psi_i$ representing the singular values of  $\textbf{\emph{X}}_i$ and $\textbf{\emph{X}}_i'$, respectively. $\textbf{\emph{X}}_i'\in{\mathbb R}^{d\times m}$ is a good estimate of the original image group $\textbf{\emph{X}}_i^\star$. In order to achieve a high performance for image denoising, we hope  that the rank residual $\boldsymbol\gamma_i$ of each group is small enough.

\begin{figure}[!t]
	\centering
	{\includegraphics[width=.48\textwidth]{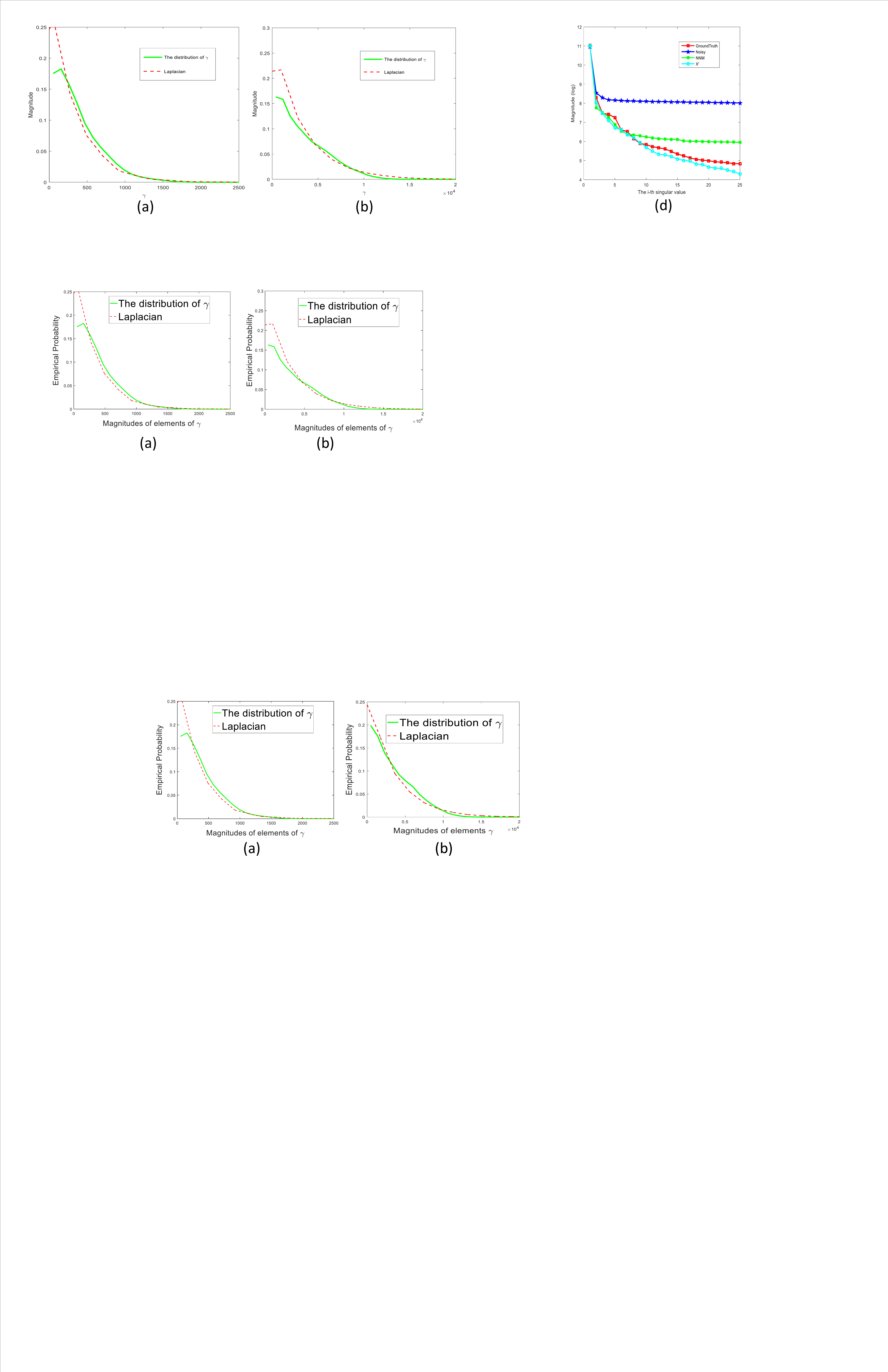}}
	\vspace{-2mm}
	\caption{ The distributions of the rank residual $\boldsymbol\gamma$ for image $\emph{Fence}$ with $\boldsymbol\sigma_n$ = 20 in (a) and image $\emph{Parrot}$ with $\boldsymbol\sigma_n$ = 50 in (b).}
	\label{fig:3}
	\vspace{-2mm}
\end{figure}

\subsection{Determine $S_p$}
\label{3.1}
Let us come back to Eq.~\eqref{eq:5}. Obviously, one important issue of our RRC based image denoising is the determination of $S_p$. Hereby, we perform some experiments to investigate the statistical property of $\boldsymbol\gamma$, where  $\boldsymbol\gamma$ denotes the set of $\boldsymbol\gamma_i = \boldsymbol\sigma_i-\boldsymbol\psi_i$. We use the original image ${\textbf{\emph{x}}}$ to construct ${\textbf{\emph{X}}}'$.
In these experiments, two typical images $\emph{Fence}$ and $\emph{Parrot}$ are corrupted by Gaussian noise with standard deviations $\boldsymbol\sigma_n$ = 20 and $\boldsymbol\sigma_n$ = 50 respectively, to generate the noisy observation $\textbf{\emph{y}}$. Fig.~\ref{fig:3} shows the fitting results of empirical distributions of the rank residual $\boldsymbol\gamma$ on these two images. It can be observed that both the empirical distributions can be well approximated by a Laplacian distribution, which is usually modeled by an $\ell_1$-norm. Therefore,
Eq.~\eqref{eq:5} can now be rewritten as
\begin{equation}
\hat{\textbf{\emph{X}}}_i=\arg\min_{\textbf{\emph{X}}_i}\left(\frac{1}{2}\left\|\textbf{\emph{Y}}_i-\textbf{\emph{X}}_i\right\|_F^2 +\lambda\left\|\gammav_i\right\|_{1}\right).
\label{eq:6}
\end{equation} 

\subsection{Estimate $\textbf{\emph{X}}'$}
\label{3.2}
In Eq.~\eqref{eq:5}, after determining $S_p$, we also need to estimate $\textbf{\emph{X}}_i'$, as the original image is  unavailable in real applications.  There are a variety of methods to estimate $\textbf{\emph{X}}_i'$, which depends on the prior knowledge of the original image ${\textbf{\emph{x}}}$. For example, if there exist many example images that are similar enough to the original image $\textbf{\emph{x}}$, similar patches could be searched to construct the matrix $\textbf{\emph{X}}_i'$ from the example image set \cite{30,31}. However, under many practical situations, the example image set is inaccessible. In this paper, inspired by the fact that natural images often contain repetitive structures \cite{25,26,28,29,72,27}, we search nonlocal similar patches in the degraded image to the given patch and use the image NSS prior to estimate $\textbf{\emph{X}}_i'$. Specifically, a good estimation of each reference patch ${\textbf{\emph{x}}}_{i,j}'$  in $\textbf{\emph{X}}_i'$ can be computed by the weighted average of  the patches $\{\hat{\textbf{\emph{x}}}_{i, k}\}$ in $\hat{\textbf{\emph{X}}}_i$,  associated with each group including $m$ nonlocal similar patches in each iteration. Note that the initialization of $\hat{\textbf{\emph{X}}}_i$ is the corresponding noisy group ${\textbf{\emph{Y}}}_i$. Then we have,
\begin{equation}
{\textbf{\emph{x}}}_{i,j}' =\sum\nolimits_{k=1}^{m-j+1} \textbf{\emph{w}}_{i,k}\hat{\textbf{\emph{x}}}_{i,k},
\label{eq:7}
\end{equation} 
where ${\textbf{\emph{x}}}_{i,j}'$ and $\hat{\textbf{\emph{x}}}_{i,k}$ represent the $j$-th and $k$-th patch of $\textbf{\emph{X}}_i'$ and $\hat{\textbf{\emph{X}}}_i$, respectively. $m$ is the total number of similar patches and $\textbf{\emph{w}}_{i,k}$ is the weight, which is inversely proportional to the distance between patches $\hat{\textbf{\emph{x}}}_{i}$ and $\hat{\textbf{\emph{x}}}_{i,k}$, \ie, ${\textbf{\emph{w}}_{i,k} = \frac{1}{W}{\rm exp}(-\|\hat{\textbf{\emph{x}}}_{i}- \hat{\textbf{\emph{x}}}_{i,k}\|_2^2/h)}$, where $h$ is a predefined constant and $W$ is a normalization factor. It is worth noting that Eq.~\eqref{eq:7} is  based on nonlocal means filtering \cite{32}. Again, the recovered matrix $\hat{\textbf{\emph{X}}}_i$ and the reference matrix $\textbf{\emph{X}}_i'$ are updated \emph{gradually and jointly} in each iteration.

\subsection{Iterative Shrinkage Algorithm to Solve the Proposed RRC Model for Image Denoising}
\label{3.3}
We now develop an efficient algorithm to solve the optimization in Eq.~\eqref{eq:6}. In order to do so, we first introduce the following Lemma and Theorem.

\begin{lemma}
\label{lemma:1}
The minimization problem
\begin{equation}
\hat{\xv}=\arg\min_{\xv} \left(\frac{1}{2}\left\|\av -\xv\right\|_2^2+\tau\left\|\xv -\bv\right\|_1\right),
\label{eq:8}
\end{equation} 
has a closed-form solution,
\begin{equation}
\hat{\xv}={\rm soft}(\av-\bv,\tau)+\bv.
\label{eq:9}
\end{equation} 
where ${\rm soft}(\av,\tau)= {\rm sgn}(\av)\odot{\rm max}({\rm abs}(\av)-\tau,0)$; $\odot$ denotes the element-wise (Hadamard) product, and $\av,\bv,\xv$ are vectors of the same dimension.
\end{lemma}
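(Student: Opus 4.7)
The plan is to reduce the problem to the classical soft-thresholding (scalar shrinkage) lemma by a simple change of variable. Specifically, I would introduce $\yv \stackrel{\rm def}{=} \xv - \bv$ so that $\xv = \yv + \bv$ and the objective becomes
\begin{equation*}
\tfrac{1}{2}\|\av - \bv - \yv\|_2^2 + \tau\|\yv\|_1,
\end{equation*}
which is the standard $\ell_1$-regularized denoising problem with ``observation'' $\av - \bv$.

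Next, I would invoke the well-known fact that the unique minimizer of $\tfrac{1}{2}\|\cv - \yv\|_2^2 + \tau\|\yv\|_1$ over $\yv$ is the element-wise soft-thresholding operator, $\hat{\yv} = \mathrm{soft}(\cv,\tau) = \mathrm{sgn}(\cv)\odot\max(\mathrm{abs}(\cv)-\tau,0)$. This follows because the problem decouples coordinate-wise into scalar subproblems $\tfrac{1}{2}(c_k-y_k)^2 + \tau|y_k|$, each of which can be handled by inspecting the subdifferential of $|\cdot|$ or by casewise analysis on the sign of $y_k$. For completeness I would include the three-line scalar argument: if $y_k>0$ the stationarity condition gives $y_k = c_k - \tau$ (valid when $c_k>\tau$); symmetrically for $y_k<0$; otherwise $y_k=0$ whenever $|c_k|\le \tau$.

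Substituting $\cv = \av - \bv$ and reverting via $\hat{\xv} = \hat{\yv} + \bv$ yields exactly $\hat{\xv} = \mathrm{soft}(\av-\bv,\tau) + \bv$, which is Eq.~\eqref{eq:9}. Uniqueness follows from the strict convexity of $\tfrac12\|\cdot\|_2^2$ plus the convexity of $\|\cdot\|_1$.

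There is essentially no hard step here; the only thing to be careful about is that the change of variable is a bijection on $\mathbb{R}^d$, so the minimizer in $\yv$ transfers back to the unique minimizer in $\xv$. The main ``obstacle,'' if any, is just being explicit that the soft-thresholding formula applies componentwise because both the quadratic term and the $\ell_1$-penalty are separable after the shift.
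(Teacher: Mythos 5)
Your proof is correct: the shift $\yv=\xv-\bv$ reduces the problem to the standard separable $\ell_1$-denoising problem, the coordinatewise subdifferential analysis gives soft-thresholding, and translating back yields Eq.~\eqref{eq:9}. The paper itself offers no argument for this lemma (it only cites an external reference), and your change-of-variables derivation is precisely the standard proof that such a citation stands in for, so there is nothing substantive to contrast.
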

\begin{proof}
See \cite{33}.
\end{proof}

\begin{theorem}
\label{theorem:1}
(von Neumann) For any two matrices $\textbf{{A}}, \textbf{{B}} \in{\mathbb R}^{m \times n}$, ${\rm Tr}({\textbf{{A}}}^T\textbf{{B}}) \leq {\rm Tr}(\sigma({\textbf{{A}}})^T\sigma({\textbf{{B}}}))$, where ${ {\rm Tr}}$ calculates the trace of the ensured matrix; $\sigma({\textbf{{A}}})$ and $\sigma({\textbf{\emph{B}}})$ are the ordered singular value matrices of $\textbf{{A}}$ and $\textbf{{B}}$ with the same order, respectively.
\end{theorem}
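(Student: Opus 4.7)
The plan is to reduce the problem to a bound on $\mathrm{Tr}(\Sigma_A P \Sigma_B Q)$ for orthogonal $P, Q$ via the SVD, then convert the orthogonality of $P, Q$ into a doubly stochastic structure and finish with the rearrangement inequality. First I would write the singular value decompositions $\textbf{\emph{A}} = \Umat_A \Sigmamat_A \Vmat_A^\top$ and $\textbf{\emph{B}} = \Umat_B \Sigmamat_B \Vmat_B^\top$ with ordered singular values $\alpha_1 \geq \alpha_2 \geq \dots \geq 0$ and $\beta_1 \geq \beta_2 \geq \dots \geq 0$ on the diagonals of $\Sigmamat_A, \Sigmamat_B$. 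Using the cyclic property of the trace,
\begin{equation*}
\mathrm{Tr}(\textbf{\emph{A}}^\top \textbf{\emph{B}}) = \mathrm{Tr}\bigl(\Sigmamat_A^\top (\Umat_A^\top \Umat_B) \Sigmamat_B (\Vmat_B^\top \Vmat_A)\bigr) = \mathrm{Tr}(\Sigmamat_A^\top P \Sigmamat_B Q),
\end{equation*}
where $P = \Umat_A^\top \Umat_B$ and $Q = \Vmat_B^\top \Vmat_A$ are orthogonal.

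Next I would expand this expression entrywise to reveal the doubly-stochastic structure. Writing the trace as a double sum,
\begin{equation*}
\mathrm{Tr}(\Sigmamat_A^\top P \Sigmamat_B Q) = \sum_{i,j} \alpha_i \beta_j P_{ij} Q_{ji},
\end{equation*}
I would apply the elementary bound $P_{ij} Q_{ji} \leq \tfrac{1}{2}(P_{ij}^2 + Q_{ji}^2)$ and define $S_{ij} \stackrel{\mathrm{def}}{=} \tfrac{1}{2}(P_{ij}^2 + Q_{ji}^2)$. Because $P$ and $Q$ are orthogonal, every row and column of the matrices of squared entries sums to one, so $S = (S_{ij})$ is doubly stochastic. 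This reduces the task to bounding $\sum_{i,j} \alpha_i \beta_j S_{ij}$.

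The final step is to invoke Birkhoff's theorem, which writes $S = \sum_k \lambda_k \Pimat_k$ as a convex combination of permutation matrices $\Pimat_k$ with $\lambda_k \geq 0$ and $\sum_k \lambda_k = 1$. Then
\begin{equation*}
\sum_{i,j} \alpha_i \beta_j S_{ij} = \sum_k \lambda_k \sum_i \alpha_i \beta_{\pi_k(i)},
\end{equation*}
and the rearrangement inequality, using that both $(\alpha_i)$ and $(\beta_i)$ are non-increasing, yields $\sum_i \alpha_i \beta_{\pi_k(i)} \leq \sum_i \alpha_i \beta_i$ for every permutation $\pi_k$. Taking the convex combination gives the claimed bound $\mathrm{Tr}(\textbf{\emph{A}}^\top \textbf{\emph{B}}) \leq \sum_i \alpha_i \beta_i = \mathrm{Tr}(\sigma(\textbf{\emph{A}})^\top \sigma(\textbf{\emph{B}}))$.

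I expect the main obstacle to be the passage from orthogonality of $P, Q$ to the doubly stochastic majorization: the AM-GM step is slightly lossy in general, so one must justify that the averaging is still tight enough to recover the sharp inequality. A careful treatment must check this as well as the ordering convention on the singular values implicit in the statement; once Birkhoff's theorem is in hand, the rearrangement step is routine.
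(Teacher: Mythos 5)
Your argument is correct, but it is not the paper's route: the paper offers no proof of Theorem~\ref{theorem:1} at all and simply defers to the reference \cite{34}, so you have supplied a genuine self-contained proof where the authors supplied a citation. What you wrote is the standard Birkhoff--von Neumann/rearrangement proof of the trace inequality, and every step goes through: the reduction to $\mathrm{Tr}(\Sigma_A^\top P\Sigma_B Q)$, the entrywise expansion $\sum_{i,j}\alpha_i\beta_j P_{ij}Q_{ji}$, the AM--GM bound (valid in the needed direction precisely because $\alpha_i\beta_j\geq 0$), and the Birkhoff decomposition followed by rearrangement. Two small remarks. First, the obstacle you flag at the end is not actually an obstacle: the AM--GM step being ``lossy'' is irrelevant because the theorem asks only for an upper bound, not for the equality case, so there is nothing about tightness to recover. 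Second, the one point that does need a sentence of care is the rectangular case $m\neq n$: the matrices $P\in\mathbb{R}^{m\times m}$ and $Q\in\mathbb{R}^{n\times n}$ have different sizes, the double sum effectively runs over $i,j\leq\min(m,n)$, and the resulting $S$ is only doubly \emph{sub}stochastic (its row and column sums are at most one). This is repaired either by padding the singular-value sequences with zeros, or by noting that a doubly substochastic matrix is dominated entrywise by a doubly stochastic one and that enlarging $S_{ij}$ can only increase $\sum_{i,j}\alpha_i\beta_j S_{ij}$ since $\alpha_i\beta_j\geq 0$; after that Birkhoff applies as you describe. With that patch your proof is complete, and it has the advantage over the paper's treatment of making the result verifiable within the manuscript rather than outsourcing it.
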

\begin{proof}
See \cite{34}.
\end{proof}

We now provide the solution of Eq.~\eqref{eq:6} by the following Theorem.
\begin{theorem}
\label{theorem:2}
Let $\textbf{{Y}}_i= \textbf{{U}}_i\boldsymbol\Delta_i\textbf{{V}}_i^T$ be the SVD of $\textbf{{Y}}_i\in{\mathbb R}^{d\times m}$ with $\boldsymbol\Delta_i=diag(\delta_{i,1},\dots,\delta_{i,j})$, $j= min(d,m)$, $\textbf{{X}}_i'= \textbf{{R}}_i\boldsymbol\Lambda_i\textbf{{Q}}_i^T$ be the SVD of $\textbf{{X}}_i'\in{\mathbb R}^{d\times m}$ with $\boldsymbol\Lambda_i=diag(\psi_{i,1},\dots,\psi_{i,j})$. The optimal solution $\textbf{{X}}_i$ to the problem in Eq.~\eqref{eq:6} is $\textbf{{U}}_i\boldsymbol\Sigma_i\textbf{{V}}_i^T$, where $\boldsymbol\Sigma_i=diag(\sigma_{i,1},\dots,\sigma_{i,j})$ and the $k^{th}$ diagonal element $\sigma_{i,k}$ is solved by
\begin{equation}
\begin{aligned}
&\min\limits_{\sigma_{i,k}\geq0}
\left(\frac{1}{2}(\delta_{i,k}-\sigma_{i,k})^2+\lambda |\sigma_{i,k}-\psi_{i,k}| \right),~
\forall k=1,\dots, j.
\end{aligned}
\label{eq:10}
\end{equation}
\end{theorem}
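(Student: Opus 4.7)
The plan is to reduce the matrix-valued optimization in Eq.~\eqref{eq:6} to $j$ decoupled scalar optimizations by exploiting the orthogonal invariance of the Frobenius norm and the fact that the regularizer $\|\gammav_i\|_1 = \sum_k |\sigma_{i,k} - \psi_{i,k}|$ depends only on the singular values. First I would expand the data fidelity as
\begin{equation*}
\|\textbf{\emph{Y}}_i - \textbf{\emph{X}}_i\|_F^2 = \|\textbf{\emph{Y}}_i\|_F^2 + \|\textbf{\emph{X}}_i\|_F^2 - 2\,{\rm Tr}(\textbf{\emph{Y}}_i^T\textbf{\emph{X}}_i),
\end{equation*}
observing that $\|\textbf{\emph{Y}}_i\|_F^2$ is a constant and $\|\textbf{\emph{X}}_i\|_F^2 = \sum_k \sigma_{i,k}^2$ depends only on the singular values of $\textbf{\emph{X}}_i$. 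So the only matrix-vs-matrix coupling lies in the trace term, and handling it is the crux of the argument.

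Next, I would apply Theorem~\ref{theorem:1} (von Neumann) to obtain
\begin{equation*}
{\rm Tr}(\textbf{\emph{Y}}_i^T\textbf{\emph{X}}_i)\;\le\;\sum_{k=1}^{j}\delta_{i,k}\,\sigma_{i,k},
\end{equation*}
with both singular-value sequences arranged in the same (decreasing) order. The classical equality case is attained precisely when the left and right singular vectors of $\textbf{\emph{X}}_i$ coincide with those of $\textbf{\emph{Y}}_i$ and the singular values are matched in order, i.e., $\textbf{\emph{X}}_i = \textbf{\emph{U}}_i\boldsymbol\Sigma_i\textbf{\emph{V}}_i^T$. Because neither $\|\textbf{\emph{X}}_i\|_F^2$ nor $\|\gammav_i\|_1$ sees the singular vectors, I may assume without loss of generality that the minimizer has this form.

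Substituting it back, the data term collapses to $\sum_k (\delta_{i,k}-\sigma_{i,k})^2$ and the regularizer is already separable, so the global objective becomes
\begin{equation*}
\sum_{k=1}^{j}\left[\tfrac{1}{2}(\delta_{i,k}-\sigma_{i,k})^2 + \lambda\,|\sigma_{i,k}-\psi_{i,k}|\right],
\end{equation*}
which splits into $j$ independent scalar problems, each one exactly of the form asserted in Eq.~\eqref{eq:10}; the constraint $\sigma_{i,k}\ge 0$ is inherited from the singular-value convention. Each of these scalar problems, in turn, fits the template of Lemma~\ref{lemma:1} with $\av\leftarrow \delta_{i,k}$, $\bv\leftarrow \psi_{i,k}$, $\tau\leftarrow \lambda$.

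The part I expect to need the most care is the ordering issue: the von Neumann equality case demands that the $\sigma_{i,k}$ inherit the decreasing order of the $\delta_{i,k}$, whereas the separated subproblems in Eq.~\eqref{eq:10} treat each $\sigma_{i,k}$ in isolation. I would handle this by using the standard SVD convention that both $\{\delta_{i,k}\}$ and $\{\psi_{i,k}\}$ are already in decreasing order, and then noting that the per-index minimizer given by the shifted soft-threshold of $\delta_{i,k}$ around $\psi_{i,k}$ preserves monotonicity, so the componentwise solution is automatically a feasible ordered singular-value vector. A short rearrangement-inequality check then confirms that any reordering would only inflate the objective, closing the proof.
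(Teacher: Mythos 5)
Your proposal is correct and follows essentially the same route as the paper's own proof in Appendix~\ref{theorem2}: expand the Frobenius data term, invoke von Neumann's trace inequality (Theorem~\ref{theorem:1}) to conclude that the optimal $\textbf{\emph{X}}_i$ must share the singular vectors of $\textbf{\emph{Y}}_i$, and then reduce the objective to the separable scalar problems of Eq.~\eqref{eq:10}. Your extra verification that the componentwise shifted soft-threshold solution preserves the decreasing ordering of the singular values (so the separated subproblems are consistent with the von Neumann equality case) is a detail the paper leaves implicit, and it is a tightening rather than a departure.
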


\begin{proof}
See Appendix~\ref{theorem2}.
\end{proof}

Thereby, the minimization problem in Eq.~\eqref{eq:6} can be simplified by minimizing the problem in Eq.~\eqref{eq:10}.
For fixed $\delta_{i,k}$, $\psi_{i,k}$ and $\lambda$, based on Lemma~\ref{lemma:1}, the closed-form solution of Eq.~\eqref{eq:10} is
\begin{equation}
\sigma_{i,k}= {\rm soft} (\delta_{i,k}-\psi_{i,k}, \lambda) +  \psi_{i,k},
\label{eq:11}
\end{equation}

With the solution of $\boldsymbol\Sigma_i$ in Eq.~\eqref{eq:11} provided, the estimated group matrix ${{\textbf{\emph{X}}}_i}$ can be reconstructed by $\hat{\textbf{\emph{X}}}_i={{\textbf{\emph{U}}}_i}\boldsymbol\Sigma_i{{\textbf{\emph{V}}}_i^T}$. Then the denoised image $\hat{\textbf{\emph{x}}}$ can be reconstructed by aggregating all the group matrices $\{{\hat{\textbf{\emph{X}}}_i}\}_{i=1}^n$.

In image denoising, we would perform the below denoising procedure several iterations to achieve better results. In the $t^{th}$ iteration, the iterative regularization strategy \cite{35} is used to update $\textbf{\emph{y}}$ by
\begin{equation}
\textbf{\emph{y}}^{t}=\hat{\textbf{\emph{x}}}^{t-1}+\mu (\textbf{\emph{y}}-\textbf{\emph{y}}^{t-1}),
\label{eq:12}
\end{equation}
where $\mu$ represents the step-size. The standard deviation of the noise in the $t^{th}$ iteration is adjusted by \cite{5,29}:  ${\boldsymbol\sigma_n^{t}}=\rho~\sqrt{({\boldsymbol\sigma_n^2-\|{{\textbf{\emph{y}}}}-{\hat{{\textbf{\emph{x}}}}}^{t-1}\|_2^2})}$,
where $\rho$ is a constant.

The parameter $\lambda$ that balances the fidelity term and the regularization term can also be adaptively determined in each iteration, and inspired by \cite{36}, $\lambda$ in each iteration is set to
\begin{equation}
\lambda=\frac{{c~2\sqrt{2}~{\boldsymbol\sigma}_n^2}}{({\boldsymbol\varphi_i+\epsilon})}.
\label{eq:13}
\end{equation}
where ${\boldsymbol\varphi}_i$ denotes the estimated standard variance of $\boldsymbol\gamma_i$, and $c, \epsilon$ are the constants.

The complete description of the proposed RRC based image denoising approach to solve the problem in Eq.~\eqref{eq:6} is presented in Algorithm~\ref{algo:1}.

\begin{center}
\begin{algorithm}[!t]
\caption{The Proposed RRC for Image Denoising.}
\begin{algorithmic}[1]
\REQUIRE Noisy image $\textbf{\emph{y}}$.
\STATE  Initialize $\hat{\textbf{\emph{x}}}^{0}=\textbf{\emph{y}}$, $\textbf{\emph{y}}^{0}=\textbf{\emph{y}}$.
\STATE  Set parameters  $\boldsymbol\sigma_n$, $c$, $d$, $m$, $L$, $h$, $\rho$, $\mu$ and $\epsilon$.
\FOR{$t=1$ \TO Max-Iter }
\STATE Iterative Regularization $\textbf{\emph{y}}^{t}=\hat{\textbf{\emph{x}}}^{t-1}+\mu (\textbf{\emph{y}}-\textbf{\emph{y}}^{t-1})$.
\FOR{Each patch $\textbf{\emph{y}}_i$ in $\textbf{\emph{y}}^{t}$}	
\STATE Find similar patches to construct matrix ${\textbf{\emph{Y}}}_i$.
\STATE Perform $[\textbf{\emph{U}}_i, \boldsymbol\Delta_i, \textbf{\emph{V}}_i]= SVD ({\textbf{\emph{Y}}}_i)$.
\STATE Estimate the reference matrix ${\textbf{\emph{X}}}_i'$ by Eq.~\eqref{eq:7}.
\STATE Perform $[\textbf{\emph{R}}_i, \boldsymbol\Lambda_i, \textbf{\emph{Q}}_i]= SVD ({\textbf{\emph{X}}}_i')$.
\STATE Update $\lambda$ by Eq.~\eqref{eq:13}.
\STATE Estimate  $\boldsymbol\Sigma_i$ by  Eq.~\eqref{eq:11}.
\STATE Get the estimation:   $\hat{\textbf{\emph{X}}}_i=\textbf{\emph{U}}_i\boldsymbol\Sigma_i\textbf{\emph{V}}_i^T$.
\ENDFOR
\STATE Aggregate $\hat{\textbf{\emph{X}}}_i$  to form the denoised image $\hat{\textbf{\emph{x}}}^{t}$.
\ENDFOR
\STATE $\textbf{Output:}$ The final denoised image $\hat{\textbf{\emph{x}}}$.
\end{algorithmic}
\label{algo:1}
\end{algorithm}
\end{center}

\section{Image compression artifacts reduction via Rank Residual Constraint}
 \label{sec:4}

With the rapid development of social network and mobile Internet, billions of image and video resources have been spread through miscellaneous ways on the Internet everyday. To save the limited bandwidth and storage space, lossy compression scheme (\eg, JPEG \cite{37}, WebP \cite{38} and HEVC-MSP \cite{39}) has been widely used to compress images and videos. However, these lossy compression techniques often give rise to visually annoying compression artifacts, \ie, sacrificing the image quality to satisfy the bit-budget, which severely degrade the user experience. Furthermore, the performance of many other computer vision tasks (\eg, image recognition \cite{40} and object detection \cite{41}) largely depends on the quality of  input images, and therefore it is desired  to recover visually pleasing artifact-free images  from these compressed images. It is well-known that JPEG is the most popular lossy compression method \cite{37}, and therefore in this work we focus on JPEG compression artifacts reduction.

Specifically, we apply the proposed RRC model to image compression artifacts reduction. Similar to the above procedure in Section~\ref{sec:3}, given a JPEG compressed (vectorized) image  $\textbf{\emph{x}}\in\mathbb{R}^{N^2}$,  we extract each patch  ${\textbf{\emph{x}}}_k\in\mathbb{R}^{\sqrt{d}\times\sqrt{d}}$ 
from it, and search for its similar patches to generate $K$ groups, where each group ${\textbf{\emph{X}}}_k\in{\mathbb R}^{d\times m}$, \ie,  ${\textbf{\emph{X}}}_k=\{{\textbf{\emph{x}}}_{k, 1}, {\textbf{\emph{x}}}_{k, 2},\dots,{\textbf{\emph{x}}}_{k, m}\}$. In our application of image compression artifacts reduction, the observed data ${\textbf{\emph{X}}}_k$ includes the quantization error, which can be modeled as noise. We thus model the data matrix by ${\textbf{\emph{X}}}_k = \textbf{\emph{Z}}_k +{\textbf{\emph{N}}}_k$, where $\textbf{\emph{Z}}_k$ and ${\textbf{\emph{N}}}_k$ are the corresponding group matrices of the original (desired clean) image and the noise assumed to be zero-mean Gaussian noise here \cite{42,43,44,45}. Since all patches in each data matrix have similar structures, the constructed data matrix ${\textbf{\emph{X}}}_k$ is of low-rank. By invoking the proposed RRC model in Eq.~\eqref{eq:3}, the low-rank matrix ${\textbf{\emph{Z}}}_k$ can be estimated by solving the following optimization problem,
\begin{equation}
{\hat{\textbf{\emph{Z}}}}_k=\arg\min_{\textbf{\emph{Z}}_k}\left(\frac{1}{{2\sigma^2_{\textbf{\emph{e}}}}}\left\|\textbf{\emph{X}}_k-\textbf{\emph{Z}}_k\right\|_F^2 +\lambda\left\|\gammav_k\right\|_{S_p}\right),
\label{eq:15}
\end{equation} 
where ${{\sigma^2_{\textbf{\emph{e}}}}}$ denotes the variance of additive Gaussian noise, $\boldsymbol\gamma_k=\boldsymbol\psi_k-\boldsymbol\delta_k$, with $\boldsymbol\psi_k$ and $\boldsymbol\delta_k$ representing the singular values of  $\textbf{\emph{Z}}_k$ and $\textbf{\emph{Z}}_k'$ respectively, and $\textbf{\emph{Z}}_k'\in{\mathbb R}^{d\times m}$ is a good estimate of the original image group $\textbf{\emph{Z}}_k^\star$. In order to achieve a high quality recovered image, we also expect that the rank residual $\boldsymbol\gamma_k$ of each group is as small as possible.

\begin{figure}[!t]
	\vspace{-2mm}
	\centering
	{\includegraphics[width=0.48\textwidth]{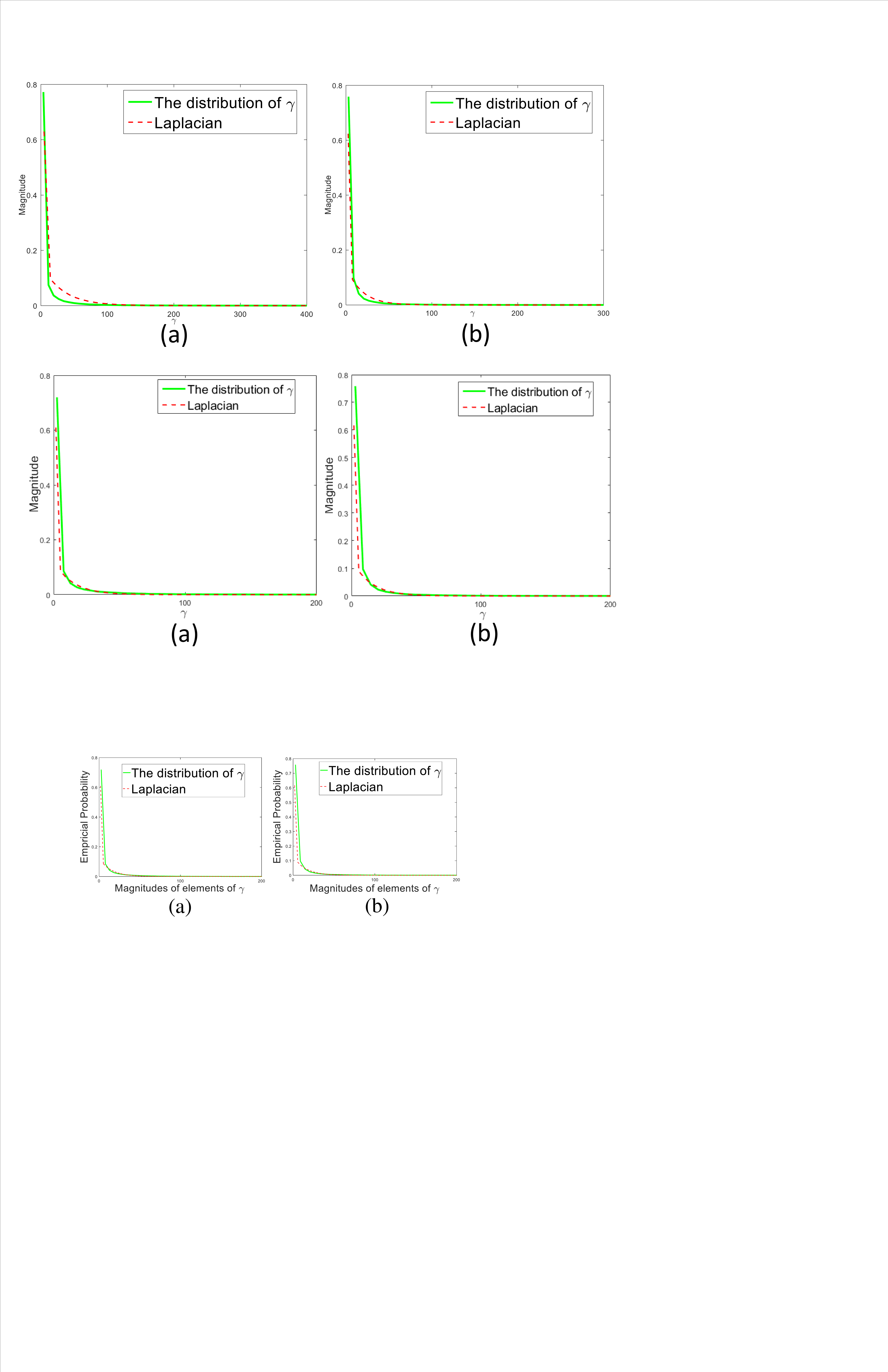}}
	\vspace{-2mm}
	\caption{ The distributions of the rank residual $\boldsymbol\gamma$. (a) image $\emph{Lena}$  is compressed by JPEG with QF = 10; (b) image $\emph{House}$ is compressed by JPEG with QF = 20. }
	\label{fig:4}
	\vspace{-4mm}
\end{figure}

According to Eq.~\eqref{eq:15}, one can observe that $\textbf{\emph{Z}}_k'$ and $S_p$ are unknown in our proposed RRC model for image compression artifacts reduction, and thus we need to estimate them. Similar to subsection~\ref{3.2}, we exploit the image NSS prior to estimate  $\textbf{\emph{Z}}_k'$ from the recovered image $\hat{\textbf{\emph{z}}}$ in each iteration, \ie,

\begin{equation}
{\textbf{\emph{z}}}_{k,j}' =\sum\nolimits_{i=1}^{m-j+1} \textbf{\emph{w}}_{k,i}\hat{\textbf{\emph{z}}}_{k,i},
\label{eq:15.1}
\end{equation} 
where ${\textbf{\emph{z}}}_{k,j}'$ and $\hat{\textbf{\emph{z}}}_{k,i}$ represent the $j$-th and $i$-th patch of $\textbf{\emph{Z}}_k'$ and $\hat{\textbf{\emph{Z}}}_k$, respectively. Note that the initialization of $\hat{\textbf{\emph{z}}}$ is the JPEG compressed image.

For $S_p$, we perform some experiments to investigate the statistical property of $\boldsymbol\gamma$, where  $\boldsymbol\gamma$ denotes the set of $\boldsymbol\gamma_k=\boldsymbol\psi_k-\boldsymbol\delta_k$ and we use the original image to construct ${\textbf{\emph{Z}}}'$. Two typical images $\emph{Lena}$ and $\emph{House}$ are compressed by JPEG with quality factor (QF) = 10 and QF = 20 respectively, to generate the degraded images. Fig.~\ref{fig:4} shows the fitting results of empirical distributions of the rank residual $\boldsymbol\gamma$ on these two images. One can also observe that both empirical distributions can be well approximated by a Laplacian distribution, which is usually modeled by an $\ell_1$-norm. Therefore, Eq.~\eqref{eq:15} can be rewritten as
\begin{equation}
{\hat{\textbf{\emph{Z}}}}_k=\arg\min_{\textbf{\emph{Z}}_k}\left(\frac{1}{{2\sigma^2_{\textbf{\emph{e}}}}}\left\|\textbf{\emph{X}}_k-\textbf{\emph{Z}}_k\right\|_F^2 +\lambda\left\|\gammav_k\right\|_{1}\right).
\label{eq:16}
\end{equation} 

\subsection {Quantization Constraint for Image Compression Artifacts Reduction}
\label{4.1}

This subsection is presented by following \cite{45}, to make the work self-contained and accessible to readers as much as possible. We now formulate image compression artifacts reduction as an image inverse problem. Specifically, given the JPEG compressed image $\textbf{\emph{y}}$, the original image $\textbf{\emph{x}}$ can be obtained by solving $\hat{\textbf{\emph{x}}} =\arg\max_{\textbf{\emph{x}}} p(\textbf{\emph{x}}|\textbf{\emph{y}})$. Taking account of the prior $p(\xv)$ and exploiting the Bayesian rule, we have
\begin{equation}
\hat{\textbf{\emph{x}}} =\arg\max_{\textbf{\emph{x}}}[{\rm log}( p(\textbf{\emph{y}}|\textbf{\emph{x}})) +  {\rm log} (p(\textbf{\emph{x}}))],
\label{eq:17}
\end{equation}
where the first term is the data fidelity and the second term depends on the employed image priors. In the following, we will introduce how to design each term of Eq.~\eqref{eq:17}.

\subsubsection {Quantization Noise Model}
\label{4.1.1}
In our task, the observed JPEG-based compression image is usually modeled as the corrupted image with quantization noise,
\begin{equation}
{\textbf{\emph{y}}} ={\textbf{\emph{x}}} +  {\textbf{\emph{s}}},
\label{eq:18}
\end{equation}
where ${\textbf{\emph{y}}}$ is the JPEG-based compression image with blocking artifacts and  ${\textbf{\emph{x}}}$, ${\textbf{\emph{s}}}$ represent the original image and the quantization noise, respectively. Many sophisticated models of the quantization noise have been developed \cite{42,43,44,45,67,68,74}, and in particular a Gaussian model has been widely used owing to its simplicity and effectiveness, which has achieved state-of-the-art results \cite{42,43,44,45}. Therefore, the Gaussian model is adopted in our work and specifically we employ the approach proposed in \cite{44} to estimate the noise variance $\sigma^2_{\textbf{\emph{s}}}$,
\begin{equation}
\sigma^2_{\textbf{\emph{s}}} = 1.195 ~(\tilde{{\textbf{\emph{e}}}})^{0.6394} + 0.9693, \  \tilde{{\textbf{\emph{e}}}} =\frac{1}{9} \sum\nolimits_{i,j=1}^3 {\textbf{\emph{M}}}_{[i,j]}^q,
\label{eq:19}
\end{equation}
where ${\textbf{\emph{M}}}^q$ is the $8 \times 8$ quantization matrix with QF of $q$,  $\tilde{{\textbf{\emph{e}}}}$ is the mean value of the nine upper-left entries in ${\textbf{\emph{M}}}^q$, corresponding to lowest-frequency DCT harmonics, and we use ${\textbf{\emph{M}}}_{[i,j]}^q$ to denote the $(i,j)^{th}$ element in $\textbf{\emph{M}}^q$. It is worth noting that the noise variance $\sigma^2_{\textbf{\emph{s}}} $ obtained by Eq.~\eqref{eq:19} is only the variance of the hypothetical Gaussian noise, which determines the level of adaptive smoothing that is able to reduce compression artifacts generated by the quantization step with ${\textbf{\emph{M}}}^q$ \cite{42}.

Following this, the first data-fidelity term in Eq.~\eqref{eq:17}  can be formulated by
\begin{equation}
{\rm log}( p(\textbf{\emph{y}}|\textbf{\emph{x}})) = -\frac{1} {2\sigma^2_{\textbf{\emph{s}}}} \left\|{\textbf{\emph{y}}}- {\textbf{\emph{x}}}\right\|_2^2,
\label{eq:20}
\end{equation}
where  ${\sigma^2_{\textbf{\emph{s}}}}$ is adaptively determined by Eq.~\eqref{eq:19} and we have discarded the terms unrelated to $\xv$.

\subsubsection {Quantization Constraint Prior} \label{4.1.2}

\bihan{We adopt the the quantization constraint (QC) \cite{43,44,45} prior in the proposed model, to tackle the quantization issue in the JPEG scheme.}
Specifically, to simplify the first two stages of JPEG compression, let us define an $N \times N$ \bihan{block-wise} DCT operator $\mathcal{A}$, which transforms each \bihan{non-overlapped} $8\times8$ block of the input image into \bihan{their} frequency domain \cite{44,45,46}.
\bihan{Correspondingly}, the matrix operator $\mathcal{A}^{-1}$ \bihan{denotes} the inverse DCT process.
\bihan{Therefore, the frequency-domain image of $\textbf{\emph{x}}$  is obtained by the following transform,}
\begin{equation}
\widehat{{\textbf{\emph{x}}}}=\mathcal{A} \textbf{\emph{x}}, 
\label{eq:21}
\end{equation} 

\bihan{Recall that $\textbf{\emph{M}}^q$ denotes the $8\times 8$ block-level quantization matrix, which is determined by the QF in the range of [1, 100].
Based on the JPEG compression scheme, we have the following forward process,}
\begin{equation}
\widehat{\textbf{\emph{y}}}_{[(k-1)*N+l]} = {\rm round}\left(\frac{\widehat{\textbf{\emph{x}}}_{[(k-1)*N+l]}}{\textbf{\emph{M}}_{[k,l]}}\right)~{\textbf{\emph{M}}_{[k,l]}},
\label{eq:22}
\end{equation} 
\bihan{where $1\leq k$, $l\leq N$ are the pixel coordinates in the image, and round($\cdot$) denotes the rounding operator mapping the input to its nearest integer. Here $\textbf{\emph{M}}$ is an $N \times N$ image-level quantization matrix, and the values of its elements ${\textbf{\emph{M}}_{[k,l]}} = {\textbf{\emph{M}}^q_{[\bar{k},\bar{l}]}}$, where the block-level coordinates $\bar{k} = mod (k,8)$; $\bar{l}= mod (l,8)$.}

\bihan{We define the lower and upper bound vectors $\widehat{\textbf{\emph{l}}}$ and $\widehat{\textbf{\emph{u}}}$ as}
\begin{equation}
\begin{aligned}
\widehat{\textbf{\emph{l}}}_{[(k-1)*N+l]}&= (\widehat{\textbf{\emph{y}}}_{[(k-1)*N+l]} - w)*{\textbf{\emph{M}}_{[k,l]}}\; ,\\
\widehat{\textbf{\emph{u}}}_{[(k-1)*N+l]}&= (\widehat{\textbf{\emph{y}}}_{[(k-1)*N+l]} + w)*{\textbf{\emph{M}}_{[k,l]}} \; .
\label{eq:23}
\end{aligned}
\end{equation} 
\bihan{where $w$ is a constant which is typically set to be not greater than 0.5~\cite{47}.
The frequency-domain coefficients of the image should satisfy the bound (\ie, QC)~\cite{46}:}
\begin{equation}
\widehat{\textbf{\emph{l}}} \preceq  \widehat{\textbf{\emph{x}}} \preceq \widehat{\textbf{\emph{u}}},
\label{eq:24}
\end{equation} 
\bihan{where $\preceq$ denotes the element-wise $\leq$ operator.
Based on \eqref{eq:24}, the solution space of $\textbf{\emph{x}}$, which is denoted as $\bm \Omega$, is defined as}
\begin{equation}
\bm \Omega = \{\textbf{\emph{x}} \, |  \, \widehat{\textbf{\emph{l}}} \preceq  \mathcal{A}{\textbf{\emph{x}}} \preceq \widehat{\textbf{\emph{u}}}\} \, ,
\label{eq:25}
\end{equation} 
\bihan{which holds for any image coded by JPEG. In this work,} we explicitly incorporate the feasible solution space $\bm \Omega$ into Eq.~\eqref{eq:17}.

\subsection{Joint Model and Algorithm for Image Compression Artifacts Reduction}
\label{4.2}
Now, we incorporate the quantization noise model in Eq.~\eqref{eq:20} and two image priors (\ie, the proposed RRC in Eq.~\eqref{eq:16} and QC in Eq.~\eqref{eq:25}) into Eq.~\eqref{eq:17}, and achieve the joint image compression artifacts reduction as follows:
\begin{equation}
\begin{aligned}
(\hat{\textbf{\emph{x}}}, \hat{\textbf{\emph{Z}}}_k)&= \arg\min_{{\textbf{\emph{x}}}, {\textbf{\emph{Z}}}_k} \frac{1} {2\sigma^2_{\textbf{\emph{s}}}} \left\|{\textbf{\emph{y}}}- {\textbf{\emph{x}}}\right\|_2^2\\
&~~~+ \frac{\rho}{2\sigma^2_{\textbf{\emph{e}}}} \Sigma_{k=1}^K \left\|\textbf{\emph{R}}_k\textbf{\emph{x}}- \textbf{\emph{Z}}_k\right\|_F^2
+\lambda \Sigma_{k=1}^K \left\|\gammav_k\right\|_{1}, \\
&~~~{ s. t.} \quad\textbf{\emph{x}}\in {\bm \Omega},
\label{eq:26}
\end{aligned}
\end{equation}
where $\textbf{\emph{R}}_k\textbf{\emph{x}}= \textbf{\emph{X}}_k = \{{\textbf{\emph{x}}}_{k,1}, {\textbf{\emph{x}}}_{k,2},\dots,{\textbf{\emph{x}}}_{k,m}\}$ is the operator that extracts the group $\textbf{\emph{X}}_k$ from $\textbf{\emph{x}}$ and $\{\rho, \lambda\}$ are positive constants. Here, we introduce $\rho$ to make the solution of Eq.~\eqref{eq:26} more feasible.

In the following, we develop an alternating minimizing strategy to solve the  large scale non-convex optimization problem in Eq.~\eqref{eq:26}.
Specifically, the minimization of Eq.~\eqref{eq:26} is divided into two sub-problems, \emph{i.e.}, $\textbf{\emph{x}}$ and $\textbf{\emph{Z}}_k$, and we will show that there is an efficient solution to each of them.

\subsubsection{$\textbf{{Z}}_k$ Sub-problem}
\label{4.2.1}
Given $\textbf{\emph{x}}$, each $\textbf{\emph{Z}}_k$ sub-problem can be expressed as:
\begin{equation}
\hat{\textbf{\emph{Z}}}_k = \arg\min_{{\textbf{\emph{Z}}}_k}  \frac{\rho}{2\sigma^2_{\textbf{\emph{e}}}} \left\|\textbf{\emph{R}}_k\textbf{\emph{x}}- \textbf{\emph{Z}}_k\right\|_F^2 +\lambda \left\|\gammav_k\right\|_{1},
\label{eq:27}
\end{equation} 
Let $\mu = \frac{\lambda \sigma^2_{\textbf{\emph{e}}}}{\rho}$, we can rewrite Eq.~\eqref{eq:27} as
\begin{equation}
\hat{\textbf{\emph{Z}}}_k = \arg\min_{{\textbf{\emph{Z}}}_k}  \frac{1}{2} \left\|\textbf{\emph{X}}_k- \textbf{\emph{Z}}_k\right\|_F^2 +\mu \left\|\gammav_k\right\|_{1}.
\label{eq:28}
\end{equation} 

Then, based on Theorem~\ref{theorem:2}, the minimization problem in Eq.~\eqref{eq:28} can be simplified to minimize the following problem,
\begin{equation}
\begin{aligned}
&\min\limits_{\psi_{k,i}\geq0}
\left(\frac{1}{2}(\sigma_{k,i}-\psi_{k,i})^2+\mu |\psi_{k,i}-\delta_{k,i}| \right),~
\forall i=1,\dots, j,
\end{aligned}
\label{eq:29}
\end{equation}
where $\textbf{\emph{X}}_k= \textbf{\emph{U}}_k\boldsymbol\Sigma_k\textbf{\emph{V}}_k^T$ is the SVD of $\textbf{\emph{X}}_k\in{\mathbb R}^{d\times m}$ with $\boldsymbol\Sigma_k=diag(\sigma_{k,1},\dots,\sigma_{k,j})$, and $j= min(d,m)$, $\textbf{\emph{Z}}_k'= \textbf{\emph{P}}_k\boldsymbol\Delta_k\textbf{\emph{Q}}_k^T$ is the SVD of $\textbf{\emph{Z}}_k'\in{\mathbb R}^{d\times m}$ with $\boldsymbol\Delta_k=diag(\delta_{k,1},\dots,\delta_{k,j})$.

For fixed $\sigma_{k,i}$, $\delta_{k,i}$ and $\mu$, based on Lemma~\ref{lemma:1}, the closed-form solution of Eq.~\eqref{eq:29} is
\begin{equation}
\psi_{k,i}= {\rm soft} (\sigma_{k,i}-\delta_{k,i}, \mu) +  \delta_{k,i},
\label{eq:30}
\end{equation}
where $\boldsymbol\Lambda_k=diag(\psi_{k,1},\dots,\psi_{k,j})$. With the solution of $\boldsymbol\psi_k$ in Eq.~\eqref{eq:30} achieved, the estimated matrix ${{\textbf{\emph{Z}}}_k}$ can be reconstructed by $\hat{\textbf{\emph{Z}}}_k={{\textbf{\emph{U}}}_k}\boldsymbol\Lambda_k{{\textbf{\emph{V}}}_k^T}$.

\subsubsection{$\textbf{{x}}$ Sub-problem}
\label{4.2.2}
After obtaining each $\textbf{\emph{Z}}_k$, the desired image $\textbf{\emph{x}}$ can be reconstructed by solving the following constrained quadratic minimization problem,
 \begin{equation}
\begin{aligned}
&\hat{\textbf{\emph{x}}}= \arg\min_{{\textbf{\emph{x}}}} \frac{1} {2\sigma^2_{\textbf{\emph{s}}}} \left\|{\textbf{\emph{y}}}- {\textbf{\emph{x}}}\right\|_2^2+ \frac{\rho}{2\sigma^2_{\textbf{\emph{e}}}} \Sigma_{k=1}^K \left\|\textbf{\emph{R}}_k\textbf{\emph{x}}- \textbf{\emph{Z}}_k\right\|_F^2,\\
&\ \ \ s.t. \ \ \ \textbf{\emph{x}}\in {\bm \Omega}.
\label{eq:31}
\end{aligned}
\end{equation}

We first obtain the solution of the unconstrained quadratic minimization of Eq.~\eqref{eq:31}, and later project the solution to ${\bm \Omega}$. Specifically, without considering the constraint of ${\bm \Omega}$, Eq.~\eqref{eq:31} can be rewritten as
 \begin{equation}
\tilde{\textbf{\emph{x}}}= \arg\min_{{\textbf{\emph{x}}}} \frac{1} {2\sigma^2_{\textbf{\emph{s}}}} \left\|{\textbf{\emph{y}}}- {\textbf{\emph{x}}}\right\|_2^2+ \frac{\rho}{2\sigma^2_{\textbf{\emph{e}}}} \Sigma_{k=1}^K \left\|\textbf{\emph{R}}_k\textbf{\emph{x}}- \textbf{\emph{Z}}_k\right\|_F^2,
\label{eq:32}
\end{equation}
which has a closed-form solution, \ie,
\begin{equation}
\tilde{\textbf{\emph{x}}}= \left(\textbf{\emph{I}} +  \frac{\sigma^2_{\textbf{\emph{s}}} \rho}{ \sigma^2_{\textbf{\emph{e}}}} \Sigma_{k=1}^K\textbf{\emph{R}}_k^T \textbf{\emph{R}}_k \right)^{-1} \left(\textbf{\emph{y}}+ \frac{\sigma^2_{\textbf{\emph{s}}} \rho}{ \sigma^2_{\textbf{\emph{e}}}} \Sigma_{k=1}^K\textbf{\emph{R}}_k^T \textbf{\emph{Z}}_k\right),
\label{eq:33}
\end{equation}
where $\textbf{\emph{I}}$ represents an identity matrix, $\textbf{\emph{R}}_k^T \textbf{\emph{Z}}_k= \Sigma_{i=1}^m \textbf{\emph{R}}_{k,i}^T\textbf{\emph{z}}_{k,i}$ and $\textbf{\emph{R}}_k^T \textbf{\emph{R}}_k = \Sigma_{i=1}^m \textbf{\emph{R}}_{k,i}^T\textbf{\emph{R}}_{k,i}$. Since $(\textbf{\emph{I}} +  \frac{\sigma^2_{\textbf{\emph{s}}} \rho}{ \sigma^2_{\textbf{\emph{e}}}} \Sigma_{k=1}^K\textbf{\emph{R}}_k^T \textbf{\emph{R}}_k) $ is actually a diagonal matrix, Eq.~\eqref{eq:33} can be solved efficiently by element-wise division.

Next, we exploit the projection operator to obtain the solution of  Eq.~\eqref{eq:31} on  ${\bm \Omega}$, that is,
\begin{equation}
\hat{\textbf{\emph{x}}}= \mathcal{A}^{-1} \mathcal{P}(\mathcal{A}(\tilde{\textbf{\emph{x}}}),\widehat{\textbf{\emph{l}}}, \widehat{\textbf{\emph{u}}}),
\label{eq:34}
\end{equation}
where $\mathcal{A}$ is the matrix operator defined in Eq.~\eqref{eq:21}, and $\textbf{\emph{v}} = \mathcal{P} ({\textbf{\emph{x}}}, {\textbf{\emph{l}}}, {\textbf{\emph{u}}} )$ is a projection operator defined by
\begin{equation}
\textbf{\emph{v}}_{[k]} =\left\{
\begin{array}{lcc}
\textbf{\emph{l}}_{[k]}, & if & \textbf{\emph{x}}_{[k]} < \textbf{\emph{l}}_{[k]},\\
\textbf{\emph{x}}_{[k]}, & if & \textbf{\emph{l}}_{[k]} \leq \textbf{\emph{x}}_{[k]} \leq  \textbf{\emph{u}}_{[k]}, \\
\textbf{\emph{u}}_{[k]}, & if &  \textbf{\emph{x}}_{[k]} > \textbf{\emph{u}}_{[k]}.\\
\end{array}
\right.  \ \ 1\leq k \leq N^2.
\label{eq:35}
\end{equation}

From Eqs.~\eqref{eq:33} and~\eqref{eq:34}, we can obtain the solution of Eq.~\eqref{eq:31}. Regarding the two noise parameters ${\sigma_{\textbf{\emph{s}}}}$ and ${ \sigma_{\textbf{\emph{e}}}}$, ${ \sigma_{\textbf{\emph{s}}}}$ is adaptively calculated by Eq.~\eqref{eq:19}. Since each ${\sigma_{\textbf{\emph{e}}}}$ in Eq.~\eqref{eq:27} varies as the iteration number increases, its setting can be adaptively adjusted in each iteration. Inspired by \cite{44,48}, ${\sigma_{\textbf{\emph{e}}}}$ in the $t^{th}$ iteration is set to
\begin{equation}
{\sigma_{\textbf{\emph{e}}}}^{(t)}= \eta~\sqrt{({{ \sigma^2_{\textbf{\emph{s}}}}-\|{{\textbf{\emph{z}}}}^{(t-1)}-{{{\textbf{\emph{y}}}}}\|_2^2})},
\label{eq:36}
\end{equation}
where $\eta$ is a constant and  the image ${{{\textbf{\emph{z}}}}}^{(t-1)}$ is reconstructed by all the ${{{\textbf{\emph{Z}}}}}_k^{(t-1)}$ at the $(t-1)^{th}$ iteration. After solving the two sub-problems, we summarize the complete algorithm to solve Eq.~\eqref{eq:26} in  Algorithm~\ref{algo:2}.

\begin{algorithm}[!t]
	\caption{The Proposed RRC for Image Compression Artifacts Reduction.}
	\begin{algorithmic}[1]
		\REQUIRE JPEG Compressed bit-stream.
		\STATE  Get $\textbf{\emph{y}}$,  $\textbf{\emph{M}}^q$ from the compressed bit-stream.
		\STATE  Set parameters  $d$, $m$, $L$, $h$, $c$, $\rho$, $\epsilon$ and $\tau$.
		\STATE  Initialize $\hat{\textbf{\emph{x}}}^{0}=\textbf{\emph{y}}$, $\textbf{\emph{Z}}_k^{0}=\textbf{\emph{X}}_k^{0}$, $\forall k =1,\dots, K$.
		\STATE  Calculate ${\sigma_{\textbf{\emph{s}}}}$ by Eq.~\eqref{eq:19}.
		\STATE  Determine the solution space ${\bm \Omega}$ by  Eq.~\eqref{eq:25}.
		\FOR{$t=1$ \TO MaxIter ($T$) }
		\STATE  Calculate ${\sigma_{\textbf{\emph{e}}}}$ by Eq.~\eqref{eq:36}.
		\FOR{Each patch $\textbf{\emph{x}}_k$ in $\textbf{\emph{x}}$}	
		\STATE Find similar patches to construct matrix ${\textbf{\emph{X}}}_k$.
		\STATE Perform $[\textbf{\emph{U}}_k, \boldsymbol\Sigma_k, \textbf{\emph{V}}_k]= SVD ({\textbf{\emph{X}}}_k)$.
		\STATE Estimate the reference matrix ${\textbf{\emph{Z}}}_k'$ by Eq.~\eqref{eq:15.1}.
		\STATE Perform $[\textbf{\emph{P}}_k, \boldsymbol\Delta_k, \textbf{\emph{Q}}_k]= SVD ({\textbf{\emph{Z}}}_k')$.
		\STATE Update $\lambda$ by Eq.~\eqref{eq:13}.
		\STATE Calculate $\mu = {\lambda \sigma^2_{\textbf{\emph{e}}}}/{\rho}$.
		\STATE Estimate  $\boldsymbol\Lambda_k$ by  Eq.~\eqref{eq:30}.
		\STATE Get the estimation:   $\hat{\textbf{\emph{Z}}}_k=\textbf{\emph{U}}_k\boldsymbol\Lambda_k\textbf{\emph{V}}_k^T$.
		\ENDFOR
		\STATE Update $\tilde{{\textbf{\emph{x}}}}$  by Eq.~\eqref{eq:33}.
		\STATE Update $\hat{{\textbf{\emph{x}}}}$  by Eq.~\eqref{eq:34}.
		\ENDFOR
		\STATE $\textbf{Output:}$ The final reconstructed image $\hat{\textbf{\emph{x}}}$.
	\end{algorithmic}
	\label{algo:2}
\end{algorithm}

\begin{table*}[!t]
\vspace{-4mm}
\caption{{PSNR ($\textnormal{d}$B) comparison of NNM, BM3D \cite{26}, LSSC \cite{27}, EPLL  \cite{56}, Plow \cite{57}, NCSR \cite{49}, GID \cite{58}, PGPD \cite{29}, LINC \cite{59}, aGMM \cite{60}, OGLR \cite{61}, WNNM \cite{5} and RRC for image denoising.}}
\resizebox{1\textwidth}{!}				
{

\centering  
\begin{tabular}{|c|c|c|c|c|c|c|c|c|c|c|c|c|c||c|c|c|c|c|c|c|c|c|c|c|c|c|c|c|c|c|c|c|}
\hline
\multicolumn{1}{|c|}{}&\multicolumn{13}{|c||}{${\bf \sigma}_n = 20$}&\multicolumn{13}{|c|}{${\bf \sigma}_n = 30$}\\
\hline						\multirow{1}{*}{\textbf{{Images}}}&\multirow{1}{*}{\textbf{{NNM}}}&\multirow{1}{*}{\textbf{{BM3D}}}&\multirow{1}{*}{\textbf{{LSSC}}}
&\multirow{1}{*}{\textbf{{EPLL}}}&\multirow{1}{*}{\textbf{{Plow}}}&\multirow{1}{*}{\textbf{{NCSR}}}&\multirow{1}{*}{\textbf{{GID}}}
&\multirow{1}{*}{\textbf{{PGPD}}}&\multirow{1}{*}{\textbf{{LINC}}}&\multirow{1}{*}{\textbf{{aGMM}}}&\multirow{1}{*}{\textbf{{OGLR}}}
&\multirow{1}{*}{\textbf{{WNNM}}}&\multirow{1}{*}{\textbf{{RRC}}}&\multirow{1}{*}{\textbf{{NNM}}}&\multirow{1}{*}{\textbf{{BM3D}}}
&\multirow{1}{*}{\textbf{{LSSC}}}&\multirow{1}{*}{\textbf{{EPLL}}}&\multirow{1}{*}{\textbf{{Plow}}}&\multirow{1}{*}{\textbf{{NCSR}}}
&\multirow{1}{*}{\textbf{{GID}}}&\multirow{1}{*}{\textbf{{PGPD}}}&\multirow{1}{*}{\textbf{{LINC}}}&\multirow{1}{*}{\textbf{{aGMM}}}
&\multirow{1}{*}{\textbf{{OGLR}}}&\multirow{1}{*}{\textbf{{WNNM}}}&\multirow{1}{*}{\textbf{{RRC}}}\\
\hline
\multirow{1}{*}{Airplane}
&	29.01 	&	30.59 	&	30.68 	&	30.60 	&	29.98 	&	30.50 	&	29.62 	&	30.80 	&	30.57 	&	30.54 	&	30.17 	&	\textcolor{red}{\textbf{30.92}} 	&	\textcolor{blue}{\textbf{30.86}} 	&	27.62 	&	28.49 	&	28.48 	&	28.54 	&	28.03 	&	28.34 	&	27.54 	&	28.63 	&	28.53 	&	28.42 	&	28.21 	&	\textcolor{red}{\textbf{28.76}} 	&	\textcolor{blue}{\textbf{28.63}}
\\
\hline
\multirow{1}{*}{Barbara}
&	30.07 	&	31.24 	&	31.06 	&	29.85 	&	30.75 	&	31.10 	&	29.81 	&	31.12 	&	\textcolor{red}{\textbf{31.66}} 	&	30.51 	&	30.89 	&	31.60 	&	\textcolor{blue}{\textbf{31.63}} 	&	28.08 	&	29.08 	&	28.92 	&	27.58 	&	28.99 	&	28.68 	&	27.35 	&	28.93 	&	29.40 	&	27.88 	&	28.84 	&	\textcolor{blue}{\textbf{29.49}} 	&	\textcolor{red}{\textbf{29.51}}
\\
\hline
\multirow{1}{*}{boats}
&	29.96 	&	31.42 	&	31.46 	&	30.87 	&	30.90 	&	31.26 	&	29.78 	&	31.38 	&	\textcolor{blue}{\textbf{31.51}} 	&	31.20 	&	31.20 	&	\textcolor{red}{\textbf{31.63}} 	&	31.47 	&	28.08 	&	29.33 	&	\textcolor{blue}{\textbf{29.34}} 	&	28.85 	&	29.01 	&	29.04 	&	27.72 	&	29.32 	&	29.30 	&	29.05 	&	29.11 	&	\textcolor{red}{\textbf{29.49}} 	&	29.31
\\
\hline
\multirow{1}{*}{Elaine}
&	31.44 	&	32.51 	&	32.49 	&	32.16 	&	32.29 	&	32.39 	&	31.23 	&	32.43 	&	32.60 	&	\textcolor{red}{\textbf{32.62}} 	&	32.41 	&	32.55 	&	\textcolor{blue}{\textbf{32.61}} 	&	28.88 	&	30.52 	&	30.38 	&	30.15 	&	30.36 	&	30.25 	&	28.98 	&	30.47 	&	30.44 	&	30.59 	&	30.43 	&	\textcolor{blue}{\textbf{30.53}} 	&	\textcolor{red}{\textbf{30.61}}
\\
\hline
\multirow{1}{*}{Fence}
&	28.86 	&	29.93 	&	30.07 	&	29.24 	&	29.13 	&	30.05 	&	28.99 	&	29.99 	&	\textcolor{blue}{\textbf{30.18}} 	&	29.46 	&	29.82 	&	\textcolor{red}{\textbf{30.40}} 	&	30.08 	&	27.43 	&	28.19 	&	28.16 	&	27.22 	&	27.59 	&	28.13 	&	26.90 	&	28.13 	&	\textcolor{blue}{\textbf{28.26}} 	&	27.31 	&	28.12 	&	\textcolor{red}{\textbf{28.55}} 	&	28.25
\\
\hline
\multirow{1}{*}{Flower}
&	28.52 	&	30.01 	&	\textcolor{red}{\textbf{30.35}} 	&	30.01 	&	29.36 	&	30.05 	&	29.12 	&	30.27 	&	30.29 	&	29.88 	&	29.99 	&	\textcolor{blue}{\textbf{30.34}} 	&	30.17 	&	27.28 	&	27.97 	&	28.07 	&	27.95 	&	27.74 	&	27.86 	&	27.01 	&	28.11 	&	\textcolor{red}{\textbf{28.21}} 	&	27.90 	&	27.96 	&	\textcolor{blue}{\textbf{28.19}} 	&	28.12
\\
\hline
\multirow{1}{*}{Foreman}
&	33.34 	&	34.54 	&	34.45 	&	33.67 	&	34.21 	&	34.42 	&	33.08 	&	34.44 	&	34.65 	&	34.20 	&	34.50 	&	\textcolor{red}{\textbf{34.72}} 	&	\textcolor{blue}{\textbf{34.72}} 	&	30.24 	&	32.75 	&	32.85 	&	31.70 	&	32.45 	&	32.61 	&	30.92 	&	32.83 	&	32.99 	&	32.31 	&	32.84 	&	\textcolor{blue}{\textbf{32.95}} 	&	\textcolor{red}{\textbf{33.26}}
\\
\hline
\multirow{1}{*}{House}
&	32.30 	&	33.77 	&	\textcolor{red}{\textbf{34.10}} 	&	32.99 	&	33.40 	&	33.81 	&	32.68 	&	33.85 	&	33.79 	&	33.52 	&	33.77 	&	\textcolor{blue}{\textbf{34.05}} 	&	33.71 	&	29.85 	&	32.09 	&	\textcolor{blue}{\textbf{32.40}} 	&	31.24 	&	31.67 	&	32.01 	&	30.50 	&	32.24 	&	32.29 	&	31.79 	&	32.02 	&	\textcolor{red}{\textbf{32.72}} 	&	32.30
\\
\hline
\multirow{1}{*}{J. Bean}
&	32.61 	&	34.18 	&	34.54 	&	33.79 	&	33.80 	&	34.37 	&	33.38 	&	34.28 	&	34.11 	&	\textcolor{red}{\textbf{34.80}} 	&	34.44 	&	\textcolor{blue}{\textbf{34.75}} 	&	34.66 	&	29.77 	&	31.97 	&	32.38 	&	31.55 	&	31.61 	&	31.99 	&	30.74 	&	31.99 	&	31.97 	&	\textcolor{red}{\textbf{32.50}} 	&	32.15 	&	\textcolor{blue}{\textbf{32.42}} 	&	32.33
\\
\hline
\multirow{1}{*}{Leaves}
&	28.93 	&	30.09 	&	30.45 	&	29.40 	&	29.08 	&	30.34 	&	29.91 	&	30.46 	&	30.23 	&	30.05 	&	29.87 	&	\textcolor{red}{\textbf{31.09}} 	&	\textcolor{blue}{\textbf{30.82}} 	&	27.17 	&	27.81 	&	27.65 	&	27.19 	&	27.00 	&	28.04 	&	27.59 	&	27.99 	&	27.94 	&	27.53 	&	27.77 	&	\textcolor{red}{\textbf{28.69}} 	&	\textcolor{blue}{\textbf{28.35}}
\\
\hline
\multirow{1}{*}{Lena}
&	30.03 	&	31.52 	&	31.64 	&	31.25 	&	30.98 	&	31.48 	&	30.33 	&	31.64 	&	\textcolor{red}{\textbf{31.75}} 	&	31.48 	&	31.27 	&	31.72 	&	\textcolor{blue}{\textbf{31.72}} 	&	28.29 	&	29.46 	&	29.50 	&	29.18 	&	29.16 	&	29.32 	&	28.36 	&	29.60 	&	\textcolor{red}{\textbf{29.74}} 	&	29.38 	&	29.36 	&	\textcolor{blue}{\textbf{29.68}} 	&	29.67
\\
\hline
\multirow{1}{*}{Lin}
&	31.17 	&	32.83 	&	32.62 	&	32.62 	&	32.45 	&	32.66 	&	31.74 	&	32.79 	&	\textcolor{blue}{\textbf{32.99}} 	&	32.77 	&	32.77 	&	\textcolor{red}{\textbf{33.00}} 	&	32.85 	&	29.22 	&	30.95 	&	30.80 	&	30.67 	&	30.76 	&	30.65 	&	29.63 	&	30.96 	&	31.06 	&	30.98 	&	30.85 	&	\textcolor{red}{\textbf{31.27}} 	&	\textcolor{blue}{\textbf{30.96}}
\\
\hline
\multirow{1}{*}{Monarch}
&	29.47 	&	30.35 	&	30.58 	&	30.49 	&	29.50 	&	30.52 	&	29.75 	&	30.68 	&	30.59 	&	30.31 	&	30.13 	&	\textcolor{red}{\textbf{31.18}} 	&	\textcolor{blue}{\textbf{31.00}} 	&	27.63 	&	28.36 	&	28.20 	&	28.36 	&	27.77 	&	28.38 	&	27.68 	&	28.49 	&	28.56 	&	28.27 	&	28.33 	&	\textcolor{red}{\textbf{28.94}} 	&	\textcolor{blue}{\textbf{28.79}}
\\
\hline
\multirow{1}{*}{Parrot}
&	30.95 	&	32.32 	&	32.29 	&	32.00 	&	31.85 	&	32.25 	&	31.24 	&	32.31 	&	\textcolor{blue}{\textbf{32.46}} 	&	32.18 	&	32.12 	&	\textcolor{red}{\textbf{32.66}} 	&	32.41 	&	28.97 	&	30.33 	&	30.30 	&	30.00 	&	29.88 	&	30.20 	&	29.33 	&	30.30 	&	\textcolor{blue}{\textbf{30.57}} 	&	30.26 	&	30.24 	&	\textcolor{red}{\textbf{30.65}} 	&	30.50
\\
\hline
\multirow{1}{*}{Plants}
&	31.38 	&	32.68 	&	32.72 	&	32.45 	&	32.28 	&	32.41 	&	31.27 	&	32.76 	&	\textcolor{blue}{\textbf{32.95}} 	&	32.57 	&	32.68 	&	\textcolor{red}{\textbf{33.04}} 	&	32.82 	&	29.09 	&	30.70 	&	30.56 	&	30.43 	&	30.41 	&	30.19 	&	28.99 	&	30.73 	&	30.86 	&	30.50 	&	30.64 	&	\textcolor{blue}{\textbf{30.82}} 	&	\textcolor{red}{\textbf{30.90}}
\\
\hline
\multirow{1}{*}{Starfish}
&	28.63 	&	29.67 	&	29.96 	&	29.58 	&	28.83 	&	29.85 	&	29.05 	&	29.84 	&	29.61 	&	29.74 	&	29.46 	&	\textcolor{red}{\textbf{30.30}} 	&	\textcolor{blue}{\textbf{30.02}} 	&	27.10 	&	27.65 	&	27.70 	&	27.52 	&	27.02 	&	27.69 	&	26.90 	&	27.67 	&	27.61 	&	27.61 	&	27.47 	&	\textcolor{red}{\textbf{28.08}} 	&	\textcolor{blue}{\textbf{27.95}}
\\
\hline
\multirow{1}{*}{\textbf{Average}}
&	30.42 	&	31.73 	&	31.84 	&	31.31 	&	31.17 	&	31.72 	&	30.69 	&	31.82 	&	31.87 	&	31.61 	&	31.59 	&	\textcolor{red}{\textbf{32.12}} 	&	\textcolor{blue}{\textbf{31.97}} 	&	28.42 	&	29.73 	&	29.73 	&	29.26 	&	29.34 	&	29.59 	&	28.51 	&	29.77 	&	29.86 	&	29.52 	&	29.65 	&	\textcolor{red}{\textbf{30.08}} 	&	\textcolor{blue}{\textbf{29.97}}
\\
\hline

\multicolumn{1}{|c|}{}&\multicolumn{13}{|c||}{${\bf \sigma}_n = 40$}&\multicolumn{13}{|c|}{${\bf \sigma}_n = 50$}\\
\hline						\multirow{1}{*}{\textbf{{Images}}}&\multirow{1}{*}{\textbf{{NNM}}}&\multirow{1}{*}{\textbf{{BM3D}}}&\multirow{1}{*}{\textbf{{LSSC}}}
&\multirow{1}{*}{\textbf{{EPLL}}}&\multirow{1}{*}{\textbf{{Plow}}}&\multirow{1}{*}{\textbf{{NCSR}}}&\multirow{1}{*}{\textbf{{GID}}}
&\multirow{1}{*}{\textbf{{PGPD}}}&\multirow{1}{*}{\textbf{{LINC}}}&\multirow{1}{*}{\textbf{{aGMM}}}&\multirow{1}{*}{\textbf{{OGLR}}}
&\multirow{1}{*}{\textbf{{WNNM}}}&\multirow{1}{*}{\textbf{{RRC}}}&\multirow{1}{*}{\textbf{{NNM}}}&\multirow{1}{*}{\textbf{{BM3D}}}
&\multirow{1}{*}{\textbf{{LSSC}}}&\multirow{1}{*}{\textbf{{EPLL}}}&\multirow{1}{*}{\textbf{{Plow}}}&\multirow{1}{*}{\textbf{{NCSR}}}
&\multirow{1}{*}{\textbf{{GID}}}&\multirow{1}{*}{\textbf{{PGPD}}}&\multirow{1}{*}{\textbf{{LINC}}}&\multirow{1}{*}{\textbf{{aGMM}}}
&\multirow{1}{*}{\textbf{{OGLR}}}&\multirow{1}{*}{\textbf{{WNNM}}}&\multirow{1}{*}{\textbf{{RRC}}}\\
\hline
\multirow{1}{*}{Airplane}
&	26.49 	&	26.88 	&	26.97 	&	27.08 	&	26.70 	&	26.78 	&	26.17 	&	27.12 	&	27.06 	&	26.95 	&	26.82 	&	\textcolor{red}{\textbf{27.38}} 	&	\textcolor{blue}{\textbf{27.21}} 	&	25.16 	&	25.76 	&	25.68 	&	25.96 	&	25.64 	&	25.63 	&	25.10 	&	25.98 	&	25.89 	&	25.83 	&	25.67 	&	\textcolor{blue}{\textbf{26.09}} 	&	\textcolor{red}{\textbf{26.13}}
\\
\hline
\multirow{1}{*}{Barbara}
&	26.97 	&	27.26 	&	27.35 	&	25.99 	&	27.59 	&	27.25 	&	25.77 	&	27.43 	&	27.71 	&	26.34 	&	27.42 	&	\textcolor{blue}{\textbf{27.86}} 	&	\textcolor{red}{\textbf{27.98}} 	&	25.66 	&	26.42 	&	26.26 	&	24.86 	&	26.42 	&	26.13 	&	24.52 	&	26.27 	&	26.31 	&	25.37 	&	26.17 	&	\textcolor{blue}{\textbf{26.78}} 	&	\textcolor{red}{\textbf{26.78}}
\\
\hline
\multirow{1}{*}{boats}
&	27.09 	&	27.76 	&	27.84 	&	27.42 	&	27.55 	&	27.52 	&	26.29 	&	\textcolor{blue}{\textbf{27.90}} 	&	27.80 	&	27.60 	&	27.69 	&	\textcolor{red}{\textbf{28.02}} 	&	27.89 	&	25.81 	&	26.74 	&	26.64 	&	26.31 	&	26.38 	&	26.37 	&	25.09 	&	\textcolor{blue}{\textbf{26.82}} 	&	26.63 	&	26.50 	&	26.41 	&	\textcolor{red}{\textbf{26.85}} 	&	26.81
\\
\hline
\multirow{1}{*}{Elaine}
&	28.17 	&	28.95 	&	28.79 	&	28.73 	&	28.84 	&	28.91 	&	27.67 	&	29.08 	&	28.87 	&	\textcolor{red}{\textbf{29.10}} 	&	29.00 	&	29.05 	&	\textcolor{blue}{\textbf{29.09}} 	&	26.86 	&	27.96 	&	27.51 	&	27.63 	&	27.62 	&	27.68 	&	26.54 	&	27.90 	&	27.60 	&	27.93 	&	27.55 	&	\textcolor{red}{\textbf{28.14}} 	&	\textcolor{blue}{\textbf{27.98}}
\\
\hline
\multirow{1}{*}{Fence}
&	26.47 	&	26.84 	&	26.89 	&	25.74 	&	26.42 	&	26.76 	&	25.61 	&	26.91 	&	26.93 	&	25.80 	&	26.72 	&	\textcolor{red}{\textbf{27.27}} 	&	\textcolor{blue}{\textbf{26.97}} 	&	25.22 	&	25.92 	&	25.87 	&	24.57 	&	25.49 	&	25.77 	&	24.41 	&	25.94 	&	25.86 	&	24.57 	&	25.52 	&	\textcolor{red}{\textbf{26.35}} 	&	\textcolor{blue}{\textbf{25.97}}
\\
\hline
\multirow{1}{*}{Flower}
&	26.12 	&	26.48 	&	26.63 	&	26.55 	&	26.44 	&	26.35 	&	25.60 	&	26.68 	&	26.70 	&	26.55 	&	26.58 	&	\textcolor{red}{\textbf{26.85}} 	&	\textcolor{blue}{\textbf{26.76}} 	&	24.90 	&	25.49 	&	25.49 	&	25.51 	&	25.40 	&	25.31 	&	24.42 	&	25.63 	&	25.52 	&	25.51 	&	25.42 	&	\textcolor{blue}{\textbf{25.67}} 	&	\textcolor{red}{\textbf{25.72}}
\\
\hline
\multirow{1}{*}{Foreman}
&	30.04 	&	31.29 	&	31.43 	&	30.28 	&	30.90 	&	31.52 	&	29.61 	&	31.55 	&	31.57 	&	30.95 	&	31.64 	&	\textcolor{blue}{\textbf{31.54}} 	&	\textcolor{red}{\textbf{32.02}} 	&	28.69 	&	30.36 	&	30.29 	&	29.20 	&	29.60 	&	30.41 	&	28.64 	&	30.45 	&	30.30 	&	29.80 	&	30.00 	&	\textcolor{blue}{\textbf{30.51}} 	&	\textcolor{red}{\textbf{30.87}}
\\
\hline
\multirow{1}{*}{House}
&	29.49 	&	30.65 	&	\textcolor{blue}{\textbf{31.10}} 	&	29.89 	&	30.25 	&	30.79 	&	29.02 	&	31.02 	&	31.04 	&	30.40 	&	30.68 	&	\textcolor{red}{\textbf{31.34}} 	&	31.09 	&	28.00 	&	29.69 	&	\textcolor{blue}{\textbf{29.99}} 	&	28.79 	&	28.99 	&	29.61 	&	27.76 	&	29.93 	&	29.96 	&	29.28 	&	29.17 	&	\textcolor{red}{\textbf{30.35}} 	&	29.92
\\
\hline
\multirow{1}{*}{J. Bean}
&	29.23 	&	30.21 	&	\textcolor{blue}{\textbf{30.75}} 	&	29.96 	&	29.97 	&	30.49 	&	29.53 	&	30.39 	&	30.39 	&	\textcolor{red}{\textbf{30.79}} 	&	30.45 	&	30.71 	&	30.68 	&	27.77 	&	29.26 	&	\textcolor{blue}{\textbf{29.42}} 	&	28.73 	&	28.66 	&	29.24 	&	28.43 	&	29.20 	&	29.07 	&	\textcolor{red}{\textbf{29.46}} 	&	28.94 	&	29.34 	&	29.38
\\
\hline
\multirow{1}{*}{Leaves}
&	25.91 	&	25.69 	&	26.04 	&	25.62 	&	25.45 	&	26.20 	&	25.82 	&	26.29 	&	26.35 	&	25.76 	&	26.06 	&	\textcolor{red}{\textbf{26.99}} 	&	\textcolor{blue}{\textbf{26.65}} 	&	24.22 	&	24.68 	&	24.78 	&	24.39 	&	24.28 	&	24.94 	&	24.41 	&	25.03 	&	25.06 	&	24.42 	&	24.63 	&	\textcolor{red}{\textbf{25.54}} 	&	\textcolor{blue}{\textbf{25.30}}
\\
\hline
\multirow{1}{*}{Lena}
&	27.36 	&	27.82 	&	28.07 	&	27.78 	&	27.78 	&	28.00 	&	26.98 	&	28.22 	&	28.21 	&	27.91 	&	28.04 	&	\textcolor{red}{\textbf{28.43}} 	&	\textcolor{blue}{\textbf{28.22}} 	&	26.15 	&	26.90 	&	27.03 	&	26.68 	&	26.70 	&	26.94 	&	25.82 	&	27.15 	&	27.02 	&	26.85 	&	26.78 	&	\textcolor{red}{\textbf{27.22}} 	&	\textcolor{blue}{\textbf{27.17}}
\\
\hline
\multirow{1}{*}{Lin}
&	28.63 	&	29.52 	&	29.44 	&	29.32 	&	29.40 	&	29.27 	&	28.44 	&	\textcolor{blue}{\textbf{29.73}} 	&	29.74 	&	29.69 	&	29.55 	&	\textcolor{red}{\textbf{29.78}} 	&	29.56 	&	27.33 	&	28.71 	&	28.45 	&	28.26 	&	28.31 	&	28.23 	&	27.50 	&	\textcolor{blue}{\textbf{28.79}} 	&	28.70 	&	28.66 	&	28.35 	&	\textcolor{red}{\textbf{28.86}} 	&	28.51
\\
\hline
\multirow{1}{*}{Monarch}
&	26.68 	&	26.72 	&	26.87 	&	26.89 	&	26.43 	&	26.81 	&	26.32 	&	27.02 	&	27.13 	&	26.87 	&	27.00 	&	\textcolor{red}{\textbf{27.50}} 	&	\textcolor{blue}{\textbf{27.34}} 	&	25.30 	&	25.82 	&	25.88 	&	25.78 	&	25.41 	&	25.73 	&	25.28 	&	26.00 	&	25.98 	&	25.82 	&	25.78 	&	\textcolor{red}{\textbf{26.29}} 	&	\textcolor{blue}{\textbf{26.22}}
\\
\hline

\multirow{1}{*}{Parrot}
&	28.30 	&	28.64 	&	28.92 	&	28.60 	&	28.38 	&	28.77 	&	28.01 	&	28.95 	&	\textcolor{blue}{\textbf{29.23}} 	&	28.88 	&	28.93 	&	\textcolor{red}{\textbf{29.33}} 	&	29.16 	&	26.77 	&	27.88 	&	27.77 	&	27.53 	&	27.26 	&	27.67 	&	26.79 	&	27.91 	&	\textcolor{blue}{\textbf{28.16}} 	&	27.80 	&	27.67 	&	\textcolor{red}{\textbf{28.29}} 	&	28.03
\\
\hline
\multirow{1}{*}{Plants}
&	28.39 	&	29.14 	&	29.03 	&	28.96 	&	28.90 	&	28.73 	&	27.75 	&	29.36 	&	29.36 	&	29.12 	&	29.27 	&	\textcolor{blue}{\textbf{29.28}} 	&	\textcolor{red}{\textbf{29.51}} 	&	27.05 	&	28.11 	&	27.80 	&	27.83 	&	27.75 	&	27.65 	&	26.69 	&	28.25 	&	28.08 	&	28.00 	&	27.94 	&	\textcolor{blue}{\textbf{28.18}} 	&	\textcolor{red}{\textbf{28.32}}
\\
\hline
\multirow{1}{*}{Starfish}
&	25.87 	&	26.06 	&	26.22 	&	26.12 	&	25.70 	&	26.17 	&	25.39 	&	26.21 	&	26.14 	&	26.16 	&	26.00 	&	\textcolor{blue}{\textbf{26.41}} 	&	\textcolor{red}{\textbf{26.43}} 	&	24.58 	&	25.04 	&	25.12 	&	25.05 	&	24.71 	&	25.06 	&	24.07 	&	25.11 	&	24.96 	&	25.09 	&	24.84 	&	\textcolor{blue}{\textbf{25.32}} 	&	\textcolor{red}{\textbf{25.34}}
\\
\hline
\multirow{1}{*}{\textbf{Average}}
&	27.58 	&	28.12 	&	28.27 	&	27.81 	&	27.92 	&	28.15 	&	27.12 	&	28.37 	&	28.39 	&	28.06 	&	28.24 	&	\textcolor{red}{\textbf{28.61}} 	&	\textcolor{blue}{\textbf{28.54}} 	&	26.22 	&	27.17 	&	27.12 	&	26.69 	&	26.79 	&	27.02 	&	25.97 	&	27.27 	&	27.19 	&	26.93 	&	26.93 	&	\textcolor{red}{\textbf{27.49}} 	&	\textcolor{blue}{\textbf{27.40}}
\\
\hline

\multicolumn{1}{|c|}{}&\multicolumn{13}{|c||}{${\bf \sigma}_n = 75$}&\multicolumn{13}{|c|}{${\bf \sigma}_n = 100$}\\
\hline						\multirow{1}{*}{\textbf{{Images}}}&\multirow{1}{*}{\textbf{{NNM}}}&\multirow{1}{*}{\textbf{{BM3D}}}&\multirow{1}{*}{\textbf{{LSSC}}}
&\multirow{1}{*}{\textbf{{EPLL}}}&\multirow{1}{*}{\textbf{{Plow}}}&\multirow{1}{*}{\textbf{{NCSR}}}&\multirow{1}{*}{\textbf{{GID}}}
&\multirow{1}{*}{\textbf{{PGPD}}}&\multirow{1}{*}{\textbf{{LINC}}}&\multirow{1}{*}{\textbf{{aGMM}}}&\multirow{1}{*}{\textbf{{OGLR}}}
&\multirow{1}{*}{\textbf{{WNNM}}}&\multirow{1}{*}{\textbf{{RRC}}}&\multirow{1}{*}{\textbf{{NNM}}}&\multirow{1}{*}{\textbf{{BM3D}}}
&\multirow{1}{*}{\textbf{{LSSC}}}&\multirow{1}{*}{\textbf{{EPLL}}}&\multirow{1}{*}{\textbf{{Plow}}}&\multirow{1}{*}{\textbf{{NCSR}}}
&\multirow{1}{*}{\textbf{{GID}}}&\multirow{1}{*}{\textbf{{PGPD}}}&\multirow{1}{*}{\textbf{{LINC}}}&\multirow{1}{*}{\textbf{{aGMM}}}
&\multirow{1}{*}{\textbf{{OGLR}}}&\multirow{1}{*}{\textbf{{WNNM}}}&\multirow{1}{*}{\textbf{{RRC}}}\\
\hline
\multirow{1}{*}{Airplane}
&	23.15 	&	23.99 	&	23.77 	&	24.03 	&	23.67 	&	23.76 	&	22.91 	&	\textcolor{blue}{\textbf{24.15}} 	&	23.78 	&	23.95 	&	23.79 	&	\textcolor{red}{\textbf{24.20}} 	&	24.10 	&	19.09 	&	22.89 	&	22.56 	&	22.78 	&	22.30 	&	22.60 	&	21.82 	&	\textcolor{red}{\textbf{23.02}} 	&	22.43 	&	22.67 	&	22.31 	&	22.93 	&	\textcolor{blue}{\textbf{22.93}}
\\
\hline
\multirow{1}{*}{Barbara}
&	23.58 	&	24.53 	&	24.31 	&	23.00 	&	24.30 	&	24.06 	&	22.43 	&	24.39 	&	23.97 	&	23.09 	&	24.52 	&	\textcolor{red}{\textbf{24.79}} 	&	\textcolor{blue}{\textbf{24.62}} 	&	22.01 	&	23.20 	&	22.95 	&	21.89 	&	22.86 	&	22.70 	&	21.40 	&	23.11 	&	22.57 	&	21.92 	&	22.73 	&	\textcolor{blue}{\textbf{23.26}} 	&	\textcolor{red}{\textbf{23.37}}
\\
\hline
\multirow{1}{*}{boats}
&	23.64 	&	24.82 	&	24.62 	&	24.33 	&	24.23 	&	24.44 	&	23.18 	&	\textcolor{blue}{\textbf{24.83}} 	&	24.45 	&	24.51 	&	24.40 	&	\textcolor{red}{\textbf{25.03}} 	&	24.79 	&	22.07 	&	\textcolor{blue}{\textbf{23.47}} 	&	23.22 	&	23.01 	&	22.69 	&	22.98 	&	21.91 	&	23.47 	&	22.97 	&	23.14 	&	22.74 	&	\textcolor{red}{\textbf{23.76}} 	&	23.38
\\
\hline
\multirow{1}{*}{Elaine}
&	24.52 	&	\textcolor{blue}{\textbf{25.93}} 	&	25.27 	&	25.60 	&	25.30 	&	25.34 	&	24.54 	&	25.78 	&	25.28 	&	25.77 	&	25.48 	&	\textcolor{red}{\textbf{25.94}} 	&	25.87 	&	22.87 	&	\textcolor{blue}{\textbf{24.48}} 	&	23.67 	&	24.16 	&	23.67 	&	23.77 	&	23.21 	&	24.34 	&	23.75 	&	24.30 	&	23.57 	&	\textcolor{red}{\textbf{24.54}} 	&	24.36
\\
\hline

\multirow{1}{*}{Fence}
&	23.22 	&	24.22 	&	24.04 	&	22.46 	&	23.57 	&	23.75 	&	22.17 	&	24.18 	&	23.80 	&	22.70 	&	23.94 	&	\textcolor{red}{\textbf{24.53}} 	&	\textcolor{blue}{\textbf{24.32}} 	&	21.62 	&	22.92 	&	22.71 	&	21.10 	&	22.17 	&	22.23 	&	19.99 	&	22.87 	&	22.49 	&	21.50 	&	22.36 	&	\textcolor{red}{\textbf{23.69}} 	&	\textcolor{blue}{\textbf{23.08}}
\\
\hline

\multirow{1}{*}{Flower}
&	22.93 	&	23.82 	&	23.53 	&	23.59 	&	23.47 	&	23.50 	&	22.72 	&	\textcolor{blue}{\textbf{23.82}} 	&	23.47 	&	23.53 	&	23.66 	&	\textcolor{red}{\textbf{23.88}} 	&	23.77 	&	21.38 	&	\textcolor{blue}{\textbf{22.66}} 	&	22.29 	&	22.39 	&	22.17 	&	22.22 	&	20.69 	&	22.65 	&	22.19 	&	22.31 	&	22.11 	&	\textcolor{red}{\textbf{22.70}} 	&	22.46
\\
\hline

\multirow{1}{*}{Foreman}
&	26.18 	&	28.07 	&	28.20 	&	27.24 	&	27.15 	&	28.18 	&	26.71 	&	28.39 	&	28.08 	&	27.67 	&	27.96 	&	\textcolor{blue}{\textbf{28.48}} 	&	\textcolor{red}{\textbf{28.83}} 	&	24.79 	&	26.51 	&	26.55 	&	25.91 	&	25.55 	&	26.55 	&	25.33 	&	26.81 	&	26.67 	&	26.20 	&	26.11 	&	\textcolor{red}{\textbf{27.39}} 	&	\textcolor{blue}{\textbf{27.27}}
\\
\hline

\multirow{1}{*}{House}
&	25.56 	&	27.51 	&	27.75 	&	26.70 	&	26.52 	&	27.16 	&	25.23 	&	27.81 	&	27.52 	&	27.11 	&	27.10 	&	\textcolor{red}{\textbf{28.47}} 	&	\textcolor{blue}{\textbf{27.98}} 	&	23.66 	&	25.87 	&	25.71 	&	25.21 	&	24.72 	&	25.49 	&	22.38 	&	26.17 	&	25.69 	&	25.55 	&	25.07 	&	\textcolor{red}{\textbf{26.79}} 	&	\textcolor{blue}{\textbf{26.38}}
\\
\hline

\multirow{1}{*}{J. Bean}
&	25.23 	&	\textcolor{red}{\textbf{27.22}} 	&	\textcolor{blue}{\textbf{27.20}} 	&	26.57 	&	26.23 	&	27.15 	&	26.26 	&	27.07 	&	26.64 	&	27.09 	&	26.48 	&	27.20 	&	27.17 	&	23.73 	&	\textcolor{red}{\textbf{25.80}} 	&	25.64 	&	25.16 	&	24.55 	&	25.61 	&	24.37 	&	25.66 	&	24.96 	&	25.58 	&	24.57 	&	25.64 	&	\textcolor{blue}{\textbf{25.71}}
\\
\hline

\multirow{1}{*}{Leaves}
&	21.79 	&	22.49 	&	22.16 	&	22.03 	&	22.02 	&	22.60 	&	21.28 	&	22.61 	&	22.49 	&	21.96 	&	22.20 	&	\textcolor{red}{\textbf{23.10}} 	&	\textcolor{blue}{\textbf{22.92}} 	&	19.57 	&	20.90 	&	20.54 	&	20.26 	&	20.43 	&	20.84 	&	19.13 	&	20.95 	&	20.44 	&	20.29 	&	20.28 	&	\textcolor{red}{\textbf{21.55}} 	&	\textcolor{blue}{\textbf{21.22}}
\\
\hline

\multirow{1}{*}{Lena}
&	24.08 	&	25.17 	&	25.04 	&	24.75 	&	24.64 	&	25.02 	&	23.78 	&	25.30 	&	25.01 	&	25.02 	&	24.90 	&	\textcolor{red}{\textbf{25.38}} 	&	\textcolor{blue}{\textbf{25.33}} 	&	22.30 	&	23.87 	&	23.70 	&	23.46 	&	23.19 	&	23.63 	&	22.43 	&	24.02 	&	23.71 	&	23.73 	&	23.18 	&	\textcolor{blue}{\textbf{24.08}} 	&	\textcolor{red}{\textbf{24.14}}
\\
\hline

\multirow{1}{*}{Lin}
&	25.07 	&	\textcolor{blue}{\textbf{26.96}} 	&	26.53 	&	26.36 	&	26.08 	&	26.22 	&	25.50 	&	\textcolor{red}{\textbf{27.05}} 	&	26.91 	&	26.76 	&	26.36 	&	26.94 	&	26.86 	&	23.36 	&	25.60 	&	25.04 	&	25.05 	&	24.47 	&	24.85 	&	24.14 	&	\textcolor{blue}{\textbf{25.66}} 	&	25.50 	&	25.30 	&	24.63 	&	\textcolor{red}{\textbf{25.67}} 	&	25.50
\\
\hline

\multirow{1}{*}{Monarch}
&	23.06 	&	23.91 	&	23.66 	&	23.73 	&	23.34 	&	23.67 	&	22.77 	&	24.00 	&	23.81 	&	23.85 	&	23.73 	&	\textcolor{blue}{\textbf{24.15}} 	&	\textcolor{red}{\textbf{24.24}} 	&	21.03 	&	22.52 	&	22.24 	&	22.24 	&	21.83 	&	22.10 	&	20.73 	&	22.56 	&	22.10 	&	22.42 	&	21.87 	&	\textcolor{red}{\textbf{22.87}} 	&	\textcolor{blue}{\textbf{22.76}}
\\
\hline

\multirow{1}{*}{Parrot}
&	24.54 	&	25.94 	&	25.63 	&	25.56 	&	25.15 	&	25.45 	&	24.87 	&	25.98 	&	26.01 	&	25.72 	&	25.74 	&	\textcolor{red}{\textbf{26.32}} 	&	\textcolor{blue}{\textbf{26.22}} 	&	22.84 	&	24.60 	&	24.19 	&	24.08 	&	23.65 	&	23.94 	&	23.54 	&	24.52 	&	24.30 	&	24.26 	&	24.03 	&	\textcolor{red}{\textbf{24.85}} 	&	\textcolor{blue}{\textbf{24.83}}
\\
\hline

\multirow{1}{*}{Plants}
&	24.80 	&	26.25 	&	25.84 	&	25.90 	&	25.57 	&	25.75 	&	24.96 	&	26.33 	&	25.94 	&	26.05 	&	25.89 	&	\textcolor{blue}{\textbf{26.26}} 	&	\textcolor{red}{\textbf{26.40}} 	&	22.27 	&	\textcolor{blue}{\textbf{24.98}} 	&	24.41 	&	24.65 	&	24.14 	&	24.46 	&	23.86 	&	\textcolor{red}{\textbf{25.06}} 	&	24.54 	&	24.75 	&	24.30 	&	24.87 	&	24.91
\\
\hline

\multirow{1}{*}{Starfish}
&	22.52 	&	23.27 	&	23.12 	&	23.17 	&	22.82 	&	23.18 	&	21.99 	&	23.23 	&	22.83 	&	23.22 	&	23.00 	&	\textcolor{blue}{\textbf{23.25}} 	&	\textcolor{red}{\textbf{23.32}} 	&	20.97 	&	\textcolor{red}{\textbf{22.10}} 	&	21.77 	&	21.92 	&	21.48 	&	21.91 	&	20.84 	&	\textcolor{blue}{\textbf{22.08}} 	&	21.31 	&	21.95 	&	21.52 	&	22.03 	&	21.98
\\
\hline

\multirow{1}{*}{\textbf{Average}}
&	23.99 	&	25.26 	&	25.04 	&	24.69 	&	24.63 	&	24.95 	&	23.83 	&	25.31 	&	25.00 	&	24.87 	&	24.95 	&	\textcolor{red}{\textbf{25.49}} 	&	\textcolor{blue}{\textbf{25.42}} 	&	22.10 	&	23.90 	&	23.57 	&	23.33 	&	23.12 	&	23.49 	&	22.24 	&	23.94 	&	23.48 	&	23.49 	&	23.21 	&	\textcolor{red}{\textbf{24.16}} 	&	\textcolor{blue}{\textbf{24.02}}
\\
\hline

\end{tabular}
}
\label{Tab:1}
\vspace{-2mm}
\end{table*}

\section {\bihan{Connection to Group Sparse Representation}}
 \label{sec:5}
\bihan{In this section, we provide  an analytical investigation  on the connection between the proposed RRC model and the popular group-based sparse representation (GSR) model \cite{28,29,72,27}.
More specifically, we show the equivalence of the proposed RRC and the GSR using a specific method to construct the group-wise dictionaries, \ie, the group sparsity residual constraint (GSRC) model \cite{49,50,51,52,53}.}

\subsection{Group-based Sparse Representation}
 \label{sec:5.1}
\bihan{Different from the patch-based sparse representation, \eg, K-SVD \cite{54}, the GSR \cite{27,28,29,72} models $n$ groups of similar patches extracted from the image $\textbf{\emph{x}}$, and represent each group using a data matrix ${\textbf{\emph{X}}}^\star_i\in{\mathbb R}^{d\times m}$. GSR models each data matrix ${\textbf{\emph{X}}}^\star_i$ using a group-wise sparse representation as
}
\begin{equation}
{\hat{\textbf{\emph{B}}}}_i^\star=\arg\min_{{\textbf{\emph{B}}}_i^\star} \left(\frac{1}{2}\left\|{\textbf{\emph{X}}}_i^\star-{\textbf{\emph{D}}}_i{\textbf{\emph{B}}}_i^\star\right\|_F^2+\lambda\left\|{\textbf{\emph{B}}}_i^\star\right\|_1\right) \;\; \forall i,
\label{eq:37}
\end{equation} 
\bihan{where each ${\textbf{\emph{B}}}_i^\star$ is the group sparse coefficient for ${\textbf{\emph{X}}}_i^\star$ and ${\textbf{\emph{D}}}_{i}$ represents the dictionary, which is usually learned from each group \cite{28,49,53}. The $\ell_1$-norm is initially imposed on each column of ${\textbf{\emph{B}}}_i^\star$, and here extended to be the $\ell_1$-norm on matrix.}

\bihan{For image restoration tasks,}
the GSR model can be applied to recover the group data matrices $\left\{ \textbf{\emph{X}}_i^\star \right \}$ from their  {\em degraded} \bihan{measurements $\left\{ \textbf{\emph{Y}}_i \right \}$ by solving the following problem,}
\begin{equation}
{\hat{\textbf{\emph{A}}}_i}=\arg\min_{{\textbf{\emph{A}}}_i} \left(\frac{1}{2}\left\|{\textbf{\emph{Y}}}_i-{\textbf{\emph{D}}}_i{\textbf{\emph{A}}}_i\right\|_F^2+\lambda\left\|{\textbf{\emph{A}}}_i\right\|_1\right)
\;\; \forall i.
\label{eq:38}
\end{equation} 
where ${\textbf{\emph{A}}}_i$ \bihan{denotes} the group \bihan{sparse coefficient} of each group ${\textbf{\emph{Y}}}_i$.
Once all group  sparse codes $\{{\textbf{\emph{A}}}_i\}_{i=1}^n$ are obtained, the underlying image $\hat{\textbf{\emph{x}}}$ can be reconstructed as   $\hat{\textbf{\emph{x}}}= \textbf{\emph{D}}\circ \textbf{\emph{A}} $, where $\textbf{\emph{D}}$ is the \bihan{global} dictionary to sparsely represent all the groups of the degraded image $\textbf{\emph{y}}$, $\textbf{\emph{A}}$ denotes a set of $\{{\textbf{\emph{A}}}_i\}_{i=1}^n$ and $\circ$ is an operator \cite{28}.

\bihan{In practice, one would like to approximate the oracle group sparse coefficient $\textbf{\emph{B}}_i^\star$ using the group sparse coefficient $\textbf{\emph{A}}_i$ based on the corrupted measurement ${\textbf{\emph{Y}}}_i$.
The quality of the approximate heavily depends on the the difference between $\textbf{\emph{B}}_i^\star$ and $\textbf{\emph{A}}_i$, \ie, the {\em group sparsity residual}, which we define as the following,
}
\begin{equation}
{{\textbf{\emph{R}}}}_i^\star={{\textbf{\emph{A}}}}_i-{{\textbf{\emph{B}}}}^\star_i,
\label{eq:39}
\end{equation} 


Similar to the proposed RRC model, in real applications, ${\textbf{\emph{X}}}^\star_i$ is inaccessible and we thus employ an estimate of it, denoted by ${\textbf{\emph{X}}}'_i$. Given ${\textbf{\emph{X}}}'_i$ and the dictionary ${\textbf{\emph{D}}}_{i}$, the group sparse coefficient ${\textbf{\emph{B}}}_i$ for each group ${\textbf{\emph{X}}}'_i$ is solved by
\begin{equation}
\hat{\textbf{\emph{B}}}_i=\arg\min_{{\textbf{\emph{B}}}_i} \left(\frac{1}{2}\left\|{\textbf{\emph{X}}}_i'-{\textbf{\emph{D}}}_i{\textbf{\emph{B}}}_i\right\|_F^2+\lambda\left\|{\textbf{\emph{B}}}_i\right\|_1\right),
\label{eq:40}
\end{equation} 

\bihan{As both $\textbf{\emph{A}}_i$ and $\textbf{\emph{B}}_i$ are to approximate the underlying oracle $\textbf{\emph{B}}_i^\star$, the difference between them needs to be minimized to achieve accurate approximation \cite{49}.
Therefore, the enhanced sparse representation problem based on the {\em group sparse residual constraint} (GSRC) model \cite{49,50,51,52,53} is the following,
}
\begin{equation}
\hat{\textbf{\emph{A}}}_i=\arg\min_{{\textbf{\emph{A}}}_i} \left(\frac{1}{2}\left\|{\textbf{\emph{Y}}}_i-{\textbf{\emph{D}}}_i{{\textbf{\emph{A}}}_i}\right\|_F^2+\lambda\left\|{{\textbf{\emph{A}}}_i}-{{\textbf{\emph{B}}}_i}\right\|_1\right).
\label{eq:41}
\end{equation} 

\begin{table*}[!t]
\vspace{-4mm}
\caption{{SSIM comparison of NNM, BM3D \cite{26}, LSSC \cite{27}, EPLL  \cite{56}, Plow \cite{57}, NCSR \cite{49}, GID \cite{58}, PGPD \cite{29}, LINC \cite{59}, aGMM \cite{60}, OGLR \cite{61}, WNNM \cite{5} and RRC for image denoising.}}
\resizebox{1\textwidth}{!}				
{
\large

\centering  
\begin{tabular}{|c|c|c|c|c|c|c|c|c|c|c|c|c|c||c|c|c|c|c|c|c|c|c|c|c|c|c|c|c|c|c|c|c|}
\hline
\multicolumn{1}{|c|}{}&\multicolumn{13}{|c||}{${\bf \sigma}_n = 20$}&\multicolumn{13}{|c|}{${\bf \sigma}_n = 30$}\\
\hline						\multirow{1}{*}{\textbf{{Images}}}&\multirow{1}{*}{\textbf{{NNM}}}&\multirow{1}{*}{\textbf{{BM3D}}}&\multirow{1}{*}{\textbf{{LSSC}}}
&\multirow{1}{*}{\textbf{{EPLL}}}&\multirow{1}{*}{\textbf{{Plow}}}&\multirow{1}{*}{\textbf{{NCSR}}}&\multirow{1}{*}{\textbf{{GID}}}
&\multirow{1}{*}{\textbf{{PGPD}}}&\multirow{1}{*}{\textbf{{LINC}}}&\multirow{1}{*}{\textbf{{aGMM}}}&\multirow{1}{*}{\textbf{{OGLR}}}
&\multirow{1}{*}{\textbf{{WNNM}}}&\multirow{1}{*}{\textbf{{RRC}}}&\multirow{1}{*}{\textbf{{NNM}}}&\multirow{1}{*}{\textbf{{BM3D}}}
&\multirow{1}{*}{\textbf{{LSSC}}}&\multirow{1}{*}{\textbf{{EPLL}}}&\multirow{1}{*}{\textbf{{Plow}}}&\multirow{1}{*}{\textbf{{NCSR}}}
&\multirow{1}{*}{\textbf{{GID}}}&\multirow{1}{*}{\textbf{{PGPD}}}&\multirow{1}{*}{\textbf{{LINC}}}&\multirow{1}{*}{\textbf{{aGMM}}}
&\multirow{1}{*}{\textbf{{OGLR}}}&\multirow{1}{*}{\textbf{{WNNM}}}&\multirow{1}{*}{\textbf{{RRC}}}\\
\hline
\multirow{1}{*}{Airplane}
&	0.8486 	&	0.9006 	&	0.9025 	&	0.9017 	&	0.8928 	&	0.9016 	&	0.8837 	&	0.8992 	&	0.9010 	&	0.9018 	&	0.8964 	&	\textcolor{blue}{\textbf{0.9051}} 	&	\textcolor{red}{\textbf{0.9053}} 	&	0.7441 	&	0.8631 	&	0.8669 	&	0.8628 	&	0.8532 	&	0.8660 	&	0.8449 	&	0.8646 	&	0.8654 	&	0.8647 	&	0.8588 	&	\textcolor{blue}{\textbf{0.8687}} 	&	\textcolor{red}{\textbf{0.8716}}
\\
\hline
\multirow{1}{*}{Barbara}
&	0.8770 	&	0.9099 	&	0.9017 	&	0.8864 	&	0.9002 	&	0.9073 	&	0.8758 	&	0.9051 	&	\textcolor{red}{\textbf{0.9191}} 	&	0.8920 	&	0.9036 	&	0.9133 	&	\textcolor{blue}{\textbf{0.9149}} 	&	0.7924 	&	0.8618 	&	0.8515 	&	0.8209 	&	0.8597 	&	0.8524 	&	0.8063 	&	0.8565 	&	0.8709 	&	0.8129 	&	0.8573 	&	\textcolor{blue}{\textbf{0.8690}} 	&	\textcolor{red}{\textbf{0.8736}}
\\
\hline
\multirow{1}{*}{boats}
&	0.8502 	&	0.8890 	&	0.8863 	&	0.8805 	&	0.8766 	&	0.8831 	&	0.8452 	&	0.8852 	&	\textcolor{red}{\textbf{0.8946}} 	&	0.8821 	&	0.8857 	&	\textcolor{blue}{\textbf{0.8937}} 	&	0.8877 	&	0.7571 	&	0.8424 	&	0.8403 	&	0.8317 	&	0.8289 	&	0.8346 	&	0.7934 	&	0.8404 	&	\textcolor{blue}{\textbf{0.8440}} 	&	0.8322 	&	0.8357 	&	\textcolor{red}{\textbf{0.8493}} 	&	0.8409
\\
\hline
\multirow{1}{*}{Elaine}
&	0.8673 	&	0.8900 	&	0.8879 	&	0.8813 	&	0.8871 	&	0.8880 	&	0.8645 	&	0.8867 	&	\textcolor{red}{\textbf{0.8915}} 	&	\textcolor{blue}{\textbf{0.8914}} 	&	0.8881 	&	0.8900 	&	0.8911 	&	0.7457 	&	0.8530 	&	0.8484 	&	0.8399 	&	0.8461 	&	0.8498 	&	0.8076 	&	0.8522 	&	0.8533 	&	0.8542 	&	0.8524 	&	\textcolor{blue}{\textbf{0.8549}} 	&	\textcolor{red}{\textbf{0.8580}}
\\
\hline
\multirow{1}{*}{Fence}
&	0.8378 	&	0.8762 	&	\textcolor{red}{\textbf{0.8836}} 	&	0.8698 	&	0.8561 	&	0.8767 	&	0.8501 	&	0.8714 	&	\textcolor{blue}{\textbf{0.8830}} 	&	0.8621 	&	0.8807 	&	0.8798 	&	0.8698 	&	0.7785 	&	0.8326 	&	\textcolor{blue}{\textbf{0.8364}} 	&	0.8150 	&	0.8182 	&	0.8298 	&	0.7947 	&	0.8255 	&	0.8291 	&	0.8021 	&	0.8344 	&	\textcolor{red}{\textbf{0.8366}} 	&	0.8246
\\
\hline
\multirow{1}{*}{Flower}
&	0.8280 	&	0.8751 	&	\textcolor{blue}{\textbf{0.8822}} 	&	0.8780 	&	0.8597 	&	0.8743 	&	0.8475 	&	0.8765 	&	\textcolor{red}{\textbf{0.8840}} 	&	0.8711 	&	0.8764 	&	0.8810 	&	0.8754 	&	0.7448 	&	0.8194 	&	0.8222 	&	0.8210 	&	0.8116 	&	0.8156 	&	0.7766 	&	0.8213 	&	\textcolor{red}{\textbf{0.8295}} 	&	0.8148 	&	0.8203 	&	\textcolor{blue}{\textbf{0.8282}} 	&	0.8240
\\
\hline
\multirow{1}{*}{Foreman}
&	0.8664 	&	0.9076 	&	0.9035 	&	0.8955 	&	0.9023 	&	0.9065 	&	0.8900 	&	0.9023 	&	0.9104 	&	0.9055 	&	0.9048 	&	\textcolor{blue}{\textbf{0.9109}} 	&	\textcolor{red}{\textbf{0.9116}} 	&	0.7216 	&	0.8823 	&	0.8826 	&	0.8617 	&	0.8698 	&	0.8846 	&	0.8551 	&	0.8818 	&	\textcolor{blue}{\textbf{0.8921}} 	&	0.8766 	&	0.8789 	&	0.8851 	&	\textcolor{red}{\textbf{0.8952}}
\\
\hline
\multirow{1}{*}{House}
&	0.8325 	&	0.8726 	&	\textcolor{red}{\textbf{0.8844}} 	&	0.8609 	&	0.8710 	&	0.8735 	&	0.8563 	&	0.8693 	&	0.8676 	&	0.8646 	&	\textcolor{blue}{\textbf{0.8775}} 	&	0.8727 	&	0.8663 	&	0.7118 	&	0.8480 	&	\textcolor{red}{\textbf{0.8566}} 	&	0.8338 	&	0.8383 	&	0.8479 	&	0.8243 	&	0.8471 	&	0.8501 	&	0.8435 	&	0.8448 	&	\textcolor{blue}{\textbf{0.8535}} 	&	0.8527
\\
\hline
\multirow{1}{*}{J. Bean}
&	0.8904 	&	0.9582 	&	0.9594 	&	0.9523 	&	0.9554 	&	0.9632 	&	0.9577 	&	0.9508 	&	0.9617 	&	\textcolor{blue}{\textbf{0.9632}} 	&	0.9592 	&	0.9617 	&	\textcolor{red}{\textbf{0.9644}} 	&	0.7572 	&	0.9357 	&	\textcolor{blue}{\textbf{0.9459}} 	&	0.9240 	&	0.9204 	&	0.9435 	&	0.9338 	&	0.9317 	&	0.9443 	&	0.9413 	&	0.9361 	&	0.9406 	&	\textcolor{red}{\textbf{0.9482}}
\\
\hline
\multirow{1}{*}{Leaves}
&	0.9360 	&	0.9534 	&	0.9566 	&	0.9480 	&	0.9376 	&	0.9555 	&	0.9493 	&	0.9562 	&	0.9555 	&	0.9559 	&	0.9521 	&	\textcolor{red}{\textbf{0.9635}} 	&	\textcolor{blue}{\textbf{0.9599}} 	&	0.8780 	&	0.9278 	&	0.9209 	&	0.9197 	&	0.9057 	&	0.9311 	&	0.9248 	&	0.9300 	&	0.9311 	&	0.9273 	&	0.9266 	&	\textcolor{red}{\textbf{0.9421}} 	&	\textcolor{blue}{\textbf{0.9366}}
\\
\hline
\multirow{1}{*}{Lena}
&	0.8597 	&	0.8985 	&	0.8998 	&	0.8913 	&	0.8891 	&	0.8979 	&	0.8679 	&	0.8981 	&	\textcolor{red}{\textbf{0.9058}} 	&	0.8960 	&	0.8944 	&	0.8998 	&	\textcolor{blue}{\textbf{0.9020}} 	&	0.7543 	&	0.8584 	&	0.8593 	&	0.8477 	&	0.8493 	&	0.8580 	&	0.8185 	&	0.8622 	&	\textcolor{red}{\textbf{0.8703}} 	&	0.8548 	&	0.8560 	&	0.8643 	&	\textcolor{blue}{\textbf{0.8672}}
\\
\hline
\multirow{1}{*}{Lin}
&	0.8404 	&	\textcolor{blue}{\textbf{0.9017}} 	&	0.8931 	&	0.8942 	&	0.8982 	&	0.8983 	&	0.8773 	&	0.8910 	&	\textcolor{red}{\textbf{0.9018}} 	&	0.8957 	&	0.8990 	&	0.9001 	&	0.8988 	&	0.7055 	&	0.8672 	&	0.8611 	&	0.8546 	&	0.8588 	&	0.8632 	&	0.8287 	&	0.8606 	&	0.8669 	&	0.8634 	&	0.8592 	&	\textcolor{red}{\textbf{0.8719}} 	&	\textcolor{blue}{\textbf{0.8702}}
\\
\hline
\multirow{1}{*}{Monarch}
&	0.8921 	&	0.9179 	&	0.9186 	&	0.9166 	&	0.9097 	&	0.9192 	&	0.9027 	&	0.9187 	&	0.9230 	&	0.9202 	&	0.9171 	&	\textcolor{blue}{\textbf{0.9258}} 	&	\textcolor{red}{\textbf{0.9263}} 	&	0.7980 	&	0.8822 	&	0.8803 	&	0.8789 	&	0.8714 	&	0.8829 	&	0.8628 	&	0.8853 	&	0.8914 	&	0.8831 	&	0.8831 	&	\textcolor{blue}{\textbf{0.8950}} 	&	\textcolor{red}{\textbf{0.8954}}
\\
\hline
\multirow{1}{*}{Parrot}
&	0.8568 	&	\textcolor{blue}{\textbf{0.9002}} 	&	0.8951 	&	0.8924 	&	0.8952 	&	0.8995 	&	0.8824 	&	0.8945 	&	\textcolor{red}{\textbf{0.9020}} 	&	0.8951 	&	0.8941 	&	0.8993 	&	0.9001 	&	0.7337 	&	0.8705 	&	0.8669 	&	0.8569 	&	0.8617 	&	0.8705 	&	0.8435 	&	0.8681 	&	\textcolor{blue}{\textbf{0.8754}} 	&	0.8671 	&	0.8609 	&	0.8745 	&	\textcolor{red}{\textbf{0.8765}}
\\
\hline
\multirow{1}{*}{Plants}
&	0.8416 	&	0.8811 	&	0.8795 	&	0.8744 	&	0.8743 	&	0.8753 	&	0.8506 	&	0.8790 	&	\textcolor{blue}{\textbf{0.8867}} 	&	0.8773 	&	0.8803 	&	\textcolor{red}{\textbf{0.8875}} 	&	0.8813 	&	0.7141 	&	0.8373 	&	0.8330 	&	0.8278 	&	0.8270 	&	0.8273 	&	0.7947 	&	0.8370 	&	\textcolor{blue}{\textbf{0.8447}} 	&	0.8314 	&	0.8352 	&	0.8437 	&	\textcolor{red}{\textbf{0.8459}}
\\
\hline
\multirow{1}{*}{Starfish}
&	0.8509 	&	0.8748 	&	\textcolor{blue}{\textbf{0.8774}} 	&	0.8756 	&	0.8561 	&	0.8748 	&	0.8546 	&	0.8756 	&	0.8694 	&	0.8756 	&	0.8676 	&	\textcolor{red}{\textbf{0.8831}} 	&	0.8720 	&	0.7725 	&	0.8289 	&	0.8238 	&	0.8248 	&	0.8075 	&	0.8283 	&	0.8028 	&	0.8277 	&	0.8234 	&	0.8263 	&	0.8195 	&	\textcolor{red}{\textbf{0.8382}} 	&	\textcolor{blue}{\textbf{0.8304}}
\\
\hline
\multirow{1}{*}{\textbf{Average}}
&	0.8610 	&	0.9004 	&	0.9007 	&	0.8937 	&	0.8913 	&	0.8997 	&	0.8785 	&	0.8975 	&	\textcolor{red}{\textbf{0.9036}} 	&	0.8969 	&	0.8986 	&	\textcolor{blue}{\textbf{0.9042}} 	&	0.9017 	&	0.7568 	&	0.8632 	&	0.8623 	&	0.8513 	&	0.8517 	&	0.8616 	&	0.8320 	&	0.8620 	&	0.8676 	&	0.8560 	&	0.8600 	&	\textcolor{red}{\textbf{0.8697}} 	&	\textcolor{blue}{\textbf{0.8694}}
\\
\hline

\multicolumn{1}{|c|}{}&\multicolumn{13}{|c||}{${\bf \sigma}_n = 40$}&\multicolumn{13}{|c|}{${\bf \sigma}_n = 50$}\\
\hline						\multirow{1}{*}{\textbf{{Images}}}&\multirow{1}{*}{\textbf{{NNM}}}&\multirow{1}{*}{\textbf{{BM3D}}}&\multirow{1}{*}{\textbf{{LSSC}}}
&\multirow{1}{*}{\textbf{{EPLL}}}&\multirow{1}{*}{\textbf{{Plow}}}&\multirow{1}{*}{\textbf{{NCSR}}}&\multirow{1}{*}{\textbf{{GID}}}
&\multirow{1}{*}{\textbf{{PGPD}}}&\multirow{1}{*}{\textbf{{LINC}}}&\multirow{1}{*}{\textbf{{aGMM}}}&\multirow{1}{*}{\textbf{{OGLR}}}
&\multirow{1}{*}{\textbf{{WNNM}}}&\multirow{1}{*}{\textbf{{RRC}}}&\multirow{1}{*}{\textbf{{NNM}}}&\multirow{1}{*}{\textbf{{BM3D}}}
&\multirow{1}{*}{\textbf{{LSSC}}}&\multirow{1}{*}{\textbf{{EPLL}}}&\multirow{1}{*}{\textbf{{Plow}}}&\multirow{1}{*}{\textbf{{NCSR}}}
&\multirow{1}{*}{\textbf{{GID}}}&\multirow{1}{*}{\textbf{{PGPD}}}&\multirow{1}{*}{\textbf{{LINC}}}&\multirow{1}{*}{\textbf{{aGMM}}}
&\multirow{1}{*}{\textbf{{OGLR}}}&\multirow{1}{*}{\textbf{{WNNM}}}&\multirow{1}{*}{\textbf{{RRC}}}\\
\hline
\multirow{1}{*}{Airplane}
&	0.7439 	&	0.8277 	&	0.8372 	&	0.8264 	&	0.8122 	&	0.8330 	&	0.8120 	&	0.8345 	&	0.8333 	&	0.8305 	&	0.8289 	&	\textcolor{blue}{\textbf{0.8378}} 	&	\textcolor{red}{\textbf{0.8429}} 	&	0.6839 	&	0.8044 	&	0.8074 	&	0.7922 	&	0.7698 	&	0.8066 	&	0.7795 	&	0.8059 	&	0.8054 	&	0.7990 	&	0.7848 	&	\textcolor{blue}{\textbf{0.8104}} 	&	\textcolor{red}{\textbf{0.8172}}
\\
\hline
\multirow{1}{*}{Barbara}
&	0.7639 	&	0.8070 	&	0.8018 	&	0.7533 	&	0.8141 	&	0.8006 	&	0.7381 	&	0.8077 	&	0.8209 	&	0.7453 	&	0.8172 	&	\textcolor{blue}{\textbf{0.8240}} 	&	\textcolor{red}{\textbf{0.8308}} 	&	0.7004 	&	0.7698 	&	0.7596 	&	0.6943 	&	0.7663 	&	0.7572 	&	0.6769 	&	0.7613 	&	0.7655 	&	0.7021 	&	0.7630 	&	\textcolor{red}{\textbf{0.7883}} 	&	\textcolor{blue}{\textbf{0.7872}}
\\
\hline
\multirow{1}{*}{boats}
&	0.7412 	&	0.7997 	&	0.7977 	&	0.7888 	&	0.7832 	&	0.7906 	&	0.7441 	&	0.8021 	&	0.8021 	&	0.7909 	&	0.7971 	&	\textcolor{red}{\textbf{0.8087}} 	&	\textcolor{blue}{\textbf{0.8047}} 	&	0.6830 	&	0.7667 	&	0.7569 	&	0.7504 	&	0.7396 	&	0.7541 	&	0.7054 	&	0.7683 	&	0.7659 	&	0.7544 	&	0.7477 	&	\textcolor{blue}{\textbf{0.7702}} 	&	\textcolor{red}{\textbf{0.7738}}
\\
\hline
\multirow{1}{*}{Elaine}
&	0.7503 	&	0.8180 	&	0.8123 	&	0.8047 	&	0.8070 	&	0.8258 	&	0.7838 	&	0.8223 	&	0.8206 	&	0.8209 	&	0.8198 	&	\textcolor{blue}{\textbf{0.8259}} 	&	\textcolor{red}{\textbf{0.8284}} 	&	0.6982 	&	0.7971 	&	0.7799 	&	0.7741 	&	0.7699 	&	0.7976 	&	0.7494 	&	0.7926 	&	0.7902 	&	0.7890 	&	0.7638 	&	\textcolor{blue}{\textbf{0.7995}} 	&	\textcolor{red}{\textbf{0.8022}}
\\
\hline
\multirow{1}{*}{Fence}
&	0.7536 	&	0.7961 	&	\textcolor{blue}{\textbf{0.7978}} 	&	0.7640 	&	0.7828 	&	0.7805 	&	0.7514 	&	0.7908 	&	0.7868 	&	0.7496 	&	0.7975 	&	\textcolor{red}{\textbf{0.8011}} 	&	0.7879 	&	0.6988 	&	\textcolor{blue}{\textbf{0.7621}} 	&	0.7588 	&	0.7162 	&	0.7496 	&	0.7476 	&	0.7051 	&	0.7573 	&	0.7492 	&	0.7010 	&	0.7565 	&	\textcolor{red}{\textbf{0.7716}} 	&	0.7561
\\
\hline
\multirow{1}{*}{Flower}
&	0.7155 	&	0.7696 	&	0.7729 	&	0.7710 	&	0.7605 	&	0.7621 	&	0.7140 	&	0.7738 	&	0.7773 	&	0.7672 	&	0.7779 	&	\textcolor{red}{\textbf{0.7810}} 	&	\textcolor{blue}{\textbf{0.7797}} 	&	0.6526 	&	0.7283 	&	0.7292 	&	0.7273 	&	0.7122 	&	0.7217 	&	0.6430 	&	0.7324 	&	0.7293 	&	0.7250 	&	0.7224 	&	\textcolor{blue}{\textbf{0.7400}} 	&	\textcolor{red}{\textbf{0.7413}}
\\
\hline
\multirow{1}{*}{Foreman}
&	0.7532 	&	0.8565 	&	0.8641 	&	0.8315 	&	0.8354 	&	0.8723 	&	0.8262 	&	0.8621 	&	\textcolor{blue}{\textbf{0.8736}} 	&	0.8515 	&	0.8610 	&	0.8611 	&	\textcolor{red}{\textbf{0.8780}} 	&	0.6983 	&	0.8445 	&	0.8438 	&	0.8051 	&	0.7976 	&	0.8559 	&	0.8080 	&	0.8410 	&	0.8542 	&	0.8270 	&	0.8198 	&	\textcolor{blue}{\textbf{0.8523}} 	&	\textcolor{red}{\textbf{0.8611}}
\\
\hline
\multirow{1}{*}{House}
&	0.7342 	&	0.8256 	&	0.8326 	&	0.8089 	&	0.8058 	&	0.8323 	&	0.7979 	&	0.8302 	&	\textcolor{blue}{\textbf{0.8352}} 	&	0.8221 	&	0.8218 	&	0.8339 	&	\textcolor{red}{\textbf{0.8393}} 	&	0.6780 	&	0.8122 	&	0.8175 	&	0.7845 	&	0.7699 	&	0.8160 	&	0.7718 	&	0.8125 	&	0.8221 	&	0.8002 	&	0.7824 	&	\textcolor{red}{\textbf{0.8273}} 	&	\textcolor{blue}{\textbf{0.8247}}
\\
\hline
\multirow{1}{*}{J. Bean}
&	0.7916 	&	0.9122 	&	\textcolor{blue}{\textbf{0.9299}} 	&	0.8956 	&	0.8847 	&	0.9296 	&	0.9139 	&	0.9133 	&	0.9269 	&	0.9170 	&	0.9137 	&	0.9221 	&	\textcolor{red}{\textbf{0.9308}} 	&	0.7293 	&	0.9006 	&	0.9125 	&	0.8677 	&	0.8430 	&	\textcolor{red}{\textbf{0.9134}} 	&	0.8956 	&	0.8934 	&	0.9086 	&	0.8911 	&	0.8737 	&	0.9064 	&	\textcolor{blue}{\textbf{0.9125}}
\\
\hline
\multirow{1}{*}{Leaves}
&	0.8694 	&	0.8961 	&	0.8939 	&	0.8916 	&	0.8701 	&	0.9028 	&	0.8994 	&	0.9039 	&	0.9084 	&	0.8979 	&	0.8902 	&	\textcolor{red}{\textbf{0.9200}} 	&	\textcolor{blue}{\textbf{0.9139}} 	&	0.8250 	&	0.8680 	&	0.8679 	&	0.8638 	&	0.8354 	&	0.8787 	&	0.8693 	&	0.8794 	&	0.8849 	&	0.8673 	&	0.8484 	&	\textcolor{red}{\textbf{0.8976}} 	&	\textcolor{blue}{\textbf{0.8910}}
\\
\hline
\multirow{1}{*}{Lena}
&	0.7505 	&	0.8178 	&	0.8234 	&	0.8092 	&	0.8081 	&	0.8280 	&	0.7737 	&	0.8297 	&	0.8337 	&	0.8165 	&	0.8250 	&	\textcolor{blue}{\textbf{0.8340}} 	&	\textcolor{red}{\textbf{0.8353}} 	&	0.6966 	&	0.7920 	&	0.7937 	&	0.7732 	&	0.7691 	&	0.8009 	&	0.7289 	&	0.7990 	&	0.7997 	&	0.7820 	&	0.7764 	&	\textcolor{blue}{\textbf{0.8049}} 	&	\textcolor{red}{\textbf{0.8073}}
\\
\hline
\multirow{1}{*}{Lin}
&	0.7263 	&	0.8369 	&	0.8315 	&	0.8210 	&	0.8197 	&	0.8385 	&	0.8145 	&	0.8351 	&	\textcolor{blue}{\textbf{0.8395}} 	&	0.8342 	&	0.8301 	&	0.8373 	&	\textcolor{red}{\textbf{0.8422}} 	&	0.6649 	&	0.8170 	&	0.8098 	&	0.7908 	&	0.7806 	&	\textcolor{blue}{\textbf{0.8171}} 	&	0.7896 	&	0.8118 	&	0.8167 	&	0.8073 	&	0.7871 	&	\textcolor{red}{\textbf{0.8268}} 	&	0.8140
\\
\hline
\multirow{1}{*}{Monarch}
&	0.7944 	&	0.8446 	&	0.8499 	&	0.8441 	&	0.8316 	&	0.8522 	&	0.8266 	&	0.8549 	&	0.8602 	&	0.8478 	&	0.8512 	&	\textcolor{blue}{\textbf{0.8634}} 	&	\textcolor{red}{\textbf{0.8650}} 	&	0.7428 	&	0.8200 	&	0.8250 	&	0.8124 	&	0.7910 	&	0.8252 	&	0.8022 	&	0.8269 	&	0.8294 	&	0.8164 	&	0.8038 	&	\textcolor{red}{\textbf{0.8380}} 	&	\textcolor{blue}{\textbf{0.8361}}
\\
\hline

\multirow{1}{*}{Parrot}
&	0.7532 	&	0.8428 	&	0.8430 	&	0.8265 	&	0.8251 	&	0.8491 	&	0.8003 	&	0.8464 	&	\textcolor{blue}{\textbf{0.8538}} 	&	0.8421 	&	0.8369 	&	0.8471 	&	\textcolor{red}{\textbf{0.8555}} 	&	0.6952 	&	0.8273 	&	0.8224 	&	0.7998 	&	0.7872 	&	0.8310 	&	0.7485 	&	0.8246 	&	\textcolor{blue}{\textbf{0.8349}} 	&	0.8174 	&	0.7949 	&	0.8335 	&	\textcolor{red}{\textbf{0.8371}}
\\
\hline
\multirow{1}{*}{Plants}
&	0.7159 	&	0.7961 	&	0.7914 	&	0.7856 	&	0.7792 	&	0.7895 	&	0.7557 	&	0.8016 	&	\textcolor{blue}{\textbf{0.8059}} 	&	0.7928 	&	0.7996 	&	0.8014 	&	\textcolor{red}{\textbf{0.8151}} 	&	0.6545 	&	0.7669 	&	0.7553 	&	0.7479 	&	0.7327 	&	0.7589 	&	0.7226 	&	0.7669 	&	0.7667 	&	0.7561 	&	0.7452 	&	\textcolor{blue}{\textbf{0.7717}} 	&	\textcolor{red}{\textbf{0.7789}}
\\
\hline
\multirow{1}{*}{Starfish}
&	0.7448 	&	0.7828 	&	0.7803 	&	0.7802 	&	0.7608 	&	0.7812 	&	0.7565 	&	0.7855 	&	0.7795 	&	0.7824 	&	0.7773 	&	\textcolor{blue}{\textbf{0.7909}} 	&	\textcolor{red}{\textbf{0.7925}} 	&	0.6887 	&	0.7433 	&	0.7421 	&	0.7392 	&	0.7175 	&	0.7440 	&	0.7011 	&	0.7457 	&	0.7358 	&	0.7419 	&	0.7258 	&	\textcolor{blue}{\textbf{0.7542}} 	&	\textcolor{red}{\textbf{0.7589}}
\\
\hline
\multirow{1}{*}{\textbf{Average}}
&	0.7564 	&	0.8268 	&	0.8287 	&	0.8126 	&	0.8113 	&	0.8293 	&	0.7943 	&	0.8309 	&	0.8349 	&	0.8193 	&	0.8278 	&	\textcolor{blue}{\textbf{0.8369}} 	&	\textcolor{red}{\textbf{0.8401}} 	&	0.6994 	&	0.8013 	&	0.7989 	&	0.7774 	&	0.7707 	&	0.8016 	&	0.7561 	&	0.8012 	&	0.8037 	&	0.7861 	&	0.7810 	&	\textcolor{blue}{\textbf{0.8120}} 	&	\textcolor{red}{\textbf{0.8125}}
\\
\hline

\multicolumn{1}{|c|}{}&\multicolumn{13}{|c||}{${\bf \sigma}_n = 75$}&\multicolumn{13}{|c|}{${\bf \sigma}_n = 100$}\\
\hline						\multirow{1}{*}{\textbf{{Images}}}&\multirow{1}{*}{\textbf{{NNM}}}&\multirow{1}{*}{\textbf{{BM3D}}}&\multirow{1}{*}{\textbf{{LSSC}}}
&\multirow{1}{*}{\textbf{{EPLL}}}&\multirow{1}{*}{\textbf{{Plow}}}&\multirow{1}{*}{\textbf{{NCSR}}}&\multirow{1}{*}{\textbf{{GID}}}
&\multirow{1}{*}{\textbf{{PGPD}}}&\multirow{1}{*}{\textbf{{LINC}}}&\multirow{1}{*}{\textbf{{aGMM}}}&\multirow{1}{*}{\textbf{{OGLR}}}
&\multirow{1}{*}{\textbf{{WNNM}}}&\multirow{1}{*}{\textbf{{RRC}}}&\multirow{1}{*}{\textbf{{NNM}}}&\multirow{1}{*}{\textbf{{BM3D}}}
&\multirow{1}{*}{\textbf{{LSSC}}}&\multirow{1}{*}{\textbf{{EPLL}}}&\multirow{1}{*}{\textbf{{Plow}}}&\multirow{1}{*}{\textbf{{NCSR}}}
&\multirow{1}{*}{\textbf{{GID}}}&\multirow{1}{*}{\textbf{{PGPD}}}&\multirow{1}{*}{\textbf{{LINC}}}&\multirow{1}{*}{\textbf{{aGMM}}}
&\multirow{1}{*}{\textbf{{OGLR}}}&\multirow{1}{*}{\textbf{{WNNM}}}&\multirow{1}{*}{\textbf{{RRC}}}\\
\hline
\multirow{1}{*}{Airplane}
&	0.5493 	&	0.7488 	&	0.7455 	&	0.7168 	&	0.6589 	&	0.7547 	&	0.6754 	&	0.7492 	&	0.7436 	&	0.7248 	&	0.7174 	&	\textcolor{blue}{\textbf{0.7576}} 	&	\textcolor{red}{\textbf{0.7637}} 	&	0.5005 	&	0.7036 	&	0.7036 	&	0.6523 	&	0.5698 	&	\textcolor{blue}{\textbf{0.7107}} 	&	0.6393 	&	0.6947 	&	0.6960 	&	0.6571 	&	0.6400 	&	0.7080 	&	\textcolor{red}{\textbf{0.7209}}
\\
\hline
\multirow{1}{*}{Barbara}
&	0.5691 	&	0.6798 	&	0.6654 	&	0.5848 	&	0.6548 	&	0.6616 	&	0.5410 	&	0.6729 	&	0.6606 	&	0.5882 	&	0.6791 	&	\textcolor{red}{\textbf{0.6967}} 	&	\textcolor{blue}{\textbf{0.6825}} 	&	0.5026 	&	0.6092 	&	0.5937 	&	0.5135 	&	0.5647 	&	0.5960 	&	0.4960 	&	0.6039 	&	0.5917 	&	0.5163 	&	0.5755 	&	\textcolor{blue}{\textbf{0.6173}} 	&	\textcolor{red}{\textbf{0.6243}}
\\
\hline
\multirow{1}{*}{boats}
&	0.5524 	&	0.6939 	&	0.6831 	&	0.6674 	&	0.6386 	&	0.6876 	&	0.6247 	&	0.6963 	&	0.6864 	&	0.6727 	&	0.6637 	&	\textcolor{red}{\textbf{0.7081}} 	&	\textcolor{blue}{\textbf{0.6997}} 	&	0.4880 	&	0.6375 	&	0.6292 	&	0.5988 	&	0.5548 	&	0.6294 	&	0.5527 	&	0.6355 	&	0.6250 	&	0.6022 	&	0.5764 	&	\textcolor{red}{\textbf{0.6582}} 	&	\textcolor{blue}{\textbf{0.6429}}
\\
\hline
\multirow{1}{*}{Elaine}
&	0.5618 	&	0.7359 	&	0.7116 	&	0.7062 	&	0.6772 	&	0.7342 	&	0.6762 	&	0.7303 	&	0.7235 	&	0.7172 	&	0.6960 	&	\textcolor{blue}{\textbf{0.7434}} 	&	\textcolor{red}{\textbf{0.7438}} 	&	0.5128 	&	0.6817 	&	0.6585 	&	0.6459 	&	0.6003 	&	0.6824 	&	0.6139 	&	0.6721 	&	0.6719 	&	0.6548 	&	0.6048 	&	\textcolor{blue}{\textbf{0.6869}} 	&	\textcolor{red}{\textbf{0.6955}}
\\
\hline

\multirow{1}{*}{Fence}
&	0.5890 	&	0.6962 	&	0.6821 	&	0.6076 	&	0.6586 	&	0.6742 	&	0.5826 	&	0.6872 	&	0.6712 	&	0.6098 	&	0.6848 	&	\textcolor{red}{\textbf{0.7110}} 	&	\textcolor{blue}{\textbf{0.6924}} 	&	0.5044 	&	0.6362 	&	0.6216 	&	0.5252 	&	0.5727 	&	0.6009 	&	0.4671 	&	0.6226 	&	0.6131 	&	0.5386 	&	0.6119 	&	\textcolor{red}{\textbf{0.6755}} 	&	\textcolor{blue}{\textbf{0.6407}}
\\
\hline

\multirow{1}{*}{Flower}
&	0.5168 	&	0.6482 	&	0.6387 	&	0.6296 	&	0.6024 	&	0.6409 	&	0.5797 	&	0.6468 	&	0.6278 	&	0.6256 	&	0.6352 	&	\textcolor{red}{\textbf{0.6574}} 	&	\textcolor{blue}{\textbf{0.6499}} 	&	0.4402 	&	\textcolor{blue}{\textbf{0.5862}} 	&	0.5734 	&	0.5533 	&	0.5181 	&	0.5753 	&	0.3815 	&	0.5797 	&	0.5568 	&	0.5457 	&	0.5452 	&	\textcolor{red}{\textbf{0.5934}} 	&	0.5846
\\
\hline

\multirow{1}{*}{Foreman}
&	0.5524 	&	0.7933 	&	0.8015 	&	0.7467 	&	0.7067 	&	\textcolor{blue}{\textbf{0.8171}} 	&	0.7556 	&	0.7965 	&	0.8129 	&	0.7676 	&	0.7673 	&	0.8105 	&	\textcolor{red}{\textbf{0.8259}} 	&	0.5160 	&	0.7489 	&	0.7722 	&	0.6949 	&	0.6329 	&	0.7833 	&	0.7050 	&	0.7452 	&	\textcolor{blue}{\textbf{0.7852}} 	&	0.7129 	&	0.6983 	&	0.7820 	&	\textcolor{red}{\textbf{0.7969}}
\\
\hline

\multirow{1}{*}{House}
&	0.5439 	&	0.7645 	&	0.7792 	&	0.7251 	&	0.6733 	&	0.7749 	&	0.7052 	&	0.7709 	&	0.7842 	&	0.7419 	&	0.7230 	&	\textcolor{blue}{\textbf{0.7930}} 	&	\textcolor{red}{\textbf{0.7950}} 	&	0.4918 	&	0.7203 	&	0.7313 	&	0.6695 	&	0.5874 	&	0.7397 	&	0.4735 	&	0.7195 	&	0.7478 	&	0.6854 	&	0.6373 	&	\textcolor{blue}{\textbf{0.7531}} 	&	\textcolor{red}{\textbf{0.7655}}
\\
\hline

\multirow{1}{*}{J. Bean}
&	0.5796 	&	0.8573 	&	0.8720 	&	0.8019 	&	0.7422 	&	\textcolor{red}{\textbf{0.8792}} 	&	0.8574 	&	0.8503 	&	0.8662 	&	0.8243 	&	0.8088 	&	0.8645 	&	\textcolor{blue}{\textbf{0.8749}} 	&	0.5341 	&	0.8181 	&	0.8376 	&	0.7429 	&	0.6574 	&	\textcolor{red}{\textbf{0.8472}} 	&	0.7862 	&	0.7999 	&	0.8315 	&	0.7628 	&	0.7331 	&	0.8195 	&	\textcolor{blue}{\textbf{0.8443}}
\\
\hline

\multirow{1}{*}{Leaves}
&	0.7265 	&	0.8072 	&	0.7869 	&	0.7921 	&	0.7512 	&	0.8234 	&	0.7751 	&	0.8121 	&	0.8218 	&	0.7867 	&	0.7763 	&	\textcolor{red}{\textbf{0.8435}} 	&	\textcolor{blue}{\textbf{0.8377}} 	&	0.6345 	&	0.7482 	&	0.7242 	&	0.7163 	&	0.6814 	&	0.7622 	&	0.6857 	&	0.7469 	&	0.7467 	&	0.7106 	&	0.6827 	&	\textcolor{red}{\textbf{0.7946}} 	&	\textcolor{blue}{\textbf{0.7811}}
\\
\hline

\multirow{1}{*}{Lena}
&	0.5647 	&	0.7288 	&	0.7249 	&	0.6968 	&	0.6723 	&	0.7415 	&	0.6700 	&	0.7356 	&	0.7359 	&	0.7101 	&	0.7061 	&	\textcolor{blue}{\textbf{0.7418}} 	&	\textcolor{red}{\textbf{0.7498}} 	&	0.5093 	&	0.6739 	&	0.6766 	&	0.6345 	&	0.5895 	&	0.6906 	&	0.6024 	&	0.6780 	&	\textcolor{blue}{\textbf{0.6918}} 	&	0.6487 	&	0.6215 	&	0.6917 	&	\textcolor{red}{\textbf{0.7100}}
\\
\hline

\multirow{1}{*}{Lin}
&	0.5175 	&	0.7673 	&	0.7638 	&	0.7238 	&	0.6722 	&	0.7730 	&	0.7224 	&	0.7669 	&	\textcolor{red}{\textbf{0.7784}} 	&	0.7433 	&	0.7189 	&	\textcolor{blue}{\textbf{0.7755}} 	&	0.7729 	&	0.4858 	&	0.7262 	&	0.7300 	&	0.6669 	&	0.5907 	&	0.7393 	&	0.6741 	&	0.7151 	&	\textcolor{red}{\textbf{0.7478}} 	&	0.6831 	&	0.6406 	&	0.7392 	&	\textcolor{blue}{\textbf{0.7416}}
\\
\hline

\multirow{1}{*}{Monarch}
&	0.6206 	&	0.7557 	&	0.7503 	&	0.7395 	&	0.6917 	&	0.7648 	&	0.7071 	&	0.7642 	&	0.7651 	&	0.7454 	&	0.7378 	&	\textcolor{blue}{\textbf{0.7759}} 	&	\textcolor{red}{\textbf{0.7782}} 	&	0.5596 	&	0.7021 	&	0.6999 	&	0.6771 	&	0.6102 	&	0.7109 	&	0.6361 	&	0.7029 	&	0.7037 	&	0.6823 	&	0.6419 	&	\textcolor{blue}{\textbf{0.7284}} 	&	\textcolor{red}{\textbf{0.7312}}
\\
\hline

\multirow{1}{*}{Parrot}
&	0.5567 	&	0.7771 	&	0.7673 	&	0.7399 	&	0.6859 	&	0.7892 	&	0.7498 	&	0.7775 	&	\textcolor{blue}{\textbf{0.7951}} 	&	0.7555 	&	0.7333 	&	0.7933 	&	\textcolor{red}{\textbf{0.8028}} 	&	0.5197 	&	0.7345 	&	0.7338 	&	0.6844 	&	0.6096 	&	0.7518 	&	0.6998 	&	0.7251 	&	\textcolor{blue}{\textbf{0.7598}} 	&	0.6979 	&	0.6531 	&	0.7533 	&	\textcolor{red}{\textbf{0.7729}}
\\
\hline

\multirow{1}{*}{Plants}
&	0.5107 	&	0.7006 	&	0.6855 	&	0.6720 	&	0.6255 	&	0.7007 	&	0.6541 	&	0.7009 	&	0.6999 	&	0.6805 	&	0.6639 	&	\textcolor{blue}{\textbf{0.7107}} 	&	\textcolor{red}{\textbf{0.7172}} 	&	0.4789 	&	0.6525 	&	0.6393 	&	0.6129 	&	0.5531 	&	\textcolor{blue}{\textbf{0.6587}} 	&	0.6010 	&	0.6472 	&	0.6558 	&	0.6210 	&	0.5862 	&	0.6560 	&	\textcolor{red}{\textbf{0.6680}}
\\
\hline

\multirow{1}{*}{Starfish}
&	0.5617 	&	0.6670 	&	0.6528 	&	0.6502 	&	0.6192 	&	0.6685 	&	0.6111 	&	0.6638 	&	0.6435 	&	0.6525 	&	0.6446 	&	\textcolor{blue}{\textbf{0.6658}} 	&	\textcolor{red}{\textbf{0.6741}} 	&	0.4979 	&	0.6053 	&	0.5816 	&	0.5799 	&	0.5403 	&	0.6062 	&	0.5415 	&	0.6018 	&	0.5680 	&	0.5813 	&	0.5528 	&	\textcolor{red}{\textbf{0.6173}} 	&	\textcolor{blue}{\textbf{0.6081}}
\\
\hline

\multirow{1}{*}{\textbf{Average}}
&	0.5671 	&	0.7389 	&	0.7319 	&	0.7000 	&	0.6707 	&	0.7428 	&	0.6805 	&	0.7388 	&	0.7385 	&	0.7091 	&	0.7098 	&	\textcolor{blue}{\textbf{0.7531}} 	&	\textcolor{red}{\textbf{0.7538}} 	&	0.5110 	&	0.6865 	&	0.6817 	&	0.6355 	&	0.5896 	&	0.6928 	&	0.5972 	&	0.6806 	&	0.6871 	&	0.6438 	&	0.6251 	&	\textcolor{blue}{\textbf{0.7047}} 	&	\textcolor{red}{\textbf{0.7080}}
\\
\hline

\end{tabular}
}
\label{Tab:2}
\vspace{-2mm}
\end{table*}

\subsection {Dictionary Learning based on SVD}
\label{5.2}
\bihan{To achieve the equivalence of RRC and GSRC, we introduce a specific approach to learn the group-wise dictionaries.
For each group ${\textbf{\emph{X}}}_i\in\mathbb{R}^{d \times m}$, the corresponding dictionary is constructed using the SVD of the corrupted measurement ${\textbf{\emph{Y}}}_i\in\mathbb{R}^{d \times m}$, which is}
\begin{equation}
{\textbf{\emph{Y}}}_i= {\textbf{\emph{U}}}_i{\boldsymbol\Delta}_i{\textbf{\emph{V}}}_i^T=\sum\nolimits_{k=1}^{j} \boldsymbol\delta_{i,k}{\textbf{\emph{u}}}_{i,k}{\textbf{\emph{v}}}_{i,k}^T,
\label{eq:42}
\end{equation}
where $\boldsymbol\delta_i=[\boldsymbol\delta_{i,1},\dots,\boldsymbol\delta_{i,j}]$ and $j={\rm min}(d,m)$; ${\boldsymbol\Delta}_i={\rm diag}(\boldsymbol\delta_i)$ is a diagonal matrix whose non-zero elements are represented by $\boldsymbol\delta_i$; ${\textbf{\emph{u}}}_{i,k}, {\textbf{\emph{v}}}_{i,k}$ are the columns of ${\textbf{\emph{U}}}_i$ and ${\textbf{\emph{V}}}_i$, respectively.
\bihan{Each dictionary atom $\textbf{\emph{d}}_{i,k}$ of the group-wise dictionary $\textbf{\emph{D}}_i$ is constructed as}
\begin{equation}
\textbf{\emph{d}}_{i,k}={\textbf{\emph{u}}}_{i,k}{\textbf{\emph{v}}}_{i,k}^T, \qquad \forall k=1,\dots,{j}.
\label{eq:43}
\end{equation}
\bihan{Therefore, the adaptive dictionary for each data group is formed as $\textbf{\emph{D}}_i=[\textbf{\emph{d}}_{i,1},\textbf{\emph{d}}_{i,2},\dots,\textbf{\emph{d}}_{i,j}]$ using the corresponding corrupted measurement ${\textbf{\emph{Y}}}_i$.}

\subsection {The Equivalence of RRC and GSRC} \label{5.3}
\bihan{We show the equivalence of the RRC and GSRC models by showing that the problem Eq.~\eqref{eq:6} and Eq.~\eqref{eq:41} are equivalent, provided that the group-wise dictionaries are constructed using Eq.~\eqref{eq:43}.
We first prove the Lemma~\ref{lemma:2}.
}
\begin{lemma}
\label{lemma:2}
Let ${\textbf{{Y}}}_i={{\textbf{{D}}}_i{\textbf{{K}}}_i}$, ${\textbf{{X}}}_i={{\textbf{{D}}}_i{\textbf{{A}}}_i}$, and ${\textbf{{D}}}_i$ is constructed by Eq.~\eqref{eq:43}. We have
\begin{equation}
\left\|{\textbf{{Y}}}_i-{\textbf{{X}}}_i\right\|_F^2=\left\|{\textbf{{K}}}_i-{\textbf{{A}}}_i\right\|_F^2.
\label{eq:44}
\end{equation} 
\end{lemma}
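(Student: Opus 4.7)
The plan is to exploit the fact that the dictionary atoms $\textbf{\emph{d}}_{i,k}=\textbf{\emph{u}}_{i,k}\textbf{\emph{v}}_{i,k}^T$ given by Eq.~\eqref{eq:43} form an orthonormal set in the Frobenius inner product, so that the map $\textbf{\emph{A}}_i \mapsto \textbf{\emph{D}}_i\textbf{\emph{A}}_i$ is a Frobenius isometry from the coefficient space into $\mathbb{R}^{d\times m}$. Once this is established, the claimed equality follows by applying it to the coefficient difference $\textbf{\emph{K}}_i-\textbf{\emph{A}}_i$.

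First I would unpack the notation. Writing $\textbf{\emph{K}}_i=[k_{i,1},\dots,k_{i,j}]^T$ and $\textbf{\emph{A}}_i=[a_{i,1},\dots,a_{i,j}]^T$, the product $\textbf{\emph{D}}_i\textbf{\emph{K}}_i$ should be interpreted (consistently with Eq.~\eqref{eq:42}) as the linear combination $\sum_{k=1}^{j}k_{i,k}\textbf{\emph{d}}_{i,k}$, and similarly for $\textbf{\emph{D}}_i\textbf{\emph{A}}_i$. Then
\begin{equation}
\textbf{\emph{Y}}_i-\textbf{\emph{X}}_i \;=\; \sum_{k=1}^{j}(k_{i,k}-a_{i,k})\,\textbf{\emph{u}}_{i,k}\textbf{\emph{v}}_{i,k}^T.
\label{eq:plan1}
\end{equation}

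Next I would verify orthonormality of the atoms. Using the cyclic property of the trace and the fact that the columns of $\textbf{\emph{U}}_i$ and $\textbf{\emph{V}}_i$ are orthonormal,
\begin{equation}
\langle \textbf{\emph{d}}_{i,k},\textbf{\emph{d}}_{i,l}\rangle_F
= {\rm Tr}(\textbf{\emph{v}}_{i,k}\textbf{\emph{u}}_{i,k}^T\textbf{\emph{u}}_{i,l}\textbf{\emph{v}}_{i,l}^T)
= (\textbf{\emph{u}}_{i,k}^T\textbf{\emph{u}}_{i,l})(\textbf{\emph{v}}_{i,l}^T\textbf{\emph{v}}_{i,k})
= \mathbbm{1}_{\{k=l\}}.
\label{eq:plan2}
\end{equation}
Expanding $\|\textbf{\emph{Y}}_i-\textbf{\emph{X}}_i\|_F^2$ from \eqref{eq:plan1} and applying \eqref{eq:plan2} to kill all cross terms then yields $\sum_k (k_{i,k}-a_{i,k})^2 = \|\textbf{\emph{K}}_i-\textbf{\emph{A}}_i\|_F^2$, which is exactly the claim.

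The only potential obstacle is notational rather than mathematical: the symbol $\textbf{\emph{D}}_i\textbf{\emph{A}}_i$ does not denote an ordinary matrix product, since each ``atom'' $\textbf{\emph{d}}_{i,k}$ is itself a $d\times m$ matrix rather than a column vector. One should state explicitly at the outset that $\textbf{\emph{D}}_i\textbf{\emph{A}}_i$ means $\sum_k a_{i,k}\textbf{\emph{d}}_{i,k}$, so that \eqref{eq:plan1} is justified and the subsequent isometry argument is unambiguous. After that, the proof is essentially Parseval's identity for the orthonormal basis $\{\textbf{\emph{d}}_{i,k}\}$ and takes only a couple of lines.
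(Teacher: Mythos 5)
Your proof is correct and follows essentially the same route as the paper's: the paper writes $\textbf{\emph{D}}_i(\textbf{\emph{K}}_i-\textbf{\emph{A}}_i)=\textbf{\emph{U}}_i\,{\rm diag}(\textbf{\emph{K}}_i-\textbf{\emph{A}}_i)\,\textbf{\emph{V}}_i^T$ and cancels $\textbf{\emph{V}}_i^T\textbf{\emph{V}}_i$ and $\textbf{\emph{U}}_i^T\textbf{\emph{U}}_i$ inside a trace, which is exactly your Parseval argument with the orthonormal atoms $\textbf{\emph{d}}_{i,k}=\textbf{\emph{u}}_{i,k}\textbf{\emph{v}}_{i,k}^T$ written in matrix form. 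Your explicit remark that $\textbf{\emph{D}}_i\textbf{\emph{A}}_i$ must be read as $\sum_k a_{i,k}\textbf{\emph{d}}_{i,k}$ rather than an ordinary matrix product is a clarification the paper leaves implicit, but it does not change the argument.
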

\begin{proof}
See Appendix~\ref{lemma2}.
\end{proof}

Based on Lemma~\ref{lemma:1}, Lemma~\ref{lemma:2} and Theorem~\ref{theorem:2}, \bihan{we have the equivalence final result as the Theorem~\ref{theorem:3}.}

\begin{theorem}
\label{theorem:3}
Under the condition of the adaptive dictionary ${\textbf{{D}}}_i$ shown in Eq.~\eqref{eq:43}, the proposed RRC model in Eq.~\eqref{eq:6}  is equivalent to the GSRC model in Eq.~\eqref{eq:41}.
\end{theorem}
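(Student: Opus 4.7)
The plan is to reduce the GSRC objective in Eq.~\eqref{eq:41} to the element-wise RRC problem Eq.~\eqref{eq:10} by exploiting the specific structure of the dictionary $\textbf{\emph{D}}_i$ built in Eq.~\eqref{eq:43}. The core observation is that the atoms $\textbf{\emph{d}}_{i,k}=\textbf{\emph{u}}_{i,k}\textbf{\emph{v}}_{i,k}^{T}$ form a Frobenius-orthonormal system (since $\langle \textbf{\emph{u}}_j\textbf{\emph{v}}_j^{T},\textbf{\emph{u}}_k\textbf{\emph{v}}_k^{T}\rangle_F=(\textbf{\emph{u}}_j^{T}\textbf{\emph{u}}_k)(\textbf{\emph{v}}_k^{T}\textbf{\emph{v}}_j)=\delta_{jk}$), so representing a matrix in $\textbf{\emph{D}}_i$ is an isometry between its coefficient vector and its image in the span of $\textbf{\emph{D}}_i$; Lemma~\ref{lemma:2} is precisely a manifestation of this fact. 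I would therefore identify the coefficient vectors of $\textbf{\emph{Y}}_i$, $\textbf{\emph{X}}_i$, and $\textbf{\emph{X}}_i'$ in $\textbf{\emph{D}}_i$ and show that each reduces to the corresponding vector of singular values.

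First I would unpack the dictionary construction: Eq.~\eqref{eq:42} immediately gives $\textbf{\emph{Y}}_i=\sum_{k}\delta_{i,k}\textbf{\emph{d}}_{i,k}=\textbf{\emph{D}}_i\boldsymbol\delta_i$, so the role of $\textbf{\emph{K}}_i$ in Lemma~\ref{lemma:2} is played by $\boldsymbol\delta_i$. By Theorem~\ref{theorem:2}, the optimal $\textbf{\emph{X}}_i$ in the RRC problem shares the singular vectors of $\textbf{\emph{Y}}_i$, so $\textbf{\emph{X}}_i=\textbf{\emph{U}}_i\boldsymbol\Sigma_i\textbf{\emph{V}}_i^{T}=\textbf{\emph{D}}_i\boldsymbol\sigma_i$, giving $\textbf{\emph{A}}_i=\boldsymbol\sigma_i$. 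Treating $\textbf{\emph{X}}_i'$ analogously within the span of $\textbf{\emph{D}}_i$ yields $\textbf{\emph{B}}_i=\boldsymbol\psi_i$ (the subtle step, discussed below). With these identifications in hand, Lemma~\ref{lemma:2} converts the fidelity term of Eq.~\eqref{eq:41} into $\tfrac{1}{2}\|\boldsymbol\delta_i-\textbf{\emph{A}}_i\|_2^{2}$, while the regularizer is already $\lambda\|\textbf{\emph{A}}_i-\boldsymbol\psi_i\|_1$. The resulting problem separates element-wise into precisely the scalar minimization Eq.~\eqref{eq:10}, whose closed-form solution via Lemma~\ref{lemma:1} reproduces Eq.~\eqref{eq:11}. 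Reconstructing $\hat{\textbf{\emph{X}}}_i=\textbf{\emph{D}}_i\hat{\textbf{\emph{A}}}_i=\textbf{\emph{U}}_i\mathrm{diag}(\hat{\boldsymbol\sigma}_i)\textbf{\emph{V}}_i^{T}$ then matches the RRC solution from Theorem~\ref{theorem:2}, completing the equivalence.

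The main obstacle is the step that identifies $\textbf{\emph{B}}_i$ with $\boldsymbol\psi_i$. Formally, $\textbf{\emph{B}}_i$ is defined via Eq.~\eqref{eq:40} as a sparse code of $\textbf{\emph{X}}_i'$ under a dictionary that is adapted to the SVD of $\textbf{\emph{Y}}_i$, not of $\textbf{\emph{X}}_i'$. The identification is clean only when $\textbf{\emph{X}}_i'$ lies in the span of $\textbf{\emph{D}}_i$, i.e.\ when $\textbf{\emph{X}}_i'$ shares the singular vectors $\textbf{\emph{U}}_i,\textbf{\emph{V}}_i$; in that case its coefficients in $\textbf{\emph{D}}_i$ coincide with its singular values $\boldsymbol\psi_i$, and the orthonormal-atom isometry from Lemma~\ref{lemma:2} transports the $\ell_1$ penalty on coefficient differences onto the $\ell_1$ penalty on singular-value differences. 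I would therefore either invoke this structural alignment as a modelling assumption, arguing it is reasonable because $\textbf{\emph{X}}_i'$ is itself refined from $\textbf{\emph{Y}}_i$ via NSS averaging (Eq.~\eqref{eq:7}) and therefore approximately inherits its principal directions, or treat it in a limiting sense where the regularizer in Eq.~\eqref{eq:40} vanishes and $\hat{\textbf{\emph{B}}}_i$ reduces to the projection of $\textbf{\emph{X}}_i'$'s singular values onto the $\textbf{\emph{D}}_i$ basis. Once this identification is granted, the three-line chain---apply Lemma~\ref{lemma:2}, separate, apply Lemma~\ref{lemma:1}---reproduces Theorem~\ref{theorem:2} verbatim and yields Theorem~\ref{theorem:3}.
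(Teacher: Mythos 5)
Your proposal is correct and follows essentially the same route as the paper's own proof: it uses the orthonormality of the SVD-derived atoms (Lemma~\ref{lemma:2}) to convert the GSRC fidelity term into a coefficient-space distance, identifies the coefficient vectors with the singular values, and then separates and applies Lemma~\ref{lemma:1} to recover the scalar problem of Theorem~\ref{theorem:2}. The identification $\textbf{\emph{B}}_i=\boldsymbol\psi_i$ that you flag as the delicate step is handled in the paper by exactly the assumption you propose, namely that the PCA (singular-vector) spaces of $\textbf{\emph{X}}_i$ and $\textbf{\emph{X}}_i'$ coincide; your treatment is if anything more explicit about where this is needed.
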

\begin{proof}
See Appendix~\ref{theorem3}.
\end{proof}

\bihan{The equivalence analysis helps us to bridge the gap between the proposed RRC model and the popular GSR model, thus to provide another angle to interpret the RRC model.
Note that there are many other approaches to construct the group-wise dictionaries $\left \{ \textbf{\emph{D}}_i \right \}$~\cite{29,49,54,61}, and the equivalence of RRC and GSR does not hold in general: The sparse representation models the data by a union of low-dimensional subspaces, while low-rank modeling projects the data onto a unique subspace~\cite{78}.
Thus, the nature of the proposed RRC model is in general different from the sparse residual models such as NCSR~\cite{49}.

}


\begin{figure}[!t]
\centering
\begin{minipage}[b]{1\linewidth}
{\includegraphics[width= 1\textwidth]{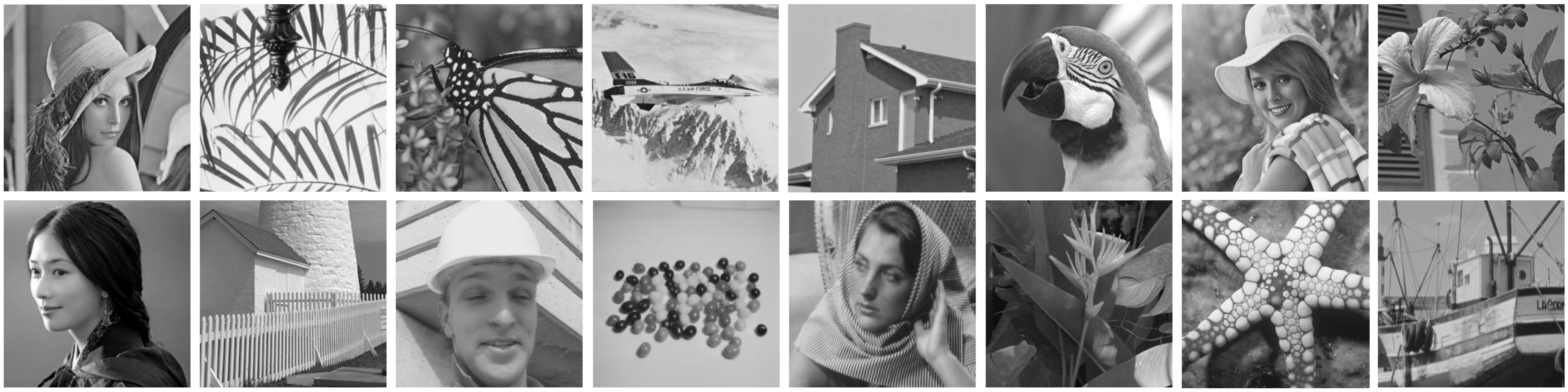}}
\end{minipage}
\vspace{-4mm}
\caption{16 widely used images for image denoising.  \textbf{T}op row, from left to right: Lena, Leaves, Monarch, Airplane, House, Parrot, Elaine, Flower. \textbf{B}ottom row, from left to right: Lin, Fence, Foreman, J. Bean, Barbara, Plants, Starfish, boats.}
\label{fig:5}
\vspace{-4mm}
\end{figure}

\section{Experimental Results}

 \label{sec:6}

In this section, we conduct extensive experiments on image denoising and image compression artifacts reduction to verify the effectiveness of the proposed RRC model. To evaluate the quality of the recovered images, both PSNR and structural similarity (SSIM) \cite{55} metrics are used. The source codes of all competing methods are obtained from the original authors and we used the default parameter settings.  {For color images, we only focus on the restoration of luminance channel (in YCrCb space).} Due to limit space, please enlarge the tables and figures on the screen for better comparison.  We choose the following stop criterion for the proposed RRC based image restoration algorithms,
\begin{equation}
\frac{\|\hat{\textbf{\emph{x}}}^t-\hat{\textbf{\emph{x}}}^{t-1}\|_2^2}{\|\hat{\textbf{\emph{x}}}^{t-1}\|_2^2} <\tau,
\label{eq:45}
\end{equation}
where $\tau$ is a small constant. The source code of the proposed RRC method is available at: \url{https://drive.google.com/open?id=18JSSnuF_3x0AvVYq0E1ewErmEynTY8cR}.

\subsection{Image Denoising}

In image denoising, to validate the denoising performance of the proposed RRC model, we compare it with leading denoising methods, including NNM, BM3D \cite{26}, LSSC \cite{27}, EPLL  \cite{56}, Plow \cite{57}, NCSR \cite{49}, GID \cite{58}, PGPD \cite{29}, LINC \cite{59}, aGMM \cite{60} and OGLR \cite{61}. Note that NNM is the baseline algorithm, and the image nonlocal redundancies are used in all competing methods. The parameter settings of the proposed RRC model for image denoising are as follows. The size of each patch $ \sqrt{d} \times \sqrt{d}$ is set to $6 \times 6$, $7 \times 7$, $8 \times 8$ and $9 \times 9$ for $\boldsymbol\sigma_n \leq20$, $20<\boldsymbol\sigma_n \leq50$, $50<\boldsymbol\sigma_n \leq75$ and $75<\boldsymbol\sigma_n \leq100$, respectively. The parameters ($\mu, \rho, c, m, \tau$) are set to (0.1, 0.9, 0.9, 60, 0.001), (0.1, 0.8, 0.9, 60, 0.001), (0.1, 0.8, 0.9, 70, 0.0006), (0.1, 0.8, 1, 80, 0.0006), (0.1, 0.8, 1, 90, 0.0005) and (0.1, 0.8, 1, 100, 0.002) for $\boldsymbol\sigma_n \leq20$, $20<\boldsymbol\sigma_n \leq30$, $30<\boldsymbol\sigma_n \leq40$, $40<\boldsymbol\sigma_n \leq50$, $50<\boldsymbol\sigma_n \leq75$ and $75<\boldsymbol\sigma_n \leq100$, respectively. The searching window for similar patches is set to $L=25$; $\epsilon =0.2, h=40$.

We first evaluate the competing methods on 16 widely used test images, whose scenes are displayed in Fig.~\ref{fig:5}. Zero mean additive white Gaussian noise is added to these test images to generate the noisy observations. We present the denoising results on six noise levels, \ie,  $\boldsymbol\sigma_n$ = \{ 20, 30, 40, 50, 75 and  100\}. The PSNR and SSIM results under these noise levels for all methods are shown in Table~\ref{Tab:1} and Table~\ref{Tab:2} (\textcolor{red}{\textbf{red}} indicates the best and \textcolor{blue}{\textbf{blue}} is the second best performance), respectively.  It is obvious that the proposed RRC significantly outperforms the baseline rank minimization method, \ie, NNM.  Meanwhile, one can observe that the proposed RRC outperforms other competing methods in most cases in terms of PSNR. The average gains of the proposed RRC over NNM, BM3D, LSSC, EPLL, Plow, NCSR, GID, PGPD, LINC, aGMM and OGLR methods are as much as 1.43dB, 0.24dB, 0.29dB, 0.70dB, 0.72dB, 0.40dB, 1.49dB, 0.14dB, 0.25dB, 0.47dB and 0.46dB, respectively. In terms of SSIM, it can be seen that the proposed RRC also achieves higher results than other competing methods. The only exception is when $\boldsymbol\sigma_n$ = 20 for which LINC is slightly higher than the proposed RRC method. Nonetheless, under high noise level $\boldsymbol\sigma_n$ = 100,  the proposed RRC consistently outperforms  other competing methods for all test images ({the only exception is the images $\emph{Flower}$, $\emph{J. Bean}$ and $\emph{Lin}$  for which BM3D, NCSR and LINC respectively, are slightly better than the proposed RRC method}).

\begin{figure}[!t]
\begin{minipage}[b]{1\linewidth}
{\includegraphics[width= 1\textwidth]{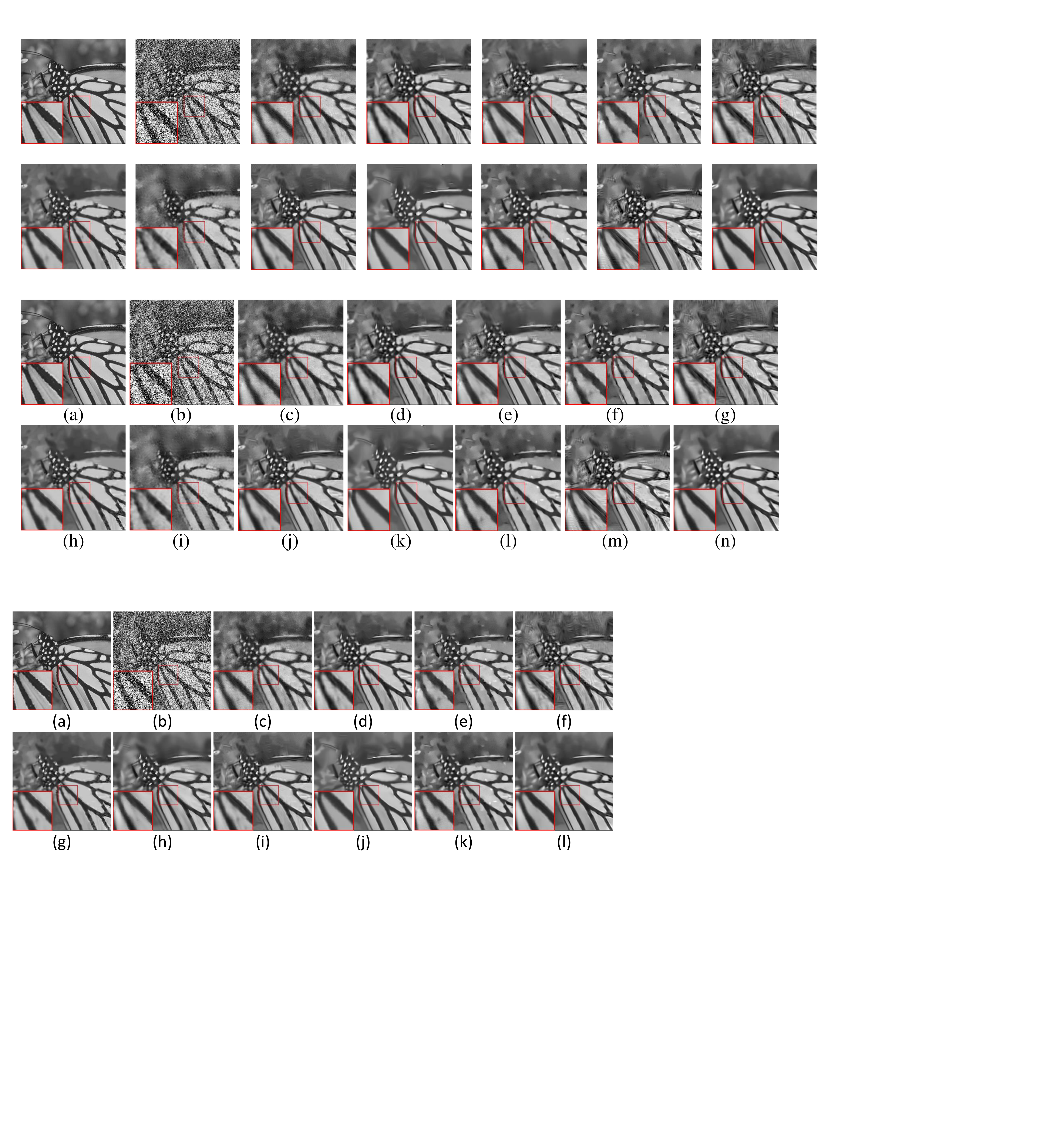}}
\end{minipage}
\vspace{-6mm}
\caption{Denoising results of $\emph{Monarch}$ with $\boldsymbol\sigma_n$ = 100. (a) Original image; (b) Noisy image; (c) NNM (PSNR = 21.03dB, SSIM = 0.5996); (d) BM3D \cite{26} (PSNR = 22.52dB, SSIM = 0.7021); (e) LSSC \cite{27} (PSNR = 22.24dB, SSIM = 0.6999); (f) EPLL \cite{56} (PSNR = 22.24dB, SSIM = 0.6771); (g) Plow \cite{57} (PSNR = 21.83dB, SSIM = 0.6102); (h) NCSR \cite{49} (PSNR = 22.10dB, SSIM = 0.7109); (i) GID \cite{58} (PSNR = 20.73dB, SSIM = 0.6361); (j) PGPD \cite{29} (PSNR = 22.56dB, SSIM = 0.7029); (k) LINC \cite{59} (PSNR = 22.10dB, SSIM = 0.7037); (l) aGMM \cite{60} (PSNR = 22.42dB, SSIM = 0.6823); (m) OGLR \cite{61} (PSNR = 21.87dB, SSIM = 0.6419); (n) RRC (PSNR = \textbf{22.76dB}, SSIM = \textbf{0.7312}).}
\label{fig:6}
\vspace{-2mm}
\end{figure}

\begin{figure}[!t]
\begin{minipage}[b]{1\linewidth}
{\includegraphics[width= 1\textwidth]{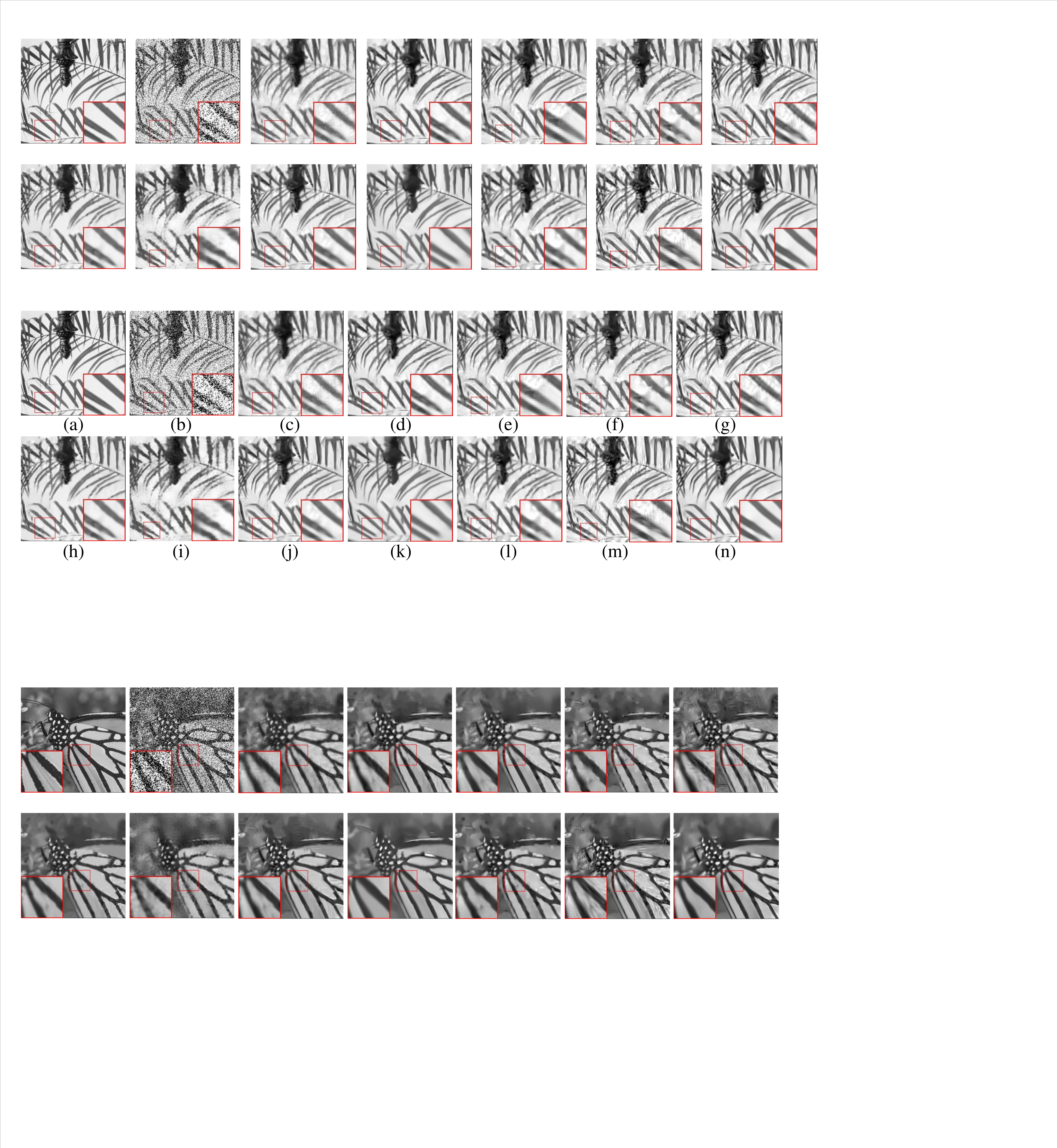}}
\end{minipage}
\vspace{-6mm}
\caption{Denoising results of $\emph{Leaves}$ with $\boldsymbol\sigma_n$ = 100. (a) Original image; (b) Noisy image; (c) NNM (PSNR = 19.57dB, SSIM = 0.6345); (d) BM3D \cite{26} (PSNR = 20.90dB, SSIM = 0.7482); (e) LSSC \cite{27} (PSNR = 20.54dB, SSIM = 0.7242); (f) EPLL \cite{56} (PSNR = 20.26dB, SSIM = 0.7163); (g) Plow \cite{57} (PSNR = 20.43dB, SSIM = 0.6814); (h) NCSR \cite{49} (PSNR = 20.84dB, SSIM = 0.7622); (i) GID \cite{58} (PSNR = 19.13dB, SSIM = 0.6857); (j) PGPD \cite{29} (PSNR = 20.95dB, SSIM = 0.7469); (k) LINC \cite{59} (PSNR = 20.44dB, SSIM = 0.7467); (l) aGMM \cite{60} (PSNR = 20.29dB, SSIM = 0.7106); (m) OGLR \cite{61} (PSNR = 20.28dB, SSIM = 0.6827); (n) RRC (PSNR = \textbf{21.22dB}, SSIM = \textbf{0.7811}).}
\label{fig:7}
\vspace{-4mm}
\end{figure}

The visual comparisons in the case of $\boldsymbol\sigma_n$ = 100 for images $\emph{Monarch}$ and $\emph{Leaves}$ are shown in Fig.~\ref{fig:6} and Fig.~\ref{fig:7}, respectively.  It can be found that NNM, EPLL, Plow, NCSR, GID, PGPD, aGMM and OGLR still suffer from some undesirable visual artifacts, and BM3D, LSSC and LINC tend to over-smooth the image. The proposed RRC approach not only removes most of the visual artifacts, but also preserves large scale sharp edges and small-scale image details.

\begin{table}[!t]
\caption{ {Average PSNR ($\textnormal{d}$B) and SSIM  results of different  denoising algorithms on BSD 200 dataset \cite{62}.}}
\centering
\LARGE
\resizebox{0.48\textwidth}{!}{
\begin{tabular}{|c|c|c|c|c|c|c|c|c|c|c|c|c|c|c|}
\hline
\multicolumn{1}{|c|}{}&\multicolumn{13}{|c|}{\textbf{PSNR Comparison}}\\
\hline
{\textbf{{Noise}}}&{\textbf{{NNM}}}&{\textbf{BM3D}}& {\textbf{LSSC}}&{\textbf{EPLL}}&{\textbf{Plow}}&{\textbf{NCSR}}
&{\textbf{GID}}&{\textbf{PGPD}}&{\textbf{LINC}}&{\textbf{aGMM}}&{\textbf{OGLR}}&{\textbf{WNNM}}&{\textbf{RRC}}\\
\hline
\multirow{1}{*}{20}
&	28.45 	&	29.86 	&	\textcolor{blue}{\textbf{30.02}} 	&	29.96 	&	29.31 	&	29.89 	&	28.87 	&	29.89 	&	29.95 	&	29.49 	&	29.67 &	\textcolor{red}{\textbf{30.11}} 	&	29.98
\\
\hline
\multirow{1}{*}{30}
&	27.24 	&	27.93 	&	\textcolor{blue}{\textbf{28.05}} 	&	28.00 	&	27.58 	&	27.92 	&	27.00 	&	27.96 	&	27.98 	&	27.56 	&	27.84 &	\textcolor{red}{\textbf{28.17}} 	&	28.02
\\
\hline

\multirow{1}{*}{40}
&	26.26 	&	26.58 	&	\textcolor{blue}{\textbf{26.75}} 	&	26.71 	&	26.37 	&	26.58 	&	25.87 	&	26.73 	&	26.68 	&	26.36 	&	26.65 &	\textcolor{red}{\textbf{26.88}} 	&	26.73
\\
\hline

\multirow{1}{*}{50}
&	25.16 	&	25.71 	&	25.80 	&	25.77 	&	25.46 	&	25.65 	&	24.97 	&	\textcolor{blue}{\textbf{25.82}} 	&	25.73 	&	25.31 	&	25.69 &	\textcolor{red}{\textbf{25.96}} 	&	25.81
\\
\hline

\multirow{1}{*}{75}
&	23.36 	&	24.22 	&	24.18 	&	24.18 	&	23.80 	&	24.04 	&	23.37 	&	\textcolor{blue}{\textbf{24.30}} 	&	24.11 	&	23.50 	&	24.16 &	\textcolor{red}{\textbf{24.42}} 	&	24.28
\\
\hline

\multirow{1}{*}{100}
&	21.70 	&	23.21 	&	23.12 	&	23.15 	&	22.66 	&	23.00 	&	22.20 	&	\textcolor{blue}{\textbf{23.29}} 	&	23.02 	&	22.19 	&	22.85 &	\textcolor{red}{\textbf{23.37}} 	&	23.27
\\
\hline

\multirow{1}{*}{\textbf{Average}}
&	25.36 	&	26.25 	&	26.32 	&	26.30 	&	25.86 	&	26.18 	&	25.38 	&	26.33 	&	26.25 	&	25.74 	&26.14 &\textcolor{red}{\textbf{26.49}} 	&	\textcolor{blue}{\textbf{26.35}}
\\
\hline

\multicolumn{1}{|c|}{}&\multicolumn{13}{|c|}{\textbf{SSIM Comparison}}\\
\hline
{\textbf{{Noise}}}&{\textbf{{NNM}}}&{\textbf{BM3D}}& {\textbf{LSSC}}&{\textbf{EPLL}}&{\textbf{Plow}}&{\textbf{NCSR}}
&{\textbf{GID}}&{\textbf{PGPD}}&{\textbf{LINC}}&{\textbf{aGMM}}&{\textbf{OGLR}}&{\textbf{WNNM}}&{\textbf{RRC}}\\
\hline
\multirow{1}{*}{20}
&	0.7896 	&	0.8476 	&	\textcolor{blue}{\textbf{0.8520}} 	&	\textcolor{red}{\textbf{0.8528}} 	&	0.8320 	&	0.8449 	&	0.8111 	&	0.8393 	&	0.8485 	&	0.8329 	&	0.8448 &	0.8481 	&	0.8518
\\
\hline
\multirow{1}{*}{30}
&	0.7058 	&	0.7875 	&	\textcolor{red}{\textbf{0.7936}} 	&	0.7902 	&	0.7765 	&	0.7861 	&	0.7486 	&	0.7803 	&	0.7845 	&	0.7671 	&	0.7852 &	0.7905 	&	\textcolor{blue}{\textbf{0.7926}}
\\
\hline

\multirow{1}{*}{40}
&	0.6804 	&	0.7387 	&	\textcolor{blue}{\textbf{0.7459}} 	&	0.7387 	&	0.7272 	&	0.7337 	&	0.7030 	&	0.7359 	&	0.7339 	&	0.7195 	&	0.7444 &	0.7412 	&	\textcolor{red}{\textbf{0.7471}}
\\
\hline

\multirow{1}{*}{50}
&	0.6201 	&	0.7041 	&	0.7068 	&	0.6963 	&	0.6832 	&	0.6976 	&	0.6627 	&	0.6986 	&	0.6946 	&	0.6732 	&	0.7000 &\textcolor{blue}{\textbf{0.7089}} 	&	\textcolor{red}{\textbf{0.7108}}
\\
\hline

\multirow{1}{*}{75}
&	0.4952 	&	0.6337 	&	0.6364 	&	0.6160 	&	0.5871 	&	0.6320 	&	0.5882 	&	0.6330 	&	0.6252 	&	0.5842 	&  0.6234 &\textcolor{red}{\textbf{0.6446}} 	&	\textcolor{blue}{\textbf{0.6433}}
\\
\hline

\multirow{1}{*}{100}
&	0.4437 	&	0.5814 	&	0.5873 	&	0.5566 	&	0.5157 	&	0.5889 	&	0.5211 	&	0.5810 	&	0.5791 	&	0.5194 	&	0.5528 &\textcolor{blue}{\textbf{0.5949}} 	&	\textcolor{red}{\textbf{0.5986}}
\\
\hline

\multirow{1}{*}{\textbf{Average}}
&	0.6225 	&	0.7155 	&	0.7203 	&	0.7084 	&	0.6870 	&	0.7139 	&	0.6725 	&	0.7114 	&	0.7110 	&	0.6827 	&	0.7084 &\textcolor{blue}{\textbf{0.7214}} 	&	\textcolor{red}{\textbf{0.7240}}
\\
\hline

\end{tabular}}
\label{Tab:3}
\vspace{-2mm}
\end{table}

Now, we compare the proposed RRC with WNNM method \cite{5}, which is a well-known rank minimization method that delivers state-of-the-art denoising results. {The PSNR and SSIM results of WNNM method on 16 widely used test images are shown in the penultimate column of  Table~\ref{Tab:1} and Table~\ref{Tab:2}, respectively.} It can be seen that though the {average} PSNR results of RRC is slightly lower ($<$ 0.11dB) than WNNM, the SSIM results of the proposed RRC is higher than WNNM when the noise level $\boldsymbol\sigma_n> 30$. It is well-known that SSIM \cite{55} often considers the human visual system and leads to more accurate results. The visual comparison of RRC and WNNM with two exemplar images are shown in Fig.~\ref{fig:8} and Fig.~\ref{fig:9}, where we can observe that although PSNR results of the proposed RRC are lower than WNNM, more details are recovered by RRC than WNNM. This phenomenon has been explained in \cite{69}.

{We also compare the proposed RRC with a representative deep learning method: DN-CNN \cite{77}. On average, the proposed RRC cannot achieve better performance than DN-CNN. Nonetheless, if the images have many similar structures and features, such as images $\emph{Barabara}$, $\emph{Fence}$, $\emph{Foreman}$ and  $\emph{House}$, our proposed RRC can outperform DN-CNN, because the nonlocal redundancies are used \cite{25}. An additional advantage of the proposed RRC is that it enjoys the advantage of training free. Due to the page limits, we don't show the detailed comparison results here.}

\begin{figure}[!t]
\vspace{-4mm}
\begin{minipage}[b]{1\linewidth}
{\includegraphics[width= 1\textwidth]{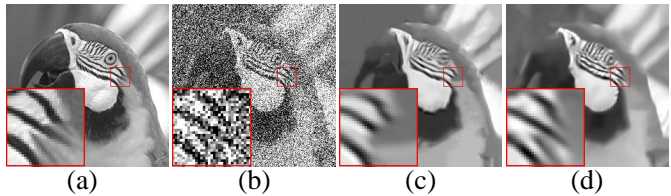}}
\end{minipage}
\vspace{-6mm}
\caption{ {Denoising results of $\emph{Parrot}$ with $\boldsymbol\sigma_n$ = 100. (a) Original {\em Parrot} image; (b) Noisy image;  (c) WNNM \cite{5} (PSNR = \textbf{24.85dB}, SSIM = 0.7533); (d) RRC (PSNR = {24.83dB}, SSIM = \textbf{0.7729}).}}
\label{fig:8}
\end{figure}

\begin{figure}[!t]
\vspace{-2mm}
\begin{minipage}[b]{1\linewidth}
{\includegraphics[width= 1\textwidth]{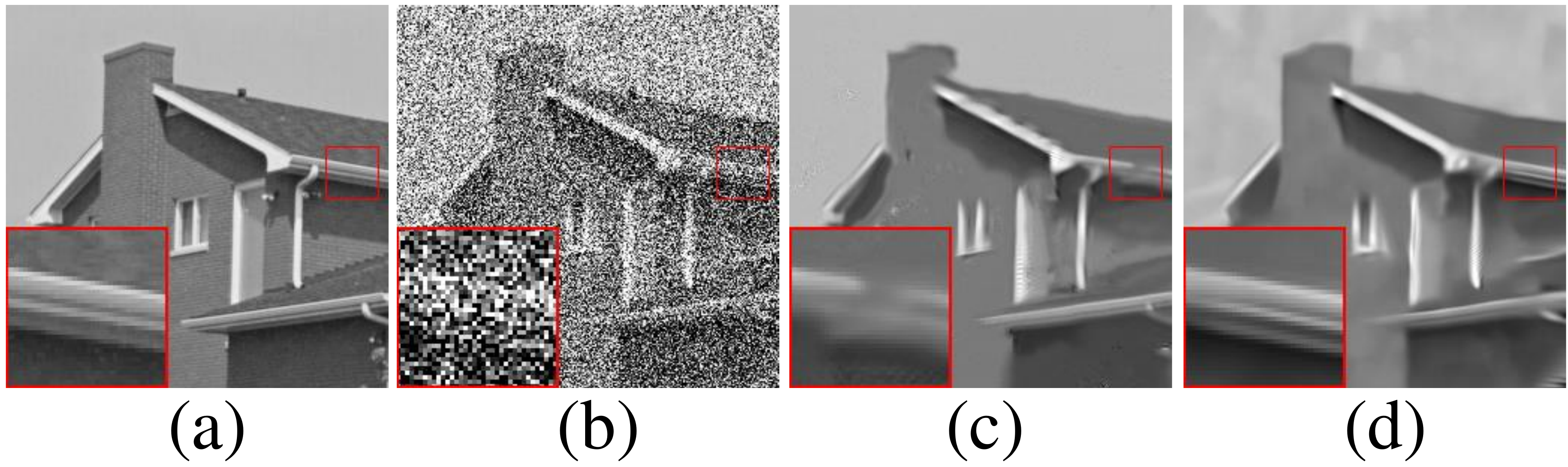}}
\end{minipage}
\vspace{-6mm}
\caption{ {Denoising results of $\emph{House}$ with $\boldsymbol\sigma_n$ = 100. (a) Original {\em House} image; (b) Noisy image;  (c) WNNM \cite{5} (PSNR = \textbf{26.79dB}, SSIM = 0.7531); (d) RRC (PSNR = {26.38dB}, SSIM = \textbf{0.7655}).}}
\label{fig:9}
\vspace{-4mm}
\end{figure}

{Furthermore, we comprehensively evaluate our proposed RRC method on 200 test images from the  Berkeley Segmentation dataset (BSD) \cite{62}\footnote{The denoising results of our proposed RRC method for BSD 200 dataset \cite{62} is available at: \url{https://drive.google.com/open?id=1qiuqPEAza1mF--9nR5nhXFtS8Z2kHUcb}.}. Table~\ref{Tab:3} lists the average PSNR and SSIM comparison results among thirteen competing methods at six noise levels ($\boldsymbol\sigma_n$ = 20, 30, 40, 50, 75 and 100). One can observe that our proposed RRC approach achieves the second best PSNR result, which is only falling behind WNNM by less than 0.14dB.  Nonetheless, our proposed RRC method obtains the best SSIM results on average.}

{Finally, we consider the proposed RRC model to real image denoising. Due to the fact that the noise level of real noisy images is unknown, and thus the noise level in the image is required to estimate through some noise estimation methods. In this paper, we adopt the scheme proposed in \cite{73} to estimate the noise level. Fig.~\ref{fig:17} shows the denoising results of two real images with more complex noise by our proposed RRC model. It can be seen that the proposed RRC method not only recovers visual pleasant results, but also preserves fine image details. Therefore, these results demonstrate the feasibility of our proposed RRC method for practical image denoising applications.}

\begin{figure}[!t]
\vspace{-4mm}
\begin{minipage}[b]{1\linewidth}
{\includegraphics[width= 1\textwidth]{}}
\end{minipage}
\vspace{-6mm}
\caption{{Denoising results on two real images by our proposed RRC method.}}
\label{fig:17}
\vspace{-4mm}
\end{figure}

It is well-known that image denoising is an ideal test bench to measure different statistical image models.  Obviously, these experiments have verified that the proposed RRC is a promising image model.

The proposed RRC is a traditional model-based optimization algorithm. Here we evaluate the average running time of the proposed RRC model for image denoising by using 16 widely used images with different noise levels. The proposed RRC requires about 5$\sim$6 minutes for an image on an Intel (R) Core (TM) i3-4150 with 3.56Hz CPU and 4GB memory PC under the Matlab 2015b environment. The running time of the proposed RRC for image denoising is faster than LSSC, NCSR and GID methods.

\begin{table*}[!htbp]
\vspace{-2mm}
\caption{{Average PSNR ($\textnormal{d}$B) (top entry in each cell) and SSIM (bottom entry) by JPEG, SA-DCT \cite{42}, PC-LRM \cite{63}, ANCE \cite{64}, DicTV \cite{46}, BM3D \cite{26}, WNNM \cite{5}, CONCOLOR \cite{44}, SSR-QC \cite{45}, LERaG \cite{65} and RRC  on the BSD 100 \cite{62}, Classic5 and LIVE1 \cite{75} Dataset.}}
\tiny
\centering
\resizebox{1\textwidth}{!}
{
\begin{tabular}{|c|c|c|c|c|c|c|c|c|c|c|c|}
\hline
\multicolumn{12}{|c|}{\textbf{BSD 100 Dataset } \cite{62}}\\
\hline
\multirow{2}{*}{\textbf{{QF}}}&\multirow{2}{*}{\textbf{{JPEG}}}
&{\textbf{{SA-}}}&{\textbf{{PC-}}}&\multirow{2}{*}{\textbf{{ANCE}}}&\multirow{2}{*}{\textbf{{DicTV}}}
&\multirow{2}{*}{\textbf{{BM3D}}}&\multirow{2}{*}{\textbf{{WNNM}}}&{\textbf{{CON}}}&{\textbf{{SSR-}}}
&\multirow{2}{*}{\textbf{{LERaG}}} &\multirow{2}{*}{\textbf{{RRC}}}\\
& & {\textbf{DCT}} &{\textbf{LRM}} & & & & &{\textbf{COLOR}} &{\textbf{QC}} &&  \\
\hline
\multirow{2}{*}{\textbf{10}}
&	27.59	&	28.48	&	28.49	&	28.51	&	28.17	&	28.46	&	28.48	&	28.54	&	28.45	&	28.57	&	\textbf{28.59}
\\
\cline{2-12}
&	0.7688	&	0.7896	&	0.7847	&	0.7916	&	0.7825	&	0.7924	&	0.7828	&	0.7907	&	0.7874	&	0.7939	&	\textbf{0.8015}
\\
\hline
\multirow{2}{*}{\textbf{20}}
&	29.97	&	30.73	&	30.78	&	30.80	&	30.63	&	30.75	&	30.79	&	30.86	&	30.78	&	30.86	&	\textbf{30.98}
\\
\cline{2-12}
&	0.8530	&	0.8650	&	0.8620	&	0.8663	&	0.8574	&	0.8681	&	0.8611	&	0.8647	&	0.8633	&	0.8661	&	\textbf{0.8745}
\\
\hline
\multirow{2}{*}{\textbf{30}}
&	31.37	&	32.07	&	32.16	&	32.18	&	31.82	&	32.09	&	32.17	&	32.29	&	32.18	&	32.27	&	\textbf{32.41}
\\
\cline{2-12}
&	0.8886	&	0.8984	&	0.8967	&	0.8990	&	0.8849	&	0.9004	&	0.8963	&	0.8983	&	0.8975	&	0.8985	&	\textbf{0.9057}
\\
\hline
\multirow{2}{*}{\textbf{40}}
&	32.36	&	33.01	&	33.12	&	33.16	&	32.74	&	33.04	&	33.13	&	33.31	&	33.15	&	33.26	&	\textbf{33.44}
\\
\cline{2-12}
&	0.9084	&	0.9168	&	0.9159	&	0.9174	&	0.8995	&	0.9182	&	0.9158	&	0.9172	&	0.9149	&	0.9168	&	\textbf{0.9231}
\\
\hline
\multirow{2}{*}{\textbf{50}}
&	33.17	&	33.79	&	33.91	&	33.97	&	33.39	&	33.82	&	33.93	&	34.16	&	33.98	&	34.08	&	\textbf{34.28}
\\
\cline{2-12}
&	0.9223	&	0.9296	&	0.9292	&	0.9303	&	0.9098	&	0.9307	&	0.9291	&	0.9304	&	0.9287	&	0.9296	&	\textbf{0.9352}
\\
\hline
\multirow{2}{*}{\textbf{60}}
&	34.02	&	34.60	&	34.73	&	34.83	&	34.12	&	34.63	&	34.75	&	35.02	&	34.83	&	34.92	&	\textbf{35.15}
\\
\cline{2-12}
&	0.9342	&	0.9406	&	0.9404	&	0.9414	&	0.9202	&	0.9414	&	0.9405	&	0.9418	&	0.9404	&	0.9407	&	\textbf{0.9455}
\\
\hline
\multirow{2}{*}{\textbf{70}}
&	35.19	&	35.73	&	35.87	&	36.02	&	35.05	&	35.76	&	35.88	&	36.23	&	36.00	&	36.06	&	\textbf{36.35}
\\
\cline{2-12}
&	0.9477	&	0.9529	&	0.9530	&	0.9541	&	0.9319	&	0.9535	&	0.9531	&	0.9546	&	0.9535	&	0.9532	&	\textbf{0.9572}
\\
\hline
\multirow{2}{*}{\textbf{80}}
&	36.97	&	37.46	&	37.60	&	37.81	&	36.29	&	37.50	&	37.62	&	38.05	&	37.75	&	37.76	&	\textbf{38.16}
\\
\cline{2-12}
&	0.9627	&	0.9667	&	0.9670	&	0.9680	&	0.9446	&	0.9671	&	0.9671	&	0.9695	&	0.9676	&	0.9669	&	\textbf{0.9702}
\\
\hline
\multirow{2}{*}{\textbf{90}}
&	40.61	&	41.00	&	41.15	&	41.45	&	38.11	&	41.07	&	41.16	&	41.69	&	41.27	&	41.11	&	\textbf{41.80}
\\
\cline{2-12}
&	0.9816	&	0.9836	&	0.9838	&	0.9848	&	0.9598	&	0.9838	&	0.9838	&	0.9850	&	0.9843	&	0.9836	&	\textbf{0.9857}
\\
\hline
\multirow{2}{*}{\textbf{Average}}
&	33.47 	&	34.10 	&	34.20 	&	34.30 	&	33.37 	&	34.12 	&	34.21 	&	34.46 	&	34.27 	&	34.32 	&	\textbf{34.57}
\\
\cline{2-12}
&	0.9075 	&	0.9159 	&	0.9147 	&	0.9170 	&	0.8990 	&	0.9173 	&	0.9144 	&	0.9169 	&	0.9153 	&	0.9166 	&	\textbf{0.9221}
\\
\hline

\multicolumn{12}{|c|}{\textbf{Classic5 Dataset} (image size: 256$\times$ 256)}\\
\hline
\multirow{2}{*}{\textbf{{QF}}}&\multirow{2}{*}{\textbf{{JPEG}}}
&{\textbf{{SA-}}}&{\textbf{{PC-}}}&\multirow{2}{*}{\textbf{{ANCE}}}&\multirow{2}{*}{\textbf{{DicTV}}}
&\multirow{2}{*}{\textbf{{BM3D}}}&\multirow{2}{*}{\textbf{{WNNM}}}&{\textbf{{CON}}}&{\textbf{{SSR-}}}
&\multirow{2}{*}{\textbf{{LERaG}}} &\multirow{2}{*}{\textbf{{RRC}}}\\
& & {\textbf{DCT}} &{\textbf{LRM}} & & & & &{\textbf{COLOR}} &{\textbf{QC}} &&  \\
\hline
\multirow{2}{*}{\textbf{10}}
&	27.57 	&	28.72 	&	28.79 	&	28.77 	&	28.45 	&	28.69 	&	28.78 	&	28.89 	&	28.83 	&	28.73 	&	\textbf{28.95}
\\
\cline{2-12}
&	0.7715 	&	0.8060 	&	0.8043 	&	0.8081 	&	0.8053 	&	0.8087 	&	0.8033 	&	0.8123 	&	0.8094 	&	0.8143 	&	\textbf{0.8202}
\\
\hline
\multirow{2}{*}{\textbf{20}}
&	29.90 	&	30.89 	&	30.98 	&	30.96 	&	30.73 	&	30.87 	&	30.98 	&	31.11 	&	31.07 	&	30.91 	&	\textbf{31.17}
\\
\cline{2-12}
&	0.8519 	&	0.8728 	&	0.8723 	&	0.8730 	&	0.8665 	&	0.8753 	&	0.8714 	&	0.8751 	&	0.8740 	&	0.8734 	&	\textbf{0.8818}
\\
\hline
\multirow{2}{*}{\textbf{30}}
&	31.21 	&	32.09 	&	32.21 	&	32.22 	&	31.92 	&	32.07 	&	32.21 	&	32.42 	&	32.34 	&	32.22 	&	\textbf{32.49}
\\
\cline{2-12}
&	0.8844 	&	0.9002 	&	0.9003 	&	0.9002 	&	0.8891 	&	0.9018 	&	0.8998 	&	0.9019 	&	0.9017 	&	0.9002 	&	\textbf{0.9071}
\\
\hline
\multirow{2}{*}{\textbf{40}}
&	32.14 	&	32.96 	&	33.09 	&	33.16 	&	32.77 	&	32.94 	&	33.10 	&	33.41 	&	33.30 	&	33.21 	&	\textbf{33.46}
\\
\cline{2-12}
&	0.9036 	&	0.9168 	&	0.9170 	&	0.9172 	&	0.9039 	&	0.9178 	&	0.9167 	&	0.9191 	&	0.9180 	&	0.9176 	&	\textbf{0.9231}
\\
\hline
\multirow{2}{*}{\textbf{50}}
&	32.93 	&	33.71 	&	33.86 	&	34.00 	&	33.50 	&	33.69 	&	33.86 	&	34.27 	&	34.13 	&	34.02 	&	\textbf{34.31}
\\
\cline{2-12}
&	0.9181 	&	0.9291 	&	0.9295 	&	0.9301 	&	0.9154 	&	0.9298 	&	0.9293 	&	0.9321 	&	0.9313 	&	0.9302 	&	\textbf{0.9353}
\\
\hline
\multirow{2}{*}{\textbf{60}}
&	33.77 	&	34.50 	&	34.66 	&	34.86 	&	34.25 	&	34.50 	&	34.67 	&	35.17 	&	34.97 	&	34.88 	&	\textbf{35.20}
\\
\cline{2-12}
&	0.9304 	&	0.9398 	&	0.9402 	&	0.9410 	&	0.9255 	&	0.9403 	&	0.9401 	&	0.9434 	&	0.9425 	&	0.9412 	&	\textbf{0.9459}
\\
\hline
\multirow{2}{*}{\textbf{70}}
&	34.95 	&	35.61 	&	35.77 	&	35.99 	&	35.17 	&	35.61 	&	35.78 	&	36.38 	&	36.12 	&	36.04 	&	\textbf{36.42}
\\
\cline{2-12}
&	0.9447 	&	0.9517 	&	0.9521 	&	0.9531 	&	0.9365 	&	0.9521 	&	0.9520 	&	0.9557 	&	0.9547 	&	0.9531 	&	\textbf{0.9575}
\\
\hline
\multirow{2}{*}{\textbf{80}}
&	36.70 	&	37.30 	&	37.47 	&	37.75 	&	36.43 	&	37.31 	&	37.47 	&	38.19 	&	37.81 	&	37.76 	&	\textbf{38.21}
\\
\cline{2-12}
&	0.9602 	&	0.9651 	&	0.9655 	&	0.9668 	&	0.9488 	&	0.9654 	&	0.9655 	&	0.9690 	&	0.9678 	&	0.9666 	&	\textbf{0.9700}
\\
\hline
\multirow{2}{*}{\textbf{90}}
&	40.33 	&	40.79 	&	40.95 	&	41.25 	&	38.23 	&	40.82 	&	40.96 	&	41.71 	&	41.19 	&	41.01 	&	\textbf{41.76}
\\
\cline{2-12}
&	0.9798 	&	0.9820 	&	0.9822 	&	0.9833 	&	0.9630 	&	0.9821 	&	0.9823 	&	0.9842 	&	0.9833 	&	0.9824 	&	\textbf{0.9848}
\\
\hline
\multirow{2}{*}{\textbf{Average}}
&	33.28 	&	34.06 	&	34.20 	&	34.33 	&	33.49 	&	34.06 	&	34.20 	&	34.62 	&	34.42 	&	34.31 	&	\textbf{34.66}
\\
\cline{2-12}
&	0.9050 	&	0.9182 	&	0.9182 	&	0.9192 	&	0.9060 	&	0.9193 	&	0.9178 	&	0.9214 	&	0.9203 	&	0.9199 	&	\textbf{0.9251}
\\
\hline

\multicolumn{12}{|c|}{\textbf{LIVE1 Dataset } \cite{75} (image size: 256$\times$ 256)}\\
\hline
\multirow{2}{*}{\textbf{{QF}}}&\multirow{2}{*}{\textbf{{JPEG}}}
&{\textbf{{SA-}}}&{\textbf{{PC-}}}&\multirow{2}{*}{\textbf{{ANCE}}}&\multirow{2}{*}{\textbf{{DicTV}}}
&\multirow{2}{*}{\textbf{{BM3D}}}&\multirow{2}{*}{\textbf{{WNNM}}}&{\textbf{{CON}}}&{\textbf{{SSR-}}}
&\multirow{2}{*}{\textbf{{LERaG}}} &\multirow{2}{*}{\textbf{{RRC}}}\\
& & {\textbf{DCT}} &{\textbf{LRM}} & & & & &{\textbf{COLOR}} &{\textbf{QC}} &&  \\
\hline
\multirow{2}{*}{\textbf{10}}
&	26.37 	&	27.23 	&	27.24 	&	27.24 	&	27.02 	&	27.16 	&	27.25 	&	27.33 	&	27.26 	&	\textbf{27.39} 	&	27.36
\\
\cline{2-12}
&	0.7611 	&	0.7869 	&	0.7835 	&	0.7879 	&	0.7872 	&	0.7877 	&	0.7824 	&	0.7888 	&	0.7859 	&	0.7916 	&	\textbf{0.7988}
\\
\hline
\multirow{2}{*}{\textbf{20}}
&	28.55 	&	29.24 	&	29.28 	&	29.29 	&	29.11 	&	29.21 	&	29.29 	&	29.41 	&	29.33 	&	\textbf{29.46} 	&	\textbf{29.46}
\\
\cline{2-12}
&	0.8423 	&	0.8571 	&	0.8550 	&	0.8585 	&	0.8500 	&	0.8591 	&	0.8542 	&	0.8588 	&	0.8576 	&	0.8603 	&	\textbf{0.8665}
\\
\hline
\multirow{2}{*}{\textbf{30}}
&	29.86 	&	30.48 	&	30.54 	&	30.57 	&	30.36 	&	30.45 	&	30.55 	&	30.74 	&	30.60 	&	30.77 	&	\textbf{30.78}
\\
\cline{2-12}
&	0.8791 	&	0.8903 	&	0.8892 	&	0.8913 	&	0.8793 	&	0.8917 	&	0.8888 	&	0.8922 	&	0.8913 	&	0.8924 	&	\textbf{0.8981}
\\
\hline
\multirow{2}{*}{\textbf{40}}
&	30.80 	&	31.37 	&	31.45 	&	31.51 	&	31.23 	&	31.35 	&	31.46 	&	31.71 	&	31.57 	&	31.73 	&	\textbf{31.76}
\\
\cline{2-12}
&	0.8998 	&	0.9093 	&	0.9089 	&	0.9102 	&	0.8952 	&	0.9103 	&	0.9087 	&	0.9115 	&	0.9099 	&	0.9109 	&	\textbf{0.9160}
\\
\hline
\multirow{2}{*}{\textbf{50}}
&	31.60 	&	32.14 	&	32.23 	&	32.31 	&	31.97 	&	32.12 	&	32.24 	&	32.54 	&	32.36 	&	32.49 	&	\textbf{32.58}
\\
\cline{2-12}
&	0.9144 	&	0.9227 	&	0.9227 	&	0.9235 	&	0.9068 	&	0.9234 	&	0.9226 	&	0.9251 	&	0.9237 	&	0.9234 	&	\textbf{0.9285}
\\
\hline
\multirow{2}{*}{\textbf{60}}
&	32.44 	&	32.96 	&	33.05 	&	33.15 	&	32.75 	&	32.94 	&	33.06 	&	33.41 	&	33.20 	&	33.34 	&	\textbf{33.45}
\\
\cline{2-12}
&	0.9270 	&	0.9345 	&	0.9347 	&	0.9352 	&	0.9174 	&	0.9349 	&	0.9347 	&	0.9370 	&	0.9356 	&	0.9350 	&	\textbf{0.9396}
\\
\hline
\multirow{2}{*}{\textbf{70}}
&	33.64 	&	34.13 	&	34.23 	&	34.36 	&	33.80 	&	34.11 	&	34.24 	&	34.65 	&	34.37 	&	34.53 	&	\textbf{34.69}
\\
\cline{2-12}
&	0.9417 	&	0.9480 	&	0.9484 	&	0.9489 	&	0.9301 	&	0.9482 	&	0.9484 	&	0.9506 	&	0.9493 	&	0.9754 	&	\textbf{0.9524}
\\
\hline
\multirow{2}{*}{\textbf{80}}
&	35.51 	&	35.96 	&	36.07 	&	36.22 	&	35.30 	&	35.95 	&	36.07 	&	36.55 	&	36.21 	&	36.33 	&	\textbf{36.60}
\\
\cline{2-12}
&	0.9585 	&	0.9632 	&	0.9637 	&	0.9643 	&	0.9442 	&	0.9633 	&	0.9637 	&	0.9657 	&	0.9645 	&	0.9634 	&	\textbf{0.9667}
\\
\hline
\multirow{2}{*}{\textbf{90}}
&	39.43 	&	39.79 	&	39.92 	&	40.02 	&	37.47 	&	39.81 	&	39.91 	&	40.47 	&	40.04 	&	39.86 	&	\textbf{40.53}
\\
\cline{2-12}
&	0.9797 	&	0.9820 	&	0.9823 	&	0.9829 	&	0.9607 	&	0.9820 	&	0.9823 	&	0.9837 	&	0.9829 	&	0.9816 	&	\textbf{0.9841}
\\
\hline
\multirow{2}{*}{\textbf{Average}}
&	32.02 	&	32.59 	&	32.67 	&	32.74 	&	32.11 	&	32.57 	&	32.67 	&	32.98 	&	32.77 	&	32.88 	&	\textbf{33.02}
\\
\cline{2-12}
&	0.9004 	&	0.9104 	&	0.9098 	&	0.9114 	&	0.8968 	&	0.9112 	&	0.9095 	&	0.9126 	&	0.9112 	&	0.9149 	&	\textbf{0.9167}
\\
\hline

\end{tabular}}
\label{Tab:4}
\vspace{-2mm}
\end{table*}

\subsection{Image Compression Artifacts Reduction}

{In this subsection, we verify the proposed algorithm to restore JPEG-compressed images on three widely used dataset, including BSD 100 \cite{62}\footnote{The compression artifacts reduction results of our proposed RRC method for BSD 100 dataset \cite{62} is available at: \url{https://drive.google.com/open?id=1EphUBFVnEHO0Xx35y9-Cc90poHTHB0r2}.}, Classic5 and LIVE1 \cite{75}.} We compare our algorithm with advanced image deblocking methods, including SA-DCT \cite{42}, PC-LRM \cite{63}, ANCE \cite{64}, DicTV \cite{46}, CONCOLOR \cite{44}, SSR-QC \cite{45}, LERaG \cite{65} and two representative image denoising methods, \ie, BM3D \cite{26} and WNNM \cite{5}. Note that PC-LRM, WNNM, COCOLOR, SSR-QC and LERaG also exploited low-rank priors and achieved the state-of-the-art image deblocking or denoising results.  The parameters of our proposed algorithm for image compression artifacts reduction are as follows. The size of each patch $\sqrt{d} \times \sqrt{d}$ is  {set to} $7 \times 7$. The searching window for similar patches is {set to} $L=25$. The maximum iteration number is set to $T=20$; $h=40$, $\epsilon=0.2$, $m=60$, $\rho=5$ and $w=0.2$. The parameters ($\eta$, $c$, $\tau$) are set to (0.3, 0.9, 0.0007), (0.2, 1.3, 0.0005), (0.2, 1.3, 0.0003) and (0.2, 1.5, 0.0003) for $QF \leq 10$, $10<QF \leq 20$, $20<QF \leq 30$ and $QF > 30$, respectively.

\begin{figure}[!t]
\centering
\vspace{-2mm}
\begin{minipage}[b]{1\linewidth}
{\includegraphics[width= 1\textwidth]{}}
\end{minipage}
\vspace{-6mm}
\caption{{Visual comparison results of image $\emph{85048}$ at  QF = 10. (a) Original image; (b) JPEG compressed image (PSNR = 28.00dB, SSIM = 0.7551); (c) SA-DCT \cite{42} (PSNR = 28.97dB, SSIM = 0.7789); (d) PC-LRM \cite{63} (PSNR = 28.97dB, SSIM = 0.7739); (e) ANCE \cite{64} (PSNR = 29.05dB, SSIM = 0.7852);  (f) WNNM \cite{5} (PSNR = 28.90dB, SSIM = 0.7684); (g) CONCOLOR \cite{44} (PSNR = 28.97dB, SSIM = 0.7833); (h) SSR-QC \cite{45} (PSNR = 28.92dB, SSIM = 0.7810);  (i) LERaG \cite{65} (PSNR = 28.97dB, SSIM = 0.7865); (j) RRC (PSNR = \textbf{29.14dB}, SSIM = \textbf{0.7997}).}}
\label{fig:10}
\vspace{-2mm}
\end{figure}

\begin{figure}[!t]	
	\centering
\begin{minipage}[b]{1\linewidth}
{\includegraphics[width= 1\textwidth]{}}
\end{minipage}
	\vspace{-6mm}
	\caption{{Visual comparison results of image $\emph{24077}$ at  QF = 10. (a) Original image; (b) JPEG compressed image (PSNR = 27.83dB, SSIM = 0.8897); (c) SA-DCT \cite{42} (PSNR = 28.99dB, SSIM = 0.9029); (d) PC-LRM \cite{63} (PSNR = 29.26dB, SSIM = 0.9064); (e) ANCE \cite{64} (PSNR = 28.98dB, SSIM = 0.9022);  (f) WNNM \cite{5} (PSNR = 29.30dB, SSIM = 0.9068); (g) CONCOLOR \cite{44} (PSNR = 29.23dB, SSIM = 0.9100); (h) SSR-QC \cite{45} (PSNR = 29.17dB, SSIM = 0.9070);  (i) LERaG \cite{65} (PSNR = 29.15dB, SSIM = 0.9060); (j) RRC (PSNR = \textbf{29.43dB}, SSIM = \textbf{0.9119}).}}
	\label{fig:11}
\vspace{-4mm}
\end{figure}

We comprehensively evaluate all competing methods on {these three dataset} at each QF.  Nine JPEG qualities are evaluated, \ie, QF = \{10, 20, 30, 40, 50, 60, 70, 80 and 90\}. The average PSNR and SSIM comparisons are presented in Table~\ref{Tab:4}, with the best results highlighted in bold. It can be seen  that our proposed RRC  consistently outperforms all competing methods on different JPEG QFs.  {The only exception is the LIVE1 dataset at QF = 10 and 20 for which  LERaG  is slightly higher than our proposed method.}   {Based on these three dataset, the average gains of our RRC  over JPEG, SA-DCT, PC-LRM, ANCE, DicTV, BM3D, WNNM, CONCOLOR, SSR-QC and LERaG methods are  \{1.16dB, 0.50dB, 0.40dB, 0.30dB, 1.09dB, 0.50dB, 0.39dB, 0.07dB, 0.27dB and 0.25dB\}  in PSNR and \{0.0170, 0.0065, 0.0071, 0.0054, 0.0207, 0.0054, 0.0074, 0.0043, 0.0057 and 0.0042\} in SSIM, respectively.}

The visual comparisons of images $\emph{85048}$ and $\emph{24077}$  {on BSD 100 dataset} with QF = 10 are shown in Fig.~\ref{fig:10} and Fig.~\ref{fig:11}, respectively, where we compare five typical low-rank based methods (\ie, PC-LRM, WNNM, COCOLOR, SSR-QC and LERaG) and two well-known image compression reduction methods (\ie, SA-DCT and ANCE) with the proposed RRC method. One can observe that the blocking artifacts are obvious in the image decoded directly by the standard JPEG. For image $\emph{85048}$, it can be observed that some blocking artifacts are still visible in SA-DCT, ANCE and LERaG methods, while PC-LRM, WNNM, COCOLOR and SSR-QC methods generate over-smooth effect. For image $\emph{24077}$, though SA-DCT, PC-LRM, ANCE, WNNM, CONCOLOR, SSR-QC and LERaG methods can suppress the blocking artifacts effectively, they often over-smooth the image. Our proposed RRC method not only removes blocking or ringing artifacts across the image, but also preserves large-scale sharp edges and small-scale fine image details. Obviously, compared with these typical low-rank based methods, the proposed RRC method can achieve higher performance. Therefore, these results demonstrate the effectiveness and superiority of our proposed RRC model.

\begin{figure}[!t]	
\vspace{-2mm}
	\centering
\begin{minipage}[b]{1\linewidth}
{\includegraphics[width= 1\textwidth]{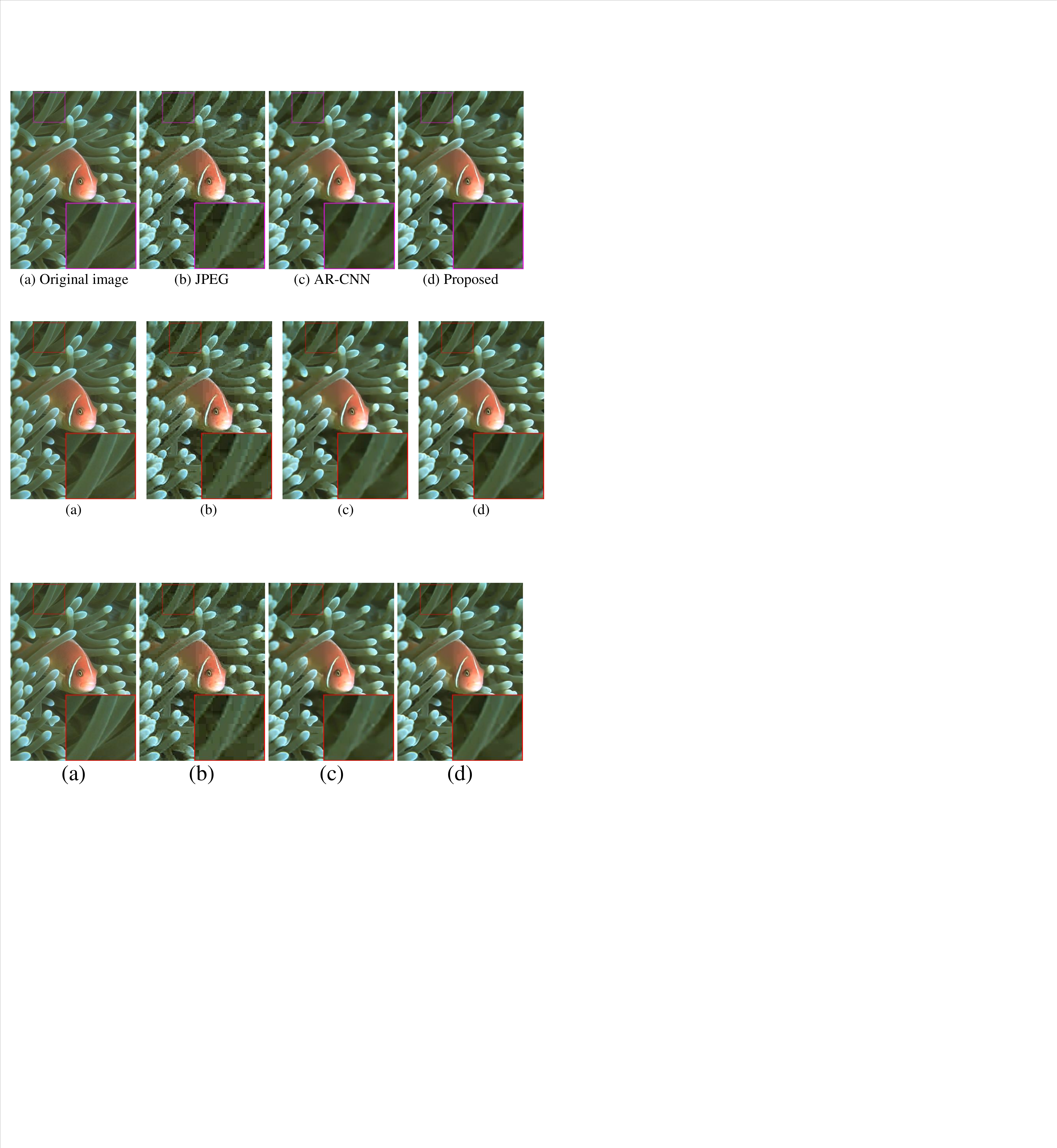}}
\end{minipage}
	\vspace{-6mm}
	\caption{Visual comparison results of image $\emph{210088}$ at  QF = 10. (a) Original image; (b) JPEG compressed image (PSNR = 30.33dB, SSIM = 0.8342); (c) AR-CNN \cite{66} (PSNR = 33.21dB, SSIM = 0.9147); (d) RRC (PSNR = \textbf{33.37dB}, SSIM = \textbf{0.9201}).}
	\label{fig:12}
\end{figure}

\begin{figure}[!t]	
	\centering
\vspace{-2mm}
\begin{minipage}[b]{1\linewidth}
{\includegraphics[width= 1\textwidth]{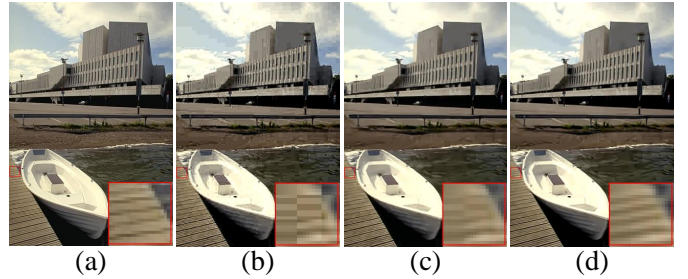}}
\end{minipage}
	\vspace{-6mm}
	\caption{Visual comparison results of image $\emph{78004}$ at  QF = 10. (a) Original image; (b) JPEG compressed image (PSNR = 27.72dB, SSIM = 0.7854); (c) AR-CNN \cite{66} (PSNR = 28.86dB, SSIM = 0.8206); (d) RRC (PSNR = \textbf{29.57dB}, SSIM = \textbf{0.8361}).}
	\label{fig:13}
\vspace{-2mm}
\end{figure}

\begin{table}[!t]
\caption{ {The average PSNR ($\textnormal{d}$B) and SSIM  of RRC and AR-CNN \cite{66} for compression artifacts reduction on the BSD 100 dataset \cite{62}.}}
\centering
\resizebox{0.48\textwidth}{!}{
\begin{tabular}{|c|c|c|c|c|c|c|c|}
\hline
\multicolumn{1}{|c|}{}&\multicolumn{5}{|c|}{\textbf{PSNR}}\\
\hline
{\textbf{{Methods}}}&{\textbf{{$QF=10$}}}&{\textbf{{$QF=20$}}}
& {\textbf{{$QF=30$}}}&{\textbf{{$QF=40$}}}&{\textbf{{Average}}}\\
\hline
\multirow{1}{*}{\textbf{AR-CNN}}
& 28.55 & 30.75 & 32.14  & 33.08 & 31.13 \\
\hline
\multirow{1}{*}{\textbf{RRC}}
& \textbf{28.59} & \textbf{30.98} & \textbf{32.41}  & \textbf{33.44} & \textbf{31.36} \\
\hline
\multicolumn{1}{|c|}{}&\multicolumn{5}{|c|}{\textbf{SSIM}}\\
\hline
{\textbf{{Methods}}}&{\textbf{{$QF=10$}}}&{\textbf{{$QF=20$}}}
& {\textbf{{$QF=30$}}}&{\textbf{{$QF=40$}}}&{\textbf{{Average}}}\\
\hline
\multirow{1}{*}{\textbf{AR-CNN}}
& 0.7952 & 0.8658 & 0.8997  & 0.9166 & 0.8693 \\
\hline
\multirow{1}{*}{\textbf{RRC}}
& \textbf{0.8015} & \textbf{0.8745} & \textbf{0.9057}  & \textbf{0.9231} & \textbf{0.8762} \\
\hline
\end{tabular}}
\label{Tab:5}
\vspace{-4mm}
\end{table}

Recently, deep learning based techniques for image compression artifacts reduction have attracted significant attentions due to their impressive performance. We also compare the proposed RRC with the AR-CNN \cite{66} method, which is deemed as the benchmark of CNN-based compression artifacts reduction algorithms. As shown in Table~\ref{Tab:5}, the proposed algorithm outperforms the AR-CNN method across all cases on the BSD 100 dataset. The average PSNR and SSIM gain is up to 0.23dB and 0.0069, respectively. The visual comparisons of image $\emph{210088}$ and $\emph{78004}$ {on BSD 100 dataset} with QF = 10 are shown in Fig.~\ref{fig:12} and Fig.~\ref{fig:13}, respectively. It can be observed that AR-CNN still suffers from undesirable artifacts and over-smooth effect. The proposed RRC not only preserves sharp edges and fine details, but also eliminates the blocking artifacts more effective than AR-CNN  method. These results further verify the superiority of the proposed RRC model.

\begin{table}[!t]
	\caption{The average PSNR ($\textnormal{d}$B) comparisons with different $w$ on the BSD 100 dataset \cite{62}.}
\tiny
	\centering
	\resizebox{.48\textwidth}{!}
	{
		\begin{tabular}{|c|c|c|c|c|c|c|c|c|}
			\hline
			\multirow{1}{*}{\textbf{{QF}}}&\multirow{1}{*}{\textbf{{0.1}}} &\multirow{1}{*}{\textbf{{0.2}}}
             &\multirow{1}{*}{\textbf{{0.3}}} &\multirow{1}{*}{\textbf{{0.4}}}  &\multirow{1}{*}{\textbf{{0.5}}}\\
			\hline
			\multirow{1}{*}{\textbf{10}}
			&	28.37	&	\textbf{28.59}	&	28.52	&	28.25	&	27.89	\\
\hline
			\multirow{1}{*}{\textbf{20}}
			&	30.75	&	\textbf{30.98}	&	30.91	&	30.64	&	30.29	\\
\hline
			\multirow{1}{*}{\textbf{Average}}
			&	29.56	&	\textbf{29.79}	&	29.72	&	29.45	&	29.09	\\
\hline
	\end{tabular}}
	\label{Tab:6}
\vspace{-4mm}
\end{table}

Next, we will discuss how to select the best quantization constraint parameter $w$ for the performance of the proposed RRC algorithm. In this paper, we adopt the narrow quantization constraint (NQC) \cite{47} by setting $w$ = 0.2 rather than  traditional $w$ = 0.5. Specifically, we report the performance comparisons with different $w$ for BSD 100 dataset in Table~\ref{Tab:6}. It is quite clear that, the setting of $w$ = 0.2 achieves better results than that of $w$ = 0.5 and obtains about 0.70dB improvement on average, which verifies the effectiveness of NQC.

We further report computational time of the proposed RRC algorithm for image compression artifacts reduction. Our proposed algorithm is implemented in MATLAB 2015b environment and it requires about 20 minutes (on a PC with Intel i5-4590 3.30GHz CPU and 8GB memory) to deblock an image (size: 320$\times$ 480), which is similar to CONCOLOR method. It is well-known that a common drawback of low-rank methods is high complexity. We are working on using GPU to accelerate the proposed algorithm since the SVD of each group can be performed in parallel.


\section{Conclusion}
 \label{sec:7}
We have proposed a novel rank minimization model, dubbed rank residual constraint (RRC), to reinterpret the rank minimization problem from the perspective of matrix approximation. Different from existing low-rank based methods, which estimated the underlying low-rank matrix directly from the corrupted observations, we progressively approximate the underlying low-rank matrix by minimizing the rank residual. Based on the group-based sparse representation model, an analytical investigation on the feasibility of the RRC model has been provided. We have developed the high performance low-rank matrix estimation based image restoration algorithms via minimizing the rank residual. Specifically, by exploiting the image NSS prior, we have applied the proposed RRC model to image restoration tasks, including image denoising and image compression artifacts reduction. Experimental results have demonstrated that the proposed RRC not only leads to visible quantitative improvements over many state-of-the-art methods, but also preserves the image local structures and suppresses undesirable artifacts.

\section*{Acknowledge}

The authors would like to appreciate the associate editor for coordinating the review of the manuscript, and appreciate the anonymous reviewers for their constructive suggestions to improve the manuscript. The authors would like to appreciate Prof. Jian Zhang  of Peking University, Prof. Tao Yue and Prof. Zhan Ma of Nanjing University  for their helps and appreciate the authors of \cite{5,26,27,56,57,49,58,29,59,60,42,63,64,46,44,45,65,77} for providing their source codes or experimental results.

\appendices

\section{Proof of the Theorem~\ref{theorem:2}}
\label{theorem2}
\begin{proof}

Supposing that the SVD of $\textbf{\emph{X}}_i, \textbf{\emph{Y}}_i, \textbf{\emph{X}}_i'$ are $\textbf{\emph{X}}_i= \textbf{\emph{U}}_i\boldsymbol\Sigma_i\textbf{\emph{V}}_i^T$, $\textbf{\emph{Y}}_i= \textbf{\emph{P}}_i\boldsymbol\Delta_i\textbf{\emph{S}}_i^T$ and $\textbf{\emph{X}}_i'= \textbf{\emph{R}}_i\boldsymbol\Lambda_i\textbf{\emph{Q}}_i^T$, respectively, where $\boldsymbol\Sigma_i$, $\boldsymbol\Delta_i$ and $\boldsymbol\Lambda_i$ are ordered singular value matrices with the same order.
Recalling  Eq.~\eqref{eq:6} and from Theorem~\ref{theorem:1}, we have
\begin{equation}
\begin{aligned}
&\left\|\textbf{\emph{Y}}_i-\textbf{\emph{X}}_i\right\|_F^2 = \|\textbf{\emph{P}}_i\boldsymbol\Delta_i\textbf{\emph{S}}_i^T-\textbf{\emph{U}}_i\boldsymbol\Sigma_i\textbf{\emph{V}}_i^T\|_F^2\\
&= {\rm Tr}(\boldsymbol\Delta_i {\boldsymbol\Delta_i}^T)+{\rm Tr}(\boldsymbol\Sigma_i {\boldsymbol\Sigma_i}^T)
-2{\rm Tr}({\textbf{\emph{X}}}_i^T {\textbf{\emph{Y}}}_i)\\
& \geq {\rm Tr}(\boldsymbol\Delta_i {\boldsymbol\Delta}_i^T)+{\rm Tr}(\boldsymbol\Sigma_i {\boldsymbol\Sigma}_i^T)
-2{\rm Tr}({\boldsymbol\Sigma}_i^T \boldsymbol\Delta_i)\\
&=\left\|\boldsymbol\Delta_i-\boldsymbol\Sigma_i\right\|_F^2,\\
\end{aligned}
\label{eq:46}
\end{equation}
where the equality holds only when $\textbf{\emph{P}}_i=\textbf{\emph{U}}_i$ and $\textbf{\emph{S}}_i=\textbf{\emph{V}}_i$. Therefore, Eq.~\eqref{eq:6} is minimized when $\textbf{\emph{P}}_i=\textbf{\emph{U}}_i$ and $\textbf{\emph{S}}_i=\textbf{\emph{V}}_i$, and the optimal solution of ${\boldsymbol\Sigma}_i$ is obtained by solving
\begin{equation}
\begin{aligned}
&\min_{\boldsymbol\Sigma_i\geq0}\frac{1}{2} \left\|\boldsymbol\Delta_i-\boldsymbol\Sigma_i\right\|_F^2 +\lambda \sum_{k=1}^j|\gamma_{i,k}|\\
&= \min\limits_{\sigma_{i,k}\geq0}
\sum_{k=1}^j\left(\frac{1}{2}(\delta_{i,k}-\sigma_{i,k})^2+\lambda |\sigma_{i,k}-\psi_{i,k}| \right),
\label{eq:47}
\end{aligned}
\end{equation}
where $\sigma_{i,k}$, $\delta_{i,k}$ and $\psi_{i,k}$ are the $k^{th}$ singular value of  $\textbf{\emph{X}}_i$, $\textbf{\emph{Y}}_i$ and $\textbf{\emph{X}}_i'$, respectively.

\end{proof}

\section{Proof of the Lemma~\ref{lemma:2}}
\label{lemma2}
\begin{proof}
From ${\textbf{\emph{D}}}_i$ in Eq.~\eqref{eq:43} and the unitary property of ${\textbf{\emph{U}}}_i$ and ${\textbf{\emph{V}}}_i$, 
\begin{equation}
\begin{aligned}
&\left\|{\textbf{\emph{Y}}}_i-{\textbf{\emph{X}}}_i\|_F^2=\|{\textbf{\emph{D}}}_i({\textbf{\emph{K}}}_i-{\textbf{\emph{A}}}_i)\right\|_F^2
=\left\|{\textbf{\emph{U}}}_i{\rm diag}({\textbf{\emph{K}}}_i-{\textbf{\emph{A}}}_i){\textbf{\emph{V}}}_i^T\right\|_F^2\\
&= {\rm Tr}({\textbf{\emph{U}}}_i{\rm diag}({\textbf{\emph{K}}}_i-{\textbf{\emph{A}}}_i){\textbf{\emph{V}}}_i^T
{\textbf{\emph{V}}}_{i}{\rm diag}({\textbf{\emph{K}}}_i-{\textbf{\emph{A}}}_i){\textbf{\emph{U}}}_{i}^T)\\
&= {\rm Tr}({\textbf{\emph{U}}}_i{\rm diag}({\textbf{\emph{K}}}_i-{\textbf{\emph{A}}}_i)
{\rm diag}({\textbf{\emph{K}}}_i-{\textbf{\emph{A}}}_i){\textbf{\emph{U}}}_{i}^T)\\
&={\rm Tr}({\rm diag}({\textbf{\emph{K}}}_i-{\textbf{\emph{A}}}_i)
{\textbf{\emph{U}}}_i^T{\textbf{\emph{U}}}_{i}{\rm diag}({\textbf{\emph{K}}}_i-{\textbf{\emph{A}}}_i))\\
&={\rm Tr}({\rm diag}({\textbf{\emph{K}}}_i-{\textbf{\emph{A}}}_i)
{\rm diag}({\textbf{\emph{K}}}_i-{\textbf{\emph{A}}}_i))\\
&=\left\|{\textbf{\emph{K}}}_i-{\textbf{\emph{A}}}_i\right\|_F^2.
\label{eq:48}
\end{aligned}
\end{equation}

\end{proof}

\section{Proof of the Theorem~\ref{theorem:3}}
\label{theorem3}
\begin{proof}

On the basis of Lemma~\ref{lemma:2}, we have
\begin{equation}
\begin{aligned}
\hat{\textbf{\emph{A}}}_i& = \arg\min\limits_{{\textbf{\emph{A}}}_i}	\left(\frac{1}{2}\left\|{\textbf{\emph{Y}}}_i-{\textbf{\emph{D}}}_i{\textbf{\emph{A}}}_i\right\|_F^2
+\lambda\left\|{\textbf{\emph{A}}}_i-{\textbf{\emph{B}}}_i\right\|_1\right)\\
&={\arg\min\limits_{{\textbf{\emph{A}}}_i}\left(\frac{1}{2}\left\|{\textbf{\emph{K}}}_i-{\textbf{\emph{A}}}_i\right\|_F^2
+\lambda\left\|{\textbf{\emph{A}}}_i-{\textbf{\emph{B}}}_i\right\|_1\right)}\\
&={\arg\min\limits_{\boldsymbol\alpha_i}\left(\frac{1}{2}\left\|{\boldsymbol\kappa}_i-{\boldsymbol\alpha}_i\right\|_2^2
+\lambda\left\|{\boldsymbol\alpha}_i-{\boldsymbol\beta}_i\right\|_1\right),}
\end{aligned}
\label{eq:49}
\end{equation} 
where ${\textbf{\emph{X}}}_i={{\textbf{\emph{D}}}_i{\textbf{\emph{A}}}_i}$ and ${\textbf{\emph{Y}}}_i={{\textbf{\emph{D}}}_i{\textbf{\emph{K}}}_i}$. ${{{{\boldsymbol\alpha}}}_i}$, ${{{{\boldsymbol\beta}}}_i}$ and ${{{{\boldsymbol\kappa}}}_i}$ denote the vectorization of the matrix ${{{\textbf{\emph{A}}}}_i}$, ${{{\textbf{\emph{B}}}}_i}$ and ${{{\textbf{\emph{K}}}}_i}$, respectively.

Following this, based on Lemma~\ref{lemma:1}, we have
\begin{equation}
{{\boldsymbol\alpha}}_i ={\rm soft}({{\boldsymbol\kappa}}_i-{{\boldsymbol\beta}}_i,\lambda)+{{\boldsymbol\beta}}_i,
\label{eq:50}
\end{equation}

Then the closed-form solution of the $k$-th element ${{\boldsymbol\alpha}}_{i, k}$ of ${{\boldsymbol\alpha}}_i$ in Eq.~\eqref{eq:50} is solved by the following problem,

\begin{equation}
{{\boldsymbol\alpha}}_{i, k} ={\rm soft}({{\boldsymbol\kappa}}_{i, k}-{{\boldsymbol\beta}}_{i, k},\lambda)+{{\boldsymbol\beta}}_{i, k}.
\label{eq:50.1}
\end{equation}

Then, based on the adaptive dictionary $\textbf{\emph{D}}_i$ in Eq.~\eqref{eq:43} and Theorem~\ref{theorem:2}, we have proved that Eq.~\eqref{eq:50.1} is equivalent to Eq.~\eqref{eq:11}. Note that we assume the PCA space of ${\textbf{\emph{X}}}_i$ and $\textbf{\emph{X}}_i'$ are equivalent here. We have thus that RRC is equivalent to GSRC, \ie,
\begin{equation}
\begin{aligned}
\hat{\textbf{\emph{X}}}_i&= \arg\min_{\textbf{\emph{X}}_i} \left(\frac{1}{2}\left\|{\textbf{\emph{Y}}}_i-{\textbf{\emph{X}}}_i\right\|_F^2+\lambda\left\|\gammav_i\right\|_1\right) \\
&\ \ \ \ \ \ \ \ \ \ \ \ \ \ \  \ \ \ \ \ \ \ \  \ \ \ \  \ \ \Updownarrow \\
\hat{\textbf{\emph{A}}}_i&=\arg\min\limits_{{\textbf{\emph{A}}}_i}		\left(\frac{1}{2}\left\|{\textbf{\emph{Y}}}_i-{\textbf{\emph{D}}}_i{\textbf{\emph{A}}}_i\right\|_F^2
+\lambda\left\|{\textbf{\emph{A}}}_i-{\textbf{\emph{B}}}_i\right\|_1\right).
\label{eq:52}
\end{aligned}
\end{equation}
\end{proof}

{\footnotesize
\bibliographystyle{IEEEtran}
\bibliography{rrc_ref}
}
\end{document}